\title{Minimax Optimality (Probably) Doesn't Imply Distribution Learning for GANs\footnote{This work was done while the first, second, and fourth authors were visiting the Simons Institute for the Theory of Computing.}}
\author{
    Sitan Chen\thanks{Email: \texttt{sitanc@berkeley.edu} Supported by NSF Award 2103300.} \\
    UC Berkeley
        \and 
    Jerry Li\thanks{Email: \texttt{jerrl@microsoft.com}} \\
    Microsoft Research
        \and
    Yuanzhi Li\thanks{Email: \texttt{yuanzhil@andrew.cmu.edu}} \\
    CMU
        \and
    Raghu Meka\thanks{Email: \texttt{raghum@cs.ucla.edu} Supported by NSF CAREER Award CCF-1553605.}\\
    UCLA
}
\newcommand{\calC}{\mathcal{C}}
\newcommand{\negl}{\mathsf{negl}}
\newcommand{\N}{\mathbb{N}}
\newcommand*{\horzbar}{\rule[.5ex]{2.5ex}{0.5pt}}
\newcommand{\vD}{\vec{D}}
\newcommand{\Ppoly}{\mathsf{P}/\mathsf{poly}}
\newcommand{\Lip}{\mathrm{Lip}}
\newcommand{\outdim}{d}
\newcommand{\temp}{s}
\DeclareMathOperator{\sgn}{sgn}
\begin{document}

\maketitle

\begin{abstract}
    Arguably the most fundamental question in the theory of generative adversarial networks (GANs) is to understand to what extent GANs can actually learn the underlying distribution.
    Theoretical and empirical evidence (see e.g.~\cite{arora2018gans}) suggests local optimality of the empirical training objective is insufficient. Yet, it does not rule out the possibility that achieving a true population minimax optimal solution might imply distribution learning.
    
    In this paper, we show that standard cryptographic assumptions imply that this stronger condition is still insufficient. Namely, we show that if local pseudorandom generators (PRGs) exist, then for a large family of natural continuous target distributions, there are ReLU network generators of constant depth and polynomial size which take Gaussian random seeds so that (i) the output is far in Wasserstein distance from the target distribution,
    but (ii) no polynomially large Lipschitz discriminator ReLU network can detect this.
    This implies that even achieving a population minimax optimal solution to the Wasserstein GAN objective is likely insufficient for distribution learning in the usual statistical sense.
    Our techniques reveal a deep connection between GANs and PRGs, which we believe will lead to further insights into the computational landscape of GANs.
\end{abstract}

\tableofcontents

\section{Introduction}

%For a distribution $\calD$ (the ``target distribution''), when is it that a generative adversarial network (GAN) trained with samples from $\calD$ will actually output samples from a distribution that is close to $\calD$?
When will a generative adversarial network (GAN) trained with samples from a distribution $\calD$ actually output samples from a distribution that is close to $\calD$?
This question is one of the most foundational questions in GAN theory---indeed, it was raised since the original paper introducing GANs \cite{goodfellow2020generative}.  However, despite significant interest, this question still remains to be fully understood for general classes of generators and discriminators.

A significant literature has developed discussing the role of the training dynamics~\cite{liu2017approximation,li2018limitations,arora2018gans,berard2019closer,wiatrak2019stabilizing,thanh2020catastrophic,allen2021forward}, as well as the generalization error of the GAN objective~\cite{zhang2017discrimination,arora2017generalization,thanh2019improving}.
%In most cases, researchers have demonstrated that these issues can limit the ability of a GAN to learn the true distribution.
In most cases, researchers have demonstrated that given sufficient training data, GANs are able to learn some specific form of distributions after successful training. 
Underlying these works appears to be a tacit belief that if we are able to achieve the minimax optimal solution to the population-level GAN objective, then the GAN should be able to learn the target distribution.
In this work, we take a closer look at this assumption.

\paragraph{What does it mean to learn the target distribution?} %However, even this most basic assumption is not clear.
As a starting point, we must first formally define what we mean by learning a distribution; more concretely, what do we mean when we say that two distributions are close?
The original paper of~\cite{goodfellow2020generative} proposed to measure closeness with KL divergence. However, learning the target distribution in KL divergence is quite unlikely to be satisfied for real-world distributions.
This is because learning distributions in KL divergence also requires us to exactly recover the support of the target distribution, which we cannot really hope to do if the distribution lies in an unknown (complicated) low-dimensional manifold.
To rectify this, one may instead consider learning in Wasserstein distance, as introduced in the context of GANs by~\cite{arjovsky2017wasserstein}, which has no such ``trivial'' barriers.
Recall that the Wasserstein distance between two distributions $\calD_1, \calD_2$ over $\R^d$ is given by
\begin{equation}
\label{eq:wasserstein}
W_1 (\calD_1, \calD_2) = \sup_{\Lip (f) \leq 1} \mathbb{E}_{\calD_1} [f] - \mathbb{E}_{\calD_2} [f] \; ,
\end{equation}
where for any $f: \R^d \to R$, we let $\Lip (f)$ denote the Lipschitz constant of $f$.
That is, two densities are close in Wasserstein distance if no Lipschitz function can distinguish between them. In this work we will focus on Wasserstein distance as it is the most standard notion of distance between probability distributions considered in the context of GANs.

Note that if the class of discriminators contains sufficiently large neural networks, then minimax optimality of the GAN objective does imply learning in Wasserstein distance.
This is because we can approximate any Lipschitz function arbitrarily well, with an exponentially large network with one hidden layer (see e.g. \cite{poggio2017and}).
Thus, in this case, minimizing the population GAN objective is actually equivalent to learning in Wasserstein distance.
Of course in practice, however, we are limited to polynomially large networks for both the generator and the discriminator.
This raises the natural question:
\begin{center}
	{\it Does achieving small error against all poly-size neural network discriminators imply that the poly-size generator has learned the distribution in Wasserstein distance?}
\end{center}

One might conjecture that this claim is true, since the generator is only of poly-size. Thus, using a (larger) poly-size discriminator (as opposite to the class of all 1-Lipschitz functions) might still be sufficient to minimize the actual Wasserstein distance. In this paper, however, we provide strong evidence to the contrary.
We demonstrate that widely accepted cryptographic assumptions imply that this is in general false, \emph{even if the generator is of constant depth}:
\begin{theorem}[Informal, see Theorem~\ref{thm:cts_to_cts}]\label{thm:main-informal}
    For any $n\in\mathbb{N}$, let $\gamma_n$ be the standard Gaussian measure over $\mathbb{R}^n$. Assuming local pseudorandom generators exist, the following holds for any sufficiently large $m\in\mathbb{Z}$, $d,r\le \poly(m)$, and any diverse\footnote{See Definition~\ref{def:nonatomic}. In the discussion proceeding this definition, we give a number of examples making clear that this is a mild and practically relevant assumption to make.} target distribution $\calD^*$ over $[0,1]^{d}$ given by the pushforward of the uniform distribution $I_r$ on $[0,1]^{r}$ by a constant depth ReLU network of polynomial size/Lipschitzness:\footnote{When we say ``polynomial,'' we are implicitly referring to the dependence on the parameter $m$, though because $d,r$ are bounded by $\poly(m)$, ``polynomial'' could equivalently refer to the dependence on those parameters if they exceeded $m$.}

    There exist generators $G: \R^m\to\R^d$ computed by (deterministic) ReLU networks of \emph{constant depth} and polynomial size for which no ReLU network discriminator of polynomial depth, size, and Lipschitzness can tell apart the distributions $G(\gamma_m)$ and $\calD^*$, yet $G(\gamma_m)$ and $\calD^*$ are $\Omega(1)$-far in Wasserstein distance.
\end{theorem}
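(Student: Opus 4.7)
The plan is to construct $G$ by composing the target map---which by hypothesis is a constant-depth, polynomial-size, polynomially-Lipschitz ReLU network $F:[0,1]^r\to[0,1]^d$ with $\calD^*=F(I_r)$---with a constant-depth pseudorandom sampler that turns the Gaussian input into a pseudo-uniform point in $[0,1]^r$. Concretely, I would use ReLU ramps $\tau(t)=\min\{1,\max\{0,Mt+1/2\}\}$ with $M=\poly(m)$ to convert the first $n$ coordinates of $z\in\R^m$ into bits $b\in\{0,1\}^n$; apply a local PRG $f:\{0,1\}^n\to\{0,1\}^K$ (constant locality, polynomial stretch) to obtain $w=f(b)$; interpret $w$ as $u\in[0,1]^r$ by binary expansion using $K/r$ bits per coordinate; and output $G(z)=F(u)$. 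Since each output bit of a local PRG depends on $O(1)$ input bits, the pseudorandom sampler is constant-depth and polynomial-size, and $G$ as a whole is constant-depth because $F$ is.

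For computational indistinguishability I would run a two-hop hybrid. Let $\tilde u$ denote the $[0,1]^r$-valued distribution obtained by interpreting uniformly random $K$ bits via binary expansion. First, since $\tilde u$ is a dyadic approximation of $I_r$ and $F$ is polynomially Lipschitz, $W_1(F(\tilde u),\calD^*)\le \poly(m)\cdot 2^{-K/r}=o(1)$ for $K$ chosen sufficiently large, so no polynomially-Lipschitz discriminator can separate $F(\tilde u)$ from $\calD^*$. Second, any polynomial-size, polynomially-Lipschitz ReLU discriminator $D$ separating $G(\gamma_m)$ from $F(\tilde u)$ with non-negligible advantage yields a polynomial-size distinguisher for the PRG: compose $D$ with the bits-to-reals conversion and with $F$, then quantize the resulting ReLU computation to $\poly(m)$ bits of precision to obtain a Boolean circuit of comparable size, contradicting PRG security. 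Chaining the two hops yields computational indistinguishability of $G(\gamma_m)$ and $\calD^*$.

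For the Wasserstein lower bound the key structural fact is that $G(\gamma_m)$ is supported on at most $2^n$ points of $[0,1]^d$: the entire randomness reaching the output is funneled through the $n$-bit PRG seed, since each of the $n$ used Gaussian coordinates is thresholded to a bit before entering the PRG and the rest of $G$ is a deterministic function of those bits (up to a negligible failure event on the ramps, which can be absorbed). A natural 1-Lipschitz witness is then $\phi(x)=\min\{1,\mathrm{dist}(x,\mathrm{supp}(G(\gamma_m)))\}$, which vanishes on $\mathrm{supp}(G(\gamma_m))$; by the \emph{diverse} hypothesis on $\calD^*$, $\mathbb{E}_{\calD^*}[\phi]=\Omega(1)$, giving the desired Wasserstein gap. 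Crucially, $\phi$ is not realizable by any polynomial-size ReLU circuit, so it is invisible to the discriminator class under consideration even though it witnesses the $W_1$ distance.

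The main obstacle is consistently choosing $n$: local PRG security against $\poly(m)$-size adversaries forces $n=m^{\Omega(1)}$, while forcing $W_1(G(\gamma_m),\calD^*)=\Omega(1)$ requires $2^n$ to be small enough that diversity of $\calD^*$ keeps it $\Omega(1)$-far in Wasserstein distance from every $2^n$-atom distribution. Threading this needle is where the paper's precise notion of \emph{diverse} must do the work---morally, a covering-number lower bound on $\calD^*$ at a scale compatible with $2^n$ atoms in $[0,1]^d$. A secondary technical step, the reduction from PRG security against Boolean circuits to indistinguishability against polynomially-Lipschitz ReLU networks, is handled by the standard quantization-to-$\poly(m)$-bits argument, with care that the discriminator's Lipschitz constant does not blow up the accumulated quantization error.
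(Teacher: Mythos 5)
Your proposal is essentially the paper's proof: the construction is the same (ramp-threshold the Gaussian coordinates to bits, apply a local PRG, reassemble the output bits into dyadic points in $[0,1]^r$, and push through the target map $F$), the indistinguishability argument is the same two-hop hybrid, and the $W_1$ lower bound is the same support-size-versus-diversity argument. The paper just modularizes this into Theorem~\ref{thm:bits} (bits to bits), Theorem~\ref{thm:bits_to_cts} (post-processing to continuous output via Lemmas~\ref{lem:push_together},~\ref{lem:approx_cube},~\ref{lem:approx_pushforward_cube}), and Theorem~\ref{thm:cts_to_cts} (pre-processing to Gaussian seed via Lemma~\ref{lem:gauss_to_bit}); your description is a one-pass account of the same composition $H\circ J\circ G'\circ h_\xi$.

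The one step you gloss over is the passage from a real-valued ReLU discriminator with non-negligible mean gap to a single-bit Boolean distinguisher. Quantizing the discriminator's real output to $\poly(m)$ bits of precision gives a Boolean circuit computing a multi-bit string, not a bit, and it is not automatic that some threshold of the discriminator separates the two distributions. The paper handles this with a dedicated anticoncentration argument (Lemma~\ref{lem:exist_thres}, which uses McDiarmid's inequality and the Lipschitzness of the discriminator and generator over the cube to show the discriminator's output is sub-Gaussian on both distributions, hence some threshold $t$ with polynomial bit complexity yields advantage $\wt{\Omega}(\alpha/\sigma)$), together with Lemma~\ref{lem:thres_turing} showing that $\sgn(f(\cdot)-t)$ is computable in $\Ppoly$. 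This is exactly the point at which the polynomial Lipschitz bound on the discriminator---which you correctly flag as something to track---enters crucially: without it, the sub-Gaussian parameter blows up and no useful threshold need exist. The rest of your proposal, including your diagnosis of the tension between PRG seed length and diversity, matches the paper's resolution (the $(2^m,\Omega(1))$-diversity of Definition~\ref{def:nonatomic} is defined precisely to accommodate a $2^m$-atom generator output).
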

\noindent While Theorem~\ref{thm:main-informal} pertains to the practically relevant setting of \emph{continuous} seed and output distributions, we also give guarantees for the discrete setting. In fact, if we replace $\calD^*$ and $\gamma_m$ by the uniform distributions over $\brc{\pm 1}^d$ and $\brc{\pm 1}^m$, we show this holds for generators whose output coordinates are given by \emph{constant-size} networks (see Theorem~\ref{thm:bits}).

We defer the formal definition of local pseudorandom generators (PRGs) to Section~\ref{sec:goldreich}.
We pause to make a number of remarks about this theorem. 

First, our theorem talks about the population loss of the GAN objective; namely, it says that the true population GAN objective is small for this generator $G$, meaning that for every ReLU network discriminator $f$ of polynomial depth/size/Lipschitzness, we have that 
$$\abs*{\mathbb{E}[f(\mathcal{D}^*)] - \mathbb{E}[f(G(\gamma_m))]} \leq \frac{1}{d^{\omega(1)}} \; .$$
In other words, our theorem states that even optimizing the true population minimax objective is insufficient for distribution learning. In fact, we show this even when the target distribution can be represented \emph{perfectly} by some other generative model.

Second, notice that our generator is extremely simple: notably, it is only constant depth.
On the other hand, the discriminator is allowed to be much more complex, namely any ReLU network of polynomial complexity. This discriminator class thus constitutes the most powerful family of functions we could hope to use in practice.
Despite this, we show that the discriminators are still not powerful enough to distinguish the output of the (much simpler) generator from the target distribution.

Third, our conclusions hold both for $d \geq m$ and $d \leq m$, so long as the input and output dimensions are related by polynomial factors.

Finally, we formally define the class of ``diverse'' target distributions for which our conclusions hold in Section~\ref{sec:nonatomic}.
We note that this class is quite general: for instance, it includes pushforwards of the uniform distribution under random leaky ReLU networks (see Lemma~\ref{lem:leakyrelu_non_atomic}).

\paragraph{GANs and Circuit Lower Bounds.} At a high level, our results and techniques demonstrate surprising and deep connections between GANs and more ``classical'' problems in cryptography and complexity theory.
Theorem~\ref{thm:main-informal} already shows that cryptographic assumptions may pose a fundamental barrier to the most basic question in GAN theory.
In addition to this, we also show a connection between this question and circuit lower bounds:
\begin{theorem}[Informal, see Theorem~\ref{thm:randomness_to_hardness} and Remark~\ref{remark:lbs}]\label{thm:random-informal}
   If one could explicitly construct generators $G:\R^m\to\R^d$ for $d = \Omega(m\log m)$ and unconditionally prove that no ReLU network discriminator of constant depth, polynomial Lipschitzness, and size slightly super-linear in $d$ can tell apart the distributions $G(I_m)$ and $I_d$ with inverse polynomial advantage, then this would imply breakthrough circuit lower bounds, e.g. $\mathsf{TC}^0 \neq \mathsf{NC}^1$.
\end{theorem}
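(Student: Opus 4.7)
The plan is to invoke the hardness-versus-randomness paradigm in reverse: convert the hypothetical explicit ReLU PRG into an explicit hard function against $\mathsf{TC}^0$, then apply the Allender--Kouck\'{y} self-reducibility amplification theorem, which states that any super-linear $\mathsf{TC}^0$ lower bound on an $\mathsf{NC}^1$-complete problem implies the separation $\mathsf{TC}^0 \neq \mathsf{NC}^1$.

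First I would reduce the continuous problem to the Boolean case. Since $G$ has constant depth, polynomial size, and polynomial Lipschitz constant, I can discretize $[0,1]$ to polynomial precision and simulate $G$ at the Boolean level by a $\mathsf{TC}^0$ circuit, obtaining a Boolean PRG $\tilde G : \{0,1\}^{m'} \to \{0,1\}^{d'}$ with $d' = \Omega(m'\log m')$. A matching reduction on the discriminator side shows that any polynomial-size, constant-depth, polynomially-Lipschitz ReLU distinguisher can be synthesised from a Boolean $\mathsf{TC}^0$ distinguisher and vice versa, up to polynomial blow-up in the parameters, so by hypothesis $\tilde G$ fools all $\mathsf{TC}^0$ circuits of size $O(d'^{1+\epsilon})$. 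A natural candidate for a hard function is then the characteristic function $\chi$ of $\mathrm{Im}(\tilde G)$: since $\chi$ has weight at most $2^{m'-d'} = o(1)$, any $\mathsf{TC}^0$ circuit computing $\chi$ would itself be an $\Omega(1)$-advantage distinguisher, and hence $\chi$ cannot be computed by $\mathsf{TC}^0$ circuits of size $O(d'^{1+\epsilon})$.

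The main obstacle is to realise such a hard function inside $\mathsf{NC}^1$, which is required to invoke Allender--Kouck\'{y}. A priori $\chi$ sits only in $\mathsf{NP}$, because the existential quantifier ranges over $2^{m'} = 2^{\Theta(d'/\log d')}$ preimages---far too many to fit into any bounded-depth $\mathsf{NC}^1$-style formula. The natural routes to close this gap are (i) exploiting the local or uniform structure of the explicit construction---as with Goldreich-type local PRGs, where image membership has unusually low nondeterministic complexity---to place $\chi$ in $\mathsf{NC}^1$; or (ii) replacing $\chi$ with a different PRG-derived function, such as a Yao next-bit predictor or the $\tilde G$-evaluation problem augmented with an auxiliary candidate-preimage input, designed to directly encode an $\mathsf{NC}^1$-complete problem like the word problem for $S_5$ or Boolean Formula Evaluation. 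Once an $\mathsf{NC}^1$-complete function with the above $\mathsf{TC}^0$-lower bound is exhibited, the Allender--Kouck\'{y} amplification immediately closes the argument.
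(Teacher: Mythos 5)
Your high-level plan is on the right track: convert the continuous ReLU generator into a discrete PRG (by discretizing the seed and rounding the output, which is what the paper's Lemma~\ref{lem:gan_to_prg} does via the maps $J$ and $h_\xi$), then take the characteristic function $\chi$ of the image as the hard function (the paper's Lemma~\ref{lem:random_to_hard} does exactly this). You also correctly identify the crux: $\chi$ only sits in $\mathsf{NP}$, and Allender--Kouck\'{y} requires hardness for an $\mathsf{NC}^1$-complete problem. But this is where you stop, and neither of your proposed repairs works. Option (i) fails because even for constant-locality generators such as Goldreich's, image membership is a CSP-satisfiability question and is $\mathsf{NP}$-hard in general; locality does not bring $\chi$ into $\mathsf{NC}^1$. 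Option (ii) also fails: a next-bit predictor is not a function to be computed but a style of distinguisher, and "$\tilde G$-evaluation augmented with a candidate preimage" is a \emph{trivially easy} function (just evaluate $\tilde G$ and compare), so it cannot carry the lower bound, and neither of these is going to encode the $S_5$ word problem or Boolean formula evaluation without a separate, independently-hard construction.

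What closes the gap in the paper is not a way to place $\chi$ in $\mathsf{NC}^1$, but a stronger \emph{hardness magnification} theorem: the paper invokes Chen--Tell \cite{chen2019bootstrapping} (see Remark~\ref{remark:lbs}), which shows that a lower bound of about $n^{1+\exp(-D^{.99})}$ wires for depth-$D$ threshold circuits against an explicit function that is merely in $\mathsf{NP}$ (in fact, the hard function here is a sparse $\mathsf{NP}$ set) already implies $\mathsf{TC}^0\neq\mathsf{NC}^1$. This replaces Allender--Kouck\'{y}'s self-reducibility amplification, which only works for the specific $\mathsf{NC}^1$-complete problems it targets, with a bootstrapping argument that tolerates the $\mathsf{NP}$-only explicitness of the image indicator. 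Without knowing this ingredient, the argument genuinely cannot be completed along the lines you describe, so this is a real gap rather than a stylistic difference.
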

This complements Theorem~\ref{thm:main-informal}, as it says that if we can construct generators of slightly super-linear stretch which can provably fool even a very restricted family of neural network discriminators, then we make progress on long-standing questions in circuit complexity. In other words, not only does fooling discriminators not imply distribution learning by Theorem~\ref{thm:main-informal}, but by Theorem~\ref{thm:random-informal}, it is extremely difficult to even prove that a candidate generator successfully fools discriminators in the first place.

We believe that exploring these complexity-theoretic connections may be crucial to achieving a deeper understanding of what GANs can and cannot accomplish.

\paragraph{Empirical Results.} To complement these theoretical results, we also perform some empirical validations of our findings (see Section~\ref{sec:experiment}).
Our theorem is constructive; that is, given a local PRG, we give an explicit generator which satisfies the theorem.
We instantiate this construction with Goldreich's PRG with the ``Tri-Sum-And'' (TSA) predicate~\cite{goldreich2011candidate}, which is an explicit function which is believed to satisfy the local PRG property.
We then demonstrate that a neural network discriminator trained via standard methods empirically cannot distinguish between the output of this generator and the uniform distribution.
While of course we cannot guarantee that we achieve the truly optimal discriminator using these methods, this still demonstrates that our construction  leads to a function which does appear to be hard to distinguish in practice.

\subsection{Related Work}
\label{subsec:related}

\paragraph{GANs and Distribution Learning} The literature on GAN theory is vast and we cannot hope to do it full justice here.
For a more extensive review, see e.g.~\cite{gui2020review}.
Besides the previously mentioned work on understanding GAN dynamics and generalization, we only mention the most relevant papers here.
One closely related line of work derives concrete bounds on when minimax optimality of the GAN objective implies distribution learning~\cite{bai2018approximability,liang2020well,singh2018nonparametric,uppal2019nonparametric,chen2020statistical,schreuder2021statistical}.
However, the rates they achieve scale poorly with the dimensionality of the data, and/or require strong assumptions on the class of generators and discriminators, such as invertability.
Another line of work has demonstrated that first order methods can learn very simple GAN architectures in polynomial time~\cite{feizi2017understanding,daskalakis2017training,gidel2019negative,lei2020sgd}.
However, these results do not cover many of the generators used in practice, such as ReLU networks with $> 1$ hidden layers.

\paragraph{Local PRGs and Learning} 

PRGs have had a rich history of study in cryptography and complexity theory (see e.g.~\cite{vadhan2012pseudorandomness}). From this literature, the object most relevant to the present work is the notion of a \emph{local PRG}. These are part of a broader research program of building constant parallel-time cryptography~\cite{applebaum2006cryptography}. One popular local PRG candidate was suggested in~\cite{goldreich2011candidate}. By now there is compelling evidence that this candidate is a valid PRG, as a rich family of algorithms including linear-algebraic attacks \cite{mossel2006varepsilon}, DPLL-like algorithms \cite{cook2009goldreich}, sum-of-squares~\cite{odonnell2014goldreich}, and statistical query algorithms~\cite{feldman2018complexity} provably cannot break it.

Finally, we remark that local PRGs and, relatedly, hardness of refuting random CSPs have been used in a number of works showing hardness for various supervised learning problems \cite{daniely2021local,daniely2014average,daniely2016complexity,daniely2016complexity2,applebaum2006cryptography,applebaum2016fast}. 
We consider a very different setting, and our techniques are very different from the aforementioned papers.

\paragraph{Roadmap} In Section~\ref{sec:prelims}, we introduce notation and various technical tools, review complexity-theoretic basics, describe the cryptographic assumption we work with, and formalize the notion of ``diverse target distribution'' from Theorem~\ref{thm:main-informal}. In Section~\ref{sec:main} we prove Theorem~\ref{thm:main-informal}, and in Section~\ref{sec:hard_to_rand} we prove Theorem~\ref{thm:random-informal}. Then in Section~\ref{sec:experiment}, we describe our numerical experiments on Goldreich's PRG candidate. We conclude with future directions in Section~\ref{sec:conclude} and collect various deferred proofs in Appendix~\ref{app:defer}.

\section{Technical Preliminaries}
\label{sec:prelims}

\paragraph{Notation} Denote by $U_n$ the uniform distribution over $\brc{\pm 1}^n$, by $\gamma_n$ the standard $n$-dimensional Gaussian measure, and by $I_n$ the uniform measure on $[0,1]^n$. Given a distribution $p$, let $p^{\otimes n}$ denote the product measure given by drawing $n$ independent samples from $p$. Given distribution $\calD$ over $\R^m$ and measurable function $G:\R^m\to\R^d$, let $G(\calD)$ denote the distribution over $\R^d$ given by the \emph{pushforward} of $\calD$ under $G$---to sample from the pushforward $G(\calD)$, sample $x$ from $\calD$ and output $G(x)$.
Given $\sigma:\R\to\R$ and vector $v\in\R^d$, denote by $\sigma(v)\in\R^d$ the result of applying $\sigma$ entrywise to $v$.
To avoid dealing with issues of real-valued computation, let $\R_{\tau}\subset\R$ be the set of multiples of $2^{-\tau}$ bounded in magnitude by $2^{\tau}$.

Let $\vec{1}_n$ denote the all-ones vector in $n$ dimensions; when $n$ is clear from context, we denote this by $\vec{1}$. Given a vector $v\in\R^d$, we let $\norm{v}$ denote its Euclidean norm. Given $r > 0$, let $B(v,r)\subset\R^d$ denote the Euclidean ball of radius $r$ with center $v$. Given a matrix $\vW$, we let $\norm{\vW}$ denote its operator norm. Let $\sigma_{\min}(\vW)$ denote its minimum singular value.

Define the function $\sgn(x)\triangleq \begin{cases}
    1 & x \ge 0 \\
    -1 & x < 0
\end{cases}$.
% \begin{definition}
%     A function $g:\mathbb{N}\to\R_{\ge 0}$ is negligible if for every polynomial $p: \R\to\R$ there is some $n_p\in\mathbb{N}$ such that for all $n > n_p$, $g(n) < |1/p(n)|$.
% \end{definition}
Let $\phi:\R\to\R$ denote the ReLU activation $\phi(z) \triangleq \max(0,z)$. Let $\psi_{\lambda}:\R\to\R$ denote the leaky ReLU activation $\psi_{\lambda}(z) = z/2 +(1/2 - \lambda)|z|$. Note that \begin{equation}
    \psi_{\lambda}(z) = (1 - \lambda)\phi(z) - \lambda\phi(-z). \label{eq:psi_as_phi}
\end{equation}

\subsection{High-Dimensional Geometry Tools}

% We will need the following well-known result:
\begin{theorem}[Kirszbraun extension]\label{thm:kirszbraun}
    Given an arbitrary subset $S\subset\R^d$ and $f: S\to\R$ which is $L$-Lipschitz, there exists an $L$-Lipschitz extension $\wt{f}:\R^d\to\R$ for which $\wt{f}(y) = f(y)$ for all $y\in S$.
\end{theorem}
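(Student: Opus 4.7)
The plan is to prove this via the explicit McShane--Whitney construction, which suffices here because the codomain is $\R$ (the full strength of Kirszbraun is needed only when the target is a higher-dimensional Hilbert space). Concretely, I would define
\[
\wt{f}(x) \;\triangleq\; \inf_{y \in S} \bigl( f(y) + L\,\norm{x - y} \bigr)
\]
for all $x \in \R^d$, and then verify the two required properties: agreement with $f$ on $S$, and global $L$-Lipschitzness.

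For the first property, I would fix $y \in S$ and observe that the choice $y' = y$ in the infimum gives $\wt{f}(y) \le f(y)$. Conversely, for any $y' \in S$, the $L$-Lipschitzness of $f$ on $S$ yields $f(y') + L\,\norm{y - y'} \ge f(y') + (f(y) - f(y')) = f(y)$, so $\wt{f}(y) \ge f(y)$, giving equality.

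For the Lipschitz bound, I would fix arbitrary $x_1, x_2 \in \R^d$ and note that for any $y \in S$ the triangle inequality gives $\norm{x_1 - y} \le \norm{x_2 - y} + \norm{x_1 - x_2}$, hence
\[
f(y) + L\,\norm{x_1 - y} \;\le\; f(y) + L\,\norm{x_2 - y} + L\,\norm{x_1 - x_2}.
\]
Taking the infimum over $y \in S$ on both sides yields $\wt{f}(x_1) \le \wt{f}(x_2) + L\,\norm{x_1 - x_2}$, and by symmetry we obtain the matching reverse inequality. One small bookkeeping point is to check that the infimum in the definition of $\wt{f}$ is always finite; this is immediate since $S$ is nonempty (otherwise the statement is vacuous) and any fixed $y_0 \in S$ gives the finite upper bound $f(y_0) + L\,\norm{x - y_0}$, while the same Lipschitz calculation as above, applied with any one $y_0 \in S$, yields a finite lower bound as well.

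There is no real obstacle in this argument; the only subtlety worth flagging is that this simple $\inf$-formula works because $f$ is scalar-valued. If the theorem were stated for vector-valued $f$, one could not just take coordinatewise McShane--Whitney extensions (each coordinate would be $L$-Lipschitz but the vector would only be $L\sqrt{k}$-Lipschitz), and one would need the genuine Kirszbraun argument via a compactness/fixed-point or ball-intersection lemma. Since the excerpt only asserts the real-valued case, the McShane--Whitney formula handles it directly.
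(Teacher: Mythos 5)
Your proof is correct. The paper states this result as background (a standard extension theorem) without proof, so there is no paper argument to compare against; your McShane--Whitney construction $\wt{f}(x) = \inf_{y \in S}\bigl(f(y) + L\,\norm{x-y}\bigr)$ is the canonical way to establish the scalar-valued case, and your verification of agreement on $S$, global $L$-Lipschitzness, and finiteness of the infimum are all sound. The closing remark correctly identifies that the full Kirszbraun machinery is only needed for vector-valued targets, which is not required here.
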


% We will need the following basic fact about composing Lipschitz functions:
\begin{fact}\label{fact:compose_lipschitz}
    If $g_1,\ldots,g_r:\R^d\to\R$ are $\Lambda$-Lipschitz and $h:\R^r\to\R$ is $\Lambda'$-Lipschitz, then the function \begin{equation}
        x\mapsto h(g_1(x),\ldots,g_r(x))
    \end{equation} is $\Lambda\Lambda'\sqrt{r}$-Lipschitz.
\end{fact}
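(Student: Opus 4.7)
The plan is to unwind the definitions and apply the two Lipschitz bounds in sequence, with the $\sqrt{r}$ factor coming naturally from relating the Euclidean norm of the vector $(g_1(x)-g_1(y),\ldots,g_r(x)-g_r(y))$ to its coordinatewise magnitudes. Let $F:\R^d\to\R$ denote the composition $x\mapsto h(g_1(x),\ldots,g_r(x))$ and fix arbitrary $x,y\in\R^d$.

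First, I would apply the Lipschitz bound for $h$ directly to the two points $u \triangleq (g_1(x),\ldots,g_r(x))\in\R^r$ and $v \triangleq (g_1(y),\ldots,g_r(y))\in\R^r$, yielding
\[
|F(x) - F(y)| = |h(u) - h(v)| \le \Lambda' \norm{u - v} = \Lambda'\sqrt{\sum_{i=1}^r (g_i(x) - g_i(y))^2}.
\]
Next, I would bound each term inside the sum using the $\Lambda$-Lipschitz property of the $g_i$, giving $(g_i(x) - g_i(y))^2 \le \Lambda^2 \norm{x-y}^2$ for each $i$. Summing over $i\in[r]$ produces $r\Lambda^2 \norm{x-y}^2$ inside the square root, and pulling out the square root yields the claimed bound $\Lambda \Lambda' \sqrt{r}\,\norm{x-y}$.

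There is essentially no obstacle here; the only subtle point worth flagging is that the factor $\sqrt{r}$ is exactly the price paid for measuring distances in $\R^r$ in the Euclidean (rather than $\ell_\infty$) norm, and it is tight in general, as witnessed by $h(z) = \langle \vec{1}, z\rangle / \sqrt{r}$ together with $g_i(x) = x_1$ for all $i$ (when $d \ge 1$). Since the conclusion is stated as an inequality, no such tightness discussion is needed, so a two-line display chaining the two Lipschitz bounds suffices.
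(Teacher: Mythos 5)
Your proof is correct and takes essentially the same approach as the paper: both chain the Lipschitz bound for $h$ with the coordinatewise bound $|g_i(x)-g_i(y)|\le\Lambda\norm{x-y}$, with the $\sqrt{r}$ arising from summing $r$ such squared terms inside the Euclidean norm. The only cosmetic difference is the order of presentation (you apply $h$'s bound first, the paper bounds $\norm{u-v}$ first), which is immaterial.
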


\begin{proof}
    For any $x,x'$, we have $|g_i(x) - g_i(x')| \le \Lambda\norm{x - x'}$, so $\left(\sum^r_{i=1}(g_i(x) - g_i(x'))^2\right)^{1/2} \le \Lambda\sqrt{r}\norm{x - x'}$. This implies that $|h(g_1(x),\ldots,g_r(x)) - h(g_1(x'),\ldots,g_r(x'))| \le \Lambda\Lambda'\sqrt{r}\norm{x-x'}$ as desired.
\end{proof}

\begin{fact}\label{fact:vol_ball}
    The volume of a $d$-dimensional Euclidean ball of radius 1 is at most $(18/d)^{d/2}$.
\end{fact}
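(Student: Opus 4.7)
The plan is to derive the bound from the standard closed-form expression for the volume of a unit Euclidean ball in $d$ dimensions, namely
\begin{equation}
V_d \;=\; \frac{\pi^{d/2}}{\Gamma(d/2 + 1)},
\end{equation}
together with a lower bound on $\Gamma$ coming from Stirling's approximation. Concretely, I would invoke the bound $\Gamma(x+1) \ge \sqrt{2\pi x}\,(x/e)^x$, valid for all $x \ge 1$, applied at $x = d/2$, to obtain $\Gamma(d/2+1) \ge \sqrt{\pi d}\,(d/(2e))^{d/2}$.

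Substituting this into the volume formula gives
\begin{equation}
V_d \;\le\; \frac{\pi^{d/2}}{\sqrt{\pi d}\,(d/(2e))^{d/2}} \;=\; \frac{1}{\sqrt{\pi d}}\left(\frac{2\pi e}{d}\right)^{d/2}.
\end{equation}
Since $2\pi e < 17.08 < 18$, we can absorb the prefactor $1/\sqrt{\pi d}$ (which is at most $1$ for $d \ge 1$) into the constant and conclude $V_d \le (18/d)^{d/2}$, as desired. For the handful of small $d$ where one might worry the Stirling lower bound is loose (the cleanest statement of Stirling I'd cite only holds for $x \ge 1$, so $d \ge 2$), I would just check $d = 1$ by hand: $V_1 = 2 \le \sqrt{18}$.

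There is no real obstacle here; the argument is essentially a one-line consequence of Stirling once the volume formula is written down. The only mild point to get right is choosing a version of Stirling with a clean explicit lower bound and verifying that the numerical slack between $2\pi e$ and $18$ is enough to absorb the $1/\sqrt{\pi d}$ prefactor for every relevant $d$, which it is.
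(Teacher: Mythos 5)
Your proof is correct and is essentially the same argument as the paper's: both start from an exact formula for the unit-ball volume and apply a Stirling-type lower bound to conclude $V_d \le (2\pi e/d)^{d/2} \le (18/d)^{d/2}$. The only cosmetic difference is that you use the Gamma-function form $\pi^{d/2}/\Gamma(d/2+1)$ with the standard Stirling bound, whereas the paper writes the volume as $2^d(\pi/2)^{\lfloor d/2\rfloor}/d!!$ and bounds the double factorial directly (splitting into even and odd $d$); both routes give the same intermediate estimate.
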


\begin{proof}
    It is a standard fact that the volume of the ball can be expressed as $2^d\cdot \frac{(\pi/2)^{\floor{d/2}}}{d!!}$. If $d$ is even, then $d!! = 2^{d/2}\cdot (d/2)! \ge 2^{d/2} \cdot e\left(\frac{d}{2e}\right)^{d/2} \ge (d/e)^{d/2}$. If $d$ is odd, then $d!! = \frac{d!}{\floor{d/2}!! \cdot 2^{\floor{d/2}}} \ge \frac{e(d/e)^d}{e(d/2e)^{d/2} \cdot 2^{d/2}} = (d/e)^{d/2}$. We conclude that the volume is at most $\left(2\pi e/d\right)^{d/2} \le (18/d)^{d/2}$.
\end{proof}

\begin{lemma}[McDiarmid's Inequality]\label{lem:mcdiarmid}
    Suppose $F:\brc{\pm 1}^n\to\brc{\pm 1}$ is such that for any $x,x'\in\brc{\pm 1}^n$ differing on exactly one coordinate, $|F(x) - F(x')| \le c$. Then \begin{equation}
        \Pr[x\sim\brc{\pm 1}^n]{|f(x) - \E{f(x)}| > s} \le \exp\left(-\frac{2s^2}{nc^2}\right).
    \end{equation}
\end{lemma}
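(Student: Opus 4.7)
The plan is to prove McDiarmid's inequality via the standard Doob martingale approach combined with Azuma--Hoeffding.

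First, I would introduce the Doob martingale $Z_i \triangleq \mathbb{E}[F(X) \mid X_1, \ldots, X_i]$ with respect to the filtration obtained by revealing the coordinates of $X$ one at a time, so that $Z_0 = \mathbb{E}[F(X)]$ and $Z_n = F(X)$. Writing $F(X) - \mathbb{E}[F(X)] = \sum_{i=1}^n (Z_i - Z_{i-1})$ reduces the inequality to a tail bound on the telescoping sum of martingale differences $D_i \triangleq Z_i - Z_{i-1}$.

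The key technical step is to show that, conditional on $X_1, \ldots, X_{i-1}$, the random variable $D_i$ lies in an interval of length at most $c$. To see this, fix a prefix $x_1, \ldots, x_{i-1}$ and define $g(b) \triangleq \mathbb{E}[F(X) \mid X_1 = x_1, \ldots, X_{i-1} = x_{i-1}, X_i = b]$ for $b \in \{\pm 1\}$. Each $g(\pm 1)$ is a uniform average over the values $F(x_1, \ldots, x_{i-1}, \pm 1, X_{i+1}, \ldots, X_n)$, and these two averages can be coupled term-by-term so that matched summands differ only in coordinate $i$; the bounded-difference hypothesis then gives $|g(+1) - g(-1)| \le c$. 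Since $Z_{i-1} = \frac{1}{2}(g(+1) + g(-1))$ when $X_i$ is uniform on $\{\pm 1\}$, both possible values of $D_i$ lie in a common interval of length $c$ almost surely.

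Finally, I would invoke Hoeffding's lemma on bounded random variables to obtain the conditional MGF bound $\mathbb{E}[\exp(\lambda D_i) \mid X_1, \ldots, X_{i-1}] \le \exp(\lambda^2 c^2 / 8)$. Iterating via the tower property, applying Markov's inequality to $\exp(\lambda (Z_n - Z_0))$, and optimizing over $\lambda > 0$ yields the desired exponent $-2s^2/(nc^2)$, and the same argument applied to $-F$ handles the other tail (any resulting factor of two is absorbed into the stated inequality). The main conceptual step is the bounded-differences-to-martingale translation; once that is in place, the remainder is the standard subgaussian concentration argument, and I do not foresee any substantial obstacle.
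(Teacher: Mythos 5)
The paper states Lemma~\ref{lem:mcdiarmid} as a standard textbook fact and does not supply a proof of its own, so there is nothing in the paper to compare against. Your argument is the classical Doob-martingale plus Hoeffding's-lemma proof of the bounded-differences inequality, and the key steps are all correct: the martingale decomposition $F(X) - \E{F(X)} = \sum_i D_i$, the conditional bound that each $D_i$ lies in an interval of length $c$ (via coupling the two conditional averages coordinate-by-coordinate), the conditional MGF bound $\E{e^{\lambda D_i}\mid X_{<i}} \le e^{\lambda^2 c^2/8}$, and the Chernoff optimization $\lambda = 4s/(nc^2)$, which does produce the exponent $-2s^2/(nc^2)$.

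The one point worth flagging is the factor of two. The derivation above gives the one-sided tail $\Pr*{F(X) - \E{F(X)} > s} \le \exp(-2s^2/(nc^2))$, and the two-sided bound as literally written in the lemma should carry a prefactor of $2$. You acknowledge this and say it is ``absorbed into the stated inequality,'' which is a bit loose: the factor genuinely cannot be dropped without weakening the exponent. In practice this is immaterial because the only downstream use (Corollary~\ref{cor:lipschitz_cube}) converts the tail into a sub-Gaussian parameter with slack to spare, so the paper's slightly informal statement does no harm; but a careful write-up should either restrict to the one-sided event or include the factor of $2$.
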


\begin{corollary}\label{cor:lipschitz_cube}
    Given $F: \brc{\pm 1}^n\to\brc{\pm 1}$ which is $\Lambda$-Lipschitz, define the random variable $X \triangleq F(U_n)$. Then $X - \E{X}$ is $\Lambda\sqrt{2n}$-sub-Gaussian.
\end{corollary}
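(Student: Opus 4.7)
The plan is to reduce directly to McDiarmid's inequality (Lemma~\ref{lem:mcdiarmid}) after translating the Lipschitz hypothesis into the bounded-differences condition it requires.

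The key observation is geometric: any two points $x, x' \in \brc{\pm 1}^n$ differing in exactly one coordinate satisfy $\norm{x - x'} = 2$, since the flipped coordinate contributes $\pm 2$ to the Euclidean distance. Combined with the fact that $F$ is $\Lambda$-Lipschitz in Euclidean norm---the convention used throughout the paper (cf.\ Fact~\ref{fact:compose_lipschitz} and Theorem~\ref{thm:kirszbraun})---this gives
\[
    |F(x) - F(x')| \le \Lambda \cdot \norm{x - x'} = 2\Lambda,
\]
so the bounded-differences constant required by McDiarmid is $c = 2\Lambda$.

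Plugging $c = 2\Lambda$ into Lemma~\ref{lem:mcdiarmid} then yields
\[
    \Pr_{x \sim U_n}\bigl[\,|F(x) - \E F(x)| > s\,\bigr] \le \exp\!\left(-\frac{2s^2}{n(2\Lambda)^2}\right) = \exp\!\left(-\frac{s^2}{(\Lambda\sqrt{2n})^2}\right),
\]
which is precisely the statement that $X - \E X$ is $\Lambda\sqrt{2n}$-sub-Gaussian.

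There is essentially no genuine obstacle here: the corollary is a one-line reduction to McDiarmid. The only point requiring care is correctly tracking the factor of two coming from the geometry of the Boolean cube---namely, that a Hamming-distance-one change on $\brc{\pm 1}^n$ corresponds to a Euclidean jump of length $2$, not $1$---which is exactly what produces the $\sqrt{2n}$ (rather than $\sqrt{n}$) in the final sub-Gaussian parameter.
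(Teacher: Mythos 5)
Your proof is correct and takes the same route as the paper: reduce to McDiarmid's inequality and read off the sub-Gaussian parameter. In fact your accounting is more careful than the paper's own one-line proof, which sets $c = \Lambda$ directly; you correctly observe that a single bit-flip on $\{\pm 1\}^n$ is a Euclidean step of length $2$, so Euclidean $\Lambda$-Lipschitzness gives bounded differences $c = 2\Lambda$, and it is precisely this factor of $2$ that produces the $\sqrt{2n}$ in the corollary's stated parameter (under the tail-bound convention $\Pr[|Z| > s] \le \exp(-s^2/\sigma^2)$). The paper's stated constant is consistent with your derivation, so its terse proof appears to have silently absorbed the factor; your version makes it explicit.
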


\begin{proof}
    Because $F$ is Lipschitz, it satisfies the hypothesis of Lemma~\ref{lem:mcdiarmid} with $c = \Lambda$, so the corollary follows by the definition of sub-Gaussianity.
\end{proof}

\begin{theorem}[Theorem 1.1, \cite{rudelson2009smallest}]\label{thm:rudver}
    For $n,d\in\mathbb{N}$ with $n \ge d$, let $\vW\in\R^{n\times d}$ be a random matrix whose entries are independent draws from $\calN(0,1)$. Then for every $\epsilon > 0$,
    \begin{equation}
        \Pr*{\sigma_{\min}(\vW) \le \epsilon(\sqrt{n} - \sqrt{d - 1})} \le (C\epsilon)^{n - d + 1} + e^{-cn}
    \end{equation}
    for absolute constants $C,c>0$.
\end{theorem}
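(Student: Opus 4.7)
My plan is to follow the Rudelson--Vershynin strategy, whose starting point is the variational identity $\sigma_{\min}(\vW) = \inf_{x \in S^{d-1}} \norm{\vW x}$. For any fixed unit $x$, rotational invariance of the Gaussian gives $\vW x \sim \mathcal{N}(0, I_n)$, so $\norm{\vW x}$ is chi-distributed with $n$ degrees of freedom and satisfies the small-ball estimate $\Pr*{\norm{\vW x} \le t} \lesssim (Ct/\sqrt n)^n$. A naive approach would take a $\delta$-net $\mathcal{N} \subset S^{d-1}$ of size $(3/\delta)^d$, union-bound, and extend to the full sphere using the standard high-probability bound $\norm{\vW}_{\op} = O(\sqrt n)$. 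But this yields an exponent of essentially $n - d$, which is useless when $n$ is close to $d$; the whole challenge is to recover the sharp exponent $n - d + 1$.

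To achieve this, I would partition $S^{d-1}$ into \emph{compressible} vectors---those within Euclidean distance $\rho$ of a $\delta d$-sparse unit vector---and \emph{incompressible} ones. Compressible vectors lie in the union of $\binom{d}{\delta d}$ many $\delta d$-dimensional subspaces, and a $\rho$-net of this union has size $\binom{d}{\delta d}(C/\rho)^{\delta d} = \exp(O(\delta \log(1/\delta))\cdot d)$; combined with the per-point small-ball bound $(Ct)^n$, this contribution is absorbed into an $e^{-cn}$ residual once $\delta$ is taken small enough. For incompressible $x$, the key step is the ``invertibility via distance'' reduction: writing $\vW x = \sum_k x_k \vec w_k$ where $\vec w_k$ is the $k$th column of $\vW$, incompressibility forces a constant fraction of the $|x_k|$'s to be $\Omega(1/\sqrt d)$, from which one deduces
\begin{equation}
    \inf_{x \in \mathrm{Incomp}} \norm{\vW x} \;\gtrsim\; \frac{1}{\sqrt d}\, \min_{k \in [d]} \mathrm{dist}\bigl(\vec w_k,\, H_k\bigr), \qquad H_k \triangleq \mathrm{span}(\vec w_j : j \neq k).
\end{equation}

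It then remains to lower-bound the one-dimensional quantity $\mathrm{dist}(\vec w_k, H_k)$. Because $\vec w_k$ is independent of $H_k$ and $H_k$ is a.s. a $(d-1)$-dimensional subspace of $\R^n$, rotational invariance of the Gaussian implies that this distance is exactly chi-distributed with $n - d + 1$ degrees of freedom, yielding $\Pr*{\mathrm{dist}(\vec w_k, H_k) \le \epsilon\sqrt{n - d + 1}} \le (C\epsilon)^{n-d+1}$. Union-bounding over $k \in [d]$ and rescaling $\epsilon$ (absorbing the $\sqrt d$ factor and using $\sqrt{n-d+1} \asymp \sqrt n - \sqrt{d-1}$ up to constants) produces the leading $(C\epsilon)^{n-d+1}$ term, while the compressible contribution together with the atypical operator-norm event contribute the $e^{-cn}$ term. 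The main obstacle I anticipate is the invertibility-via-distance reduction: making it quantitatively sharp requires carefully tuning $\delta, \rho$ so that incompressibility genuinely forces many coordinates of $x$ to be macroscopic while keeping the compressible piece tame. Fortunately, the Gaussian hypothesis replaces the delicate Littlewood--Offord small-ball estimates needed for general subgaussian entries with the exact chi distribution of $\mathrm{dist}(\vec w_k, H_k)$, which is what ultimately makes the exponent $n-d+1$ emerge cleanly.
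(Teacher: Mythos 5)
This theorem is quoted by the paper directly from Rudelson--Vershynin (2009) without a proof, so there is no in-paper argument to compare against; I will evaluate the proposal on its own.

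Your high-level plan (compressible/incompressible decomposition, invertibility-via-distance, exact chi distributions under the Gaussian hypothesis) is the right \emph{family} of ideas, but the crucial quantitative step at the end does not close, and the specific claim you invoke to close it is false. You assert that ``$\sqrt{n-d+1} \asymp \sqrt n - \sqrt{d-1}$ up to constants,'' which is what would let the $\sqrt d$ factor from incompressibility cancel. In fact
\begin{equation}
    \sqrt n - \sqrt{d-1} \;=\; \frac{n-d+1}{\sqrt n + \sqrt{d-1}}, \qquad\text{so}\qquad \frac{\sqrt{n-d+1}}{\sqrt n - \sqrt{d-1}} \;=\; \frac{\sqrt n + \sqrt{d-1}}{\sqrt{n-d+1}},
\end{equation}
which blows up whenever $n-d+1 = o(n)$. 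Concretely: the distance $\mathrm{dist}(\vec w_k, H_k)$ is chi with $m \triangleq n-d+1$ degrees of freedom and so concentrates at scale $\sqrt m$, while incompressibility only gives $|x_k| \gtrsim 1/\sqrt d$. Hence the reduction $\inf_{\mathrm{Incomp}} \norm{\vW x} \gtrsim d^{-1/2}\min_k \mathrm{dist}(\vec w_k, H_k)$ produces a typical lower bound of order $\sqrt{m/d}$, whereas the true scale $\sqrt n - \sqrt{d-1}$ is of order $m/\sqrt d$ --- a loss of a full factor $\sqrt m$. Propagating this through the chi small-ball estimate, the incompressible branch of your argument yields $\Pr{\sigma_{\min} \le \epsilon(\sqrt n - \sqrt{d-1})} \lesssim (C\epsilon\sqrt{n-d+1})^{n-d+1}$, which is strictly weaker than the claimed $(C\epsilon)^{n-d+1}$ whenever $n-d+1 \to \infty$.

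This matters because the naive net argument you mention at the start only saves the day when $n \gtrsim (1+c_0)d$ for a fixed $c_0>0$; for $n-d+1$ in the intermediate range (growing, but still $o(d)$), neither branch of your sketch is sharp. That intermediate regime is exactly where the Rudelson--Vershynin proof departs from the square-matrix blueprint: they do not rest on the single-column distance, but use a more elaborate argument (a multi-scale decomposition / a distance from a \emph{block} of $\Theta(n-d)$ columns to the span of the rest, effectively reducing the almost-square case to a genuinely tall case where the net argument bites). Your proposal does not contain this idea, and with only the pieces you describe the exponent $(C\epsilon)^{n-d+1}$ cannot be reached. A secondary, smaller issue: the ``union bound over $k\in[d]$'' should really be a Markov/averaging argument over the $\Omega(d)$ large coordinates guaranteed by incompressibility, which costs a constant rather than a factor of $d$; but this is minor compared to the $\sqrt{n-d+1}$ gap above.
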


\subsection{GANs and Pseudorandom Generators}
\label{sec:gans}

In this section we review basic notions about generative models and PRGs.

\begin{definition}[ReLU Networks]\label{def:nns}
    Let $\calC_{L,S,d}$ denote the family of ReLU networks $F:\R^d\to\R$ of depth $L$ and size $S$. Formally, $F\in\calC_{L,S,d}$ if there exist weight matrices $\vW_1\in\R^{k_1\times d},\vW_2\in\R^{k_2\times k_1},\ldots,\vW_L\in\R^{1\times k_{L-1}}$ and biases $b_1\in\R^{k_1},b_2\in\R^{k_2}\ldots,b_L\in \R$ such that
    \begin{equation}
		F(x) \triangleq \vW_L\phi\left(\vW_{L-1}\phi\left(\cdots \phi(\vW_1 x + b_1) \cdots \right) + b_{L-1}\right) + b_L,
    \end{equation}
    and $\sum^{L-1}_{i=1}k_i = S$, where recall that $\phi$ is the ReLU activation.
    % , and $\norm{\vW_i}_{\op} \le R_W$ and $\norm{b_i} \le R_b$ for all $i\in[L]$. 
    We let $\calC^{\tau,\Lambda}_{L,S,d}$ be the subset of such networks which are additionally $\Lambda$-Lipschitz and whose weight matrices and biases have entries in $\R_{\tau}$\--- we will refer to $\tau$ as the \emph{bit complexity} of the network.
\end{definition}

\begin{remark}\label{remark:linear}
    In Definition~\ref{def:nns}, if $L = 1$, then $S = 0$ and the definition specializes to linear functions. That is, $\calC^{\tau,\Lambda}_{1,0,d}$ is simply the class of affine linear functions $F(x) = \iprod{w,x} + b$ for $w\in\R^d_{\tau}$ and $b\in\R_{\tau}$ satisfying $\norm{w} \le \Lambda$.
\end{remark}

The following allows us to control the complexity of compositions of ReLU networks. We defer its proof to Appendix~\ref{app:compose_complex_defer}.

\begin{lemma}\label{lem:compose_complex}
    Let $J: \R^{\temp}\to\R^r$ be a function each of whose output coordinates is computed by some network in $\calC^{\tau_1,\Lambda_1}_{L_1,S_1,{\temp}}$, and let $f\in\calC^{\tau_2,\Lambda_2}_{L_2,S_2,r}$. Then $f\circ J\in\calC^{\tau,\Lambda}_{L,S,{\temp}}$ for $\tau = \max(\tau_1,\tau_2)$, $\Lambda = \Lambda_1\Lambda_2\sqrt{r}$, $L = L_1+L_2$, and $S = (S_1+1)r + S_2$. Furthermore, for the network in $\calC^{\tau,\Lambda}_{L,S,{\temp}}$ realizing $f\circ J$, the bias and weight vector entries in the output layer lie in $\R_{\tau_2}$.
\end{lemma}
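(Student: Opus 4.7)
The plan is to explicitly construct a ReLU network realizing $f\circ J$ by gluing the $r$ coordinate-networks of $J$ to the network for $f$, and then to verify each of the stated parameter bounds in turn.

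First, I would aggregate the $r$ coordinate-networks of $J$ into a single network $\wt{J}:\R^\temp\to\R^r$ by running them in parallel: the first weight matrix is the vertical concatenation of the $r$ individual first weight matrices (all of which act on the same input $x$), while every subsequent weight matrix and bias is placed in block-diagonal (resp.\ concatenated) form. This yields a depth-$L_1$ network whose hidden-layer sizes sum to $rS_1$, whose parameters lie in $\R_{\tau_1}$, and whose scalar outputs are each $\Lambda_1$-Lipschitz. I would then insert a single ``bridge'' ReLU layer of $r$ neurons whose effect on $\wt{J}(x)$, after absorbing the resulting corrections into $f$'s first linear transformation, yields exactly $\vW^f_1\wt{J}(x)+b^f_1$. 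Finally I would append $f$'s layers verbatim: because $f$'s final linear map is left untouched, its output-layer parameters stay in $\R_{\tau_2}$, and every other parameter in the composed network lies in $\R_{\tau_1}\cup\R_{\tau_2}\subset\R_\tau$ for $\tau=\max(\tau_1,\tau_2)$. Adding up, the total depth is $L_1+L_2$, the total hidden-layer size is $rS_1+r+S_2=(S_1+1)r+S_2$, and the Lipschitz bound $\Lambda=\Lambda_1\Lambda_2\sqrt{r}$ is immediate from Fact~\ref{fact:compose_lipschitz}.

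The main obstacle is implementing the bridge layer so that the resulting network computes $f\circ J$ exactly---rather than $f\circ\phi\circ\wt{J}$---while using only $r$ neurons. The straightforward identity $v=\phi(v)-\phi(-v)$ would cost $2r$ neurons, so matching the stated bound requires a bit more care. One clean route is to first merge $\wt{J}$'s last weight matrix and bias with $f$'s first (eliminating the ReLU between them and saving one layer, yielding depth $L_1+L_2-1$ and size $rS_1+S_2$) and then pad with a trivial identity ReLU block of $r$ neurons applied to a non-negative intermediate activation (e.g.\ a copy of $r$ already-rectified coordinates) to recover depth $L_1+L_2$ and size $(S_1+1)r+S_2$. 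Verifying that this padding preserves Lipschitzness, bit complexity, and the output-layer precision $\R_{\tau_2}$ is then essentially mechanical.
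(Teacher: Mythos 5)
Your parallel-aggregation construction (stack the $r$ coordinate-networks of $J$ in block form, then append $f$) is exactly the paper's, and you are right to worry about the junction: the naive concatenation, interpreted as a depth-$(L_1+L_2)$ network, inserts a ReLU between $\vW^J_{L_1}$ and $\vW^f_1$, so it computes $f(\phi(J(y)))$ rather than $f(J(y))$. The paper's own proof writes down precisely this concatenated formula, tallies the size as $rS_1 + r + S_2$, and asserts ``by design'' that it equals $f(J(y))$ without addressing the spurious ReLU at the seam; so your concern is picking up on something the paper glosses over.

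The gap is in your proposed fix. Merging $\vW^J_{L_1}$ into $\vW^f_1$ produces a layer whose entries are products $(\vW^f_1)_{ij}(\vW^J_{L_1})_{jk}$ of an $\R_{\tau_2}$ entry with an $\R_{\tau_1}$ entry (and the bias picks up $\vW^f_1 b^J_{L_1}+b^f_1$, a sum of $r$ such products). These live in $\R_{\tau_1+\tau_2+O(\log r)}$, not $\R_{\max(\tau_1,\tau_2)}$, so the merged construction violates the claimed bit-complexity bound $\tau=\max(\tau_1,\tau_2)$ — and indeed contradicts your own assertion that ``every other parameter lies in $\R_{\tau_1}\cup\R_{\tau_2}$.'' The subsequent ``pad with a trivial identity ReLU block of $r$ neurons'' is also not pinned down: $r$ dead neurons placed in parallel do not lengthen the input-to-output path and hence do not raise the depth, while an honest identity layer inserted after some $\phi(\cdot)$ would need as many neurons as that activation's width, which is $\sum_i k^{(i)}_j$ or $k^f_j$, not $r$. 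A repair that actually preserves the bit-complexity guarantee is to widen the junction: replace the $r$-dimensional layer $\vW^J_{L_1}$ by the $2r$-dimensional layer $\bigl(\begin{smallmatrix}\vW^J_{L_1}\\ -\vW^J_{L_1}\end{smallmatrix}\bigr)$ (still block-diagonal, still in $\R_{\tau_1}$), apply $\phi$, and absorb $v = \phi(v)-\phi(-v)$ into the next layer via $(\vW^f_1,\,-\vW^f_1)$ (entries still in $\R_{\tau_2}$). This recovers $f\circ J$ exactly with $\tau=\max(\tau_1,\tau_2)$, $L=L_1+L_2$, and the Lipschitz bound from Fact~\ref{fact:compose_lipschitz}, at the cost of $2r$ junction neurons — i.e.\ $S=(S_1+2)r+S_2$, which is $r$ more than the lemma states; that $+r$ discrepancy is harmless wherever the lemma is invoked in the paper, but it does mean neither your construction nor the paper's literally attains the stated $S$ without accepting the spurious ReLU.
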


Next, we formalize the probability metric we will work with.

\begin{definition}[IPM]\label{def:ipm}
    Given a family $\calF$ of functions, define the \emph{$\calF$-integral probability metric} between two distributions $p,q$ by
    $W_{\calF}(p,q) = \sup_{f\in\calF} \abs*{\E[y\sim p]{f(y)} - \E[y\sim q]{f(y)}}$.
    When $\calF$ consists of the family of $1$-Lipschitz functions, this is the standard \emph{Wasserstein-1} metric, which we denote by $W_1$.
\end{definition}

The following standard tensorization property of Wasserstein distance will be useful:

\begin{fact}[See e.g. Lemma 3 in \cite{mariucci2018wasserstein}]\label{fact:manycoords}
    If $p,q$ satisfy $W_1(p,q)\le \epsilon$, then $W_1(p^{\otimes n},q^{\otimes n}) \le \epsilon\sqrt{n}$.
\end{fact}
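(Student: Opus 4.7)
The plan is to pass through the primal (optimal transport) formulation of the Wasserstein-1 distance, $W_1(p,q)=\inf_\pi\mathbb{E}_{(X,Y)\sim\pi}[\norm{X-Y}]$ over couplings $\pi$ of $(p,q)$, and exploit independence via the product coupling. Fix a (near-)optimal coupling $\pi$ achieving $\mathbb{E}_\pi[\norm{X-Y}]\le\epsilon$ and lift it to the coupling $\pi^{\otimes n}$ of $p^{\otimes n}$ and $q^{\otimes n}$ by drawing $(X_i,Y_i)\sim\pi$ independently across $i\in[n]$. Using that the Euclidean norm on $\R^{dn}$ decomposes as $\norm{\vec X-\vec Y}=\bigl(\sum_{i=1}^n\norm{X_i-Y_i}^2\bigr)^{1/2}$, the primal formulation immediately yields
\[W_1(p^{\otimes n},q^{\otimes n})\le\mathbb{E}\Bigl[\bigl(\textstyle\sum_{i=1}^n\norm{X_i-Y_i}^2\bigr)^{1/2}\Bigr].\]

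The $\sqrt n$ factor then appears via Jensen's inequality applied to the concave square root: the right-hand side of the display is at most $\bigl(\sum_i\mathbb{E}[\norm{X_i-Y_i}^2]\bigr)^{1/2}=\sqrt{n\cdot\mathbb{E}_\pi[\norm{X-Y}^2]}$. This mirrors the exact-tensorization trick familiar from the $W_2$ distance, where the square of the expected distance factorizes across independent coordinates.

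The main obstacle is that the hypothesis controls only the first moment $\mathbb{E}_\pi[\norm{X-Y}]$, while the above bound is phrased in terms of the second moment. In the regimes where this fact is invoked in the paper, the distributions live in a bounded set such as $[0,1]^d$, so a deterministic diameter bound $\norm{X-Y}\le\sqrt d$ combined with the first-moment hypothesis promotes to an adequate second-moment bound that closes the argument. More conceptually, one proves the $\sqrt n$ tensorization for $W_2$ first---where $\mathbb{E}[\norm{X-Y}^2]$ factorizes exactly across independent coordinates---and then chains with $W_1\le W_2$ together with a diameter bound to recover the stated inequality.
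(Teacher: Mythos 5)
Your product-coupling-plus-Jensen setup is the right starting point, and you correctly flag the real obstruction: the bound $(n\,\mathbb{E}_\pi[\|X-Y\|^2])^{1/2}$ needs second-moment control, while $W_1(p,q)\le\epsilon$ gives only a first moment. But your patch does not close this gap. Promoting $\mathbb{E}_\pi[\|X-Y\|]\le\epsilon$ to $\mathbb{E}_\pi[\|X-Y\|^2]\le D\epsilon$ via a diameter bound $D$ yields $W_1(p^{\otimes n},q^{\otimes n})\le\sqrt{nD\epsilon}$ (and the same for your $W_2$ chaining), which misses the claimed $\epsilon\sqrt n$ by a factor of $\sqrt{D/\epsilon}$; this is large precisely where the paper invokes Fact~\ref{fact:manycoords} (in Lemma~\ref{lem:approx_cube}, $\epsilon$ is taken negligibly small while $D=\Theta(1)$). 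In fact, with only a $W_1$ hypothesis the inequality is false: take $p=\delta_0$ on $\R$ and $q=(1-\delta)\,\delta_0+\delta\,\delta_{\epsilon/\delta}$, so $W_1(p,q)=\epsilon$; since $p^{\otimes n}$ is a point mass the coupling is forced, and $W_1(p^{\otimes n},q^{\otimes n})=\mathbb{E}_{x\sim q^{\otimes n}}\|x\|=\Theta(\epsilon\sqrt{n/\delta})$ once $n\gg 1/\delta$, which exceeds $\epsilon\sqrt n$ by any prescribed factor as $\delta\to 0$.

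What actually makes the $\sqrt n$ rate work is a coupling with $\mathbb{E}_\pi[\|X-Y\|^2]\le\epsilon^2$ (equivalently $W_2(p,q)\le\epsilon$, or almost-sure displacement $\le\epsilon$ as in the rounding coupling underlying Fact~\ref{fact:bitstoI}); then the Jensen step you carried out gives $\mathbb{E}\|\vec X-\vec Y\|\le\epsilon\sqrt n$ directly. The paper supplies no proof of this Fact, only a citation, so there is no internal argument to compare against; but the hypothesis as written should really be stated in terms of $W_2$ or $W_\infty$, and a diameter bound cannot convert a $W_1$ bound into a $W_2$ bound at the same scale --- that is the step your proposal leans on, and it is where the argument genuinely breaks.
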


In the context of GANs, we will focus on discriminators given by ReLU networks of polynomial size, depth, Lipschitzness, and bit complexity:

\begin{definition}[Discriminators]\label{def:dists}
    $\calF^*$ denotes the set of all sequences of discriminators $f_d:\R^d\to\R$, indexed by $d$ from an infinite subsequence of $\mathbb{N}$, whose size, depth, Lipschitzness, bit complexity grow at most polynomially in $d$.
\end{definition}

We now formalize the definition of GANs, which closely parallels the definition of PRGs.

\begin{definition}[GANs/PRGs]\label{def:gans}
    Let $\epsilon:\mathbb{N}\to[0,1]$ be an arbitrary function, and let $\brc{d(m)}_{m\in\mathbb{N}}$ be some sequence of positive integers. Given a sequence of \emph{seed distributions} $\brc{\calD_m}_m$ over $\R^m$, a sequence of \emph{target distributions} $\brc{\calD^*_{d(m)}}$ over $\R^{d(m)}$, a family $\calF$ of \emph{discriminators} $f:\R^{d(m)}\to\R$, and a sequence of \emph{generators} $G_m: \R^m\to\R^{d(m)}$, we say that \emph{$\brc{G_m}$ $\epsilon$-fools $\calF$ relative to $\brc{\calD^*_{d(m)}}$ with seed $\brc{\calD_m}$} if for all sufficiently large $m$,
    \begin{equation}
        \abs*{\E{f(G_m(\calD_m))} - \E{f(\calD^*_{d(m)})}} \le \epsilon(m) \ \ \  \forall f\in\calF, f:\R^{d(m)}\to\R. \label{eq:fool}
    \end{equation}
    % Note that in the notation of Definition~\ref{def:ipm}, \eqref{eq:fool} is equivalent to $W_{\calF}(G_m(\calD_m), \calD^*_{d(m)}) \le \epsilon(m)$.\footnote{Here we are slightly abusing notation as $\calF$ contains functions which have domain not equal to $\R^{d(m)}$, and we ignore these functions in the supremum.}
    
    In this definition, if the discriminators and generators were instead polynomial-sized Boolean circuits (see Section~\ref{app:circuits} below), we would refer to $\brc{G_m}$ as \emph{pseudorandom generators}.
\end{definition}

\begin{remark}
    It will often be cumbersome to refer to sequences of target/seed distributions and discriminators/generators as in Definitions~\ref{def:dists} and \ref{def:gans}, so occasionally we will refer to a single choice of $m$ and $d$ even though we implicitly mean that $m$ and $d$ are parameters that increase towards infinity. In this vein, we will often say that a single network $f$ is in $\calF^*$, though we really mean that $f$ belongs to a sequence of networks which lies in $\calF^*$. And for distributions $p,q$ which implicitly belong to sequences $\brc{p_d},\brc{q_d}$, when we refer to bounds on $W_{\calF^*}(p,q)$ we really mean that for any sequence of discriminators $f_d\in\calF^*$, $\abs*{\E{f_d(p_d)} - \E{f_d(q_d)}}$ is bounded.
\end{remark}

\subsection{Boolean Circuits}
\label{app:circuits}

In the context of pseudorandom generators, the set of all polynomial-sized Boolean circuits is the canonical family of discriminator functions to consider when formalizing what it means for a generator to fool all polynomial-time algorithms.

Here we review some basics about Boolean circuits; for a more thorough introduction to these concepts, we refer the reader to any of the standard textbooks on complexity theory, e.g. \cite{arora2009computational,sipser1996introduction}.

\begin{definition}[Boolean circuits]
    Fix a set $G$ of logical gates, e.g. $\wedge, \vee, \neg$. A Boolean circuit $C$ is a Boolean function $\brc{\pm 1}^n\to\brc{\pm 1}$ given by a directed acyclic graph with $n$ input nodes with in-degree zero and an output node with out-degree zero, where each node that isn't an input node is labeled by some logical gate in $G$. Unless otherwise specified, we will take $G$ to be $\brc{\wedge,\vee,\neg}$.
    
    The \emph{size} $S$ of the circuit is the number of nodes in the graph, and the \emph{depth} $D$ is given by the length of the longest directed path in the graph. The value of $C$ on input $x\in\brc{\pm 1}^n$ is defined in an inductive fashion: the value at a node $v$ in the graph is defined to be the evaluation of the gate at $v$ on the in-neighbors of $v$ (as the graph is acyclic, this is well-defined), and the value of $C$ on $x$ is then the value of the output node.
    
    We will occasionally also be interested in the number $W$ of \emph{wires} in the circuit, i.e. the number of edges in the graph. Note that trivially \begin{equation}
        S \le W + 1. \label{eq:size_vs_wires}
    \end{equation}
\end{definition}

\begin{definition}[$\Ppoly$]
    Given $T:\mathbb{N}\to\mathbb{N}$, let $\mathsf{SIZE}(T(n))$ denote the family of sequences of Boolean functions $\brc{f_n:\brc{\pm 1}^n\to\brc{\pm 1}}$ for which there exist Boolean circuits $\brc{C_n}$ with sizes $\brc{S_n}$ that compute $\brc{f_n}$ and such that $S_n \le T(n)$.
    
    Let $\Ppoly \triangleq \bigcup_{c > 1} \mathsf{SIZE}(n^c)$. We refer to (sequences of) functions in $\Ppoly$ as \emph{functions computable by polynomial-sized circuits}.
\end{definition}

The following standard fact about bounded-depth Boolean circuits will make it convenient to translate between them and neural networks.

\begin{lemma}[See Theorem 1.1 in Section 12.1 of \cite{wegener1987complexity}]\label{lem:sync}
    For any Boolean circuit $C$ of size $S$ and depth $D$ with gate set $G$, there is another circuit $C'$ of size $D\cdot S$ and depth $D$ with gate set $G$ which computes the same function as $C$ but with the additional property that for any gate in $C'$, all paths from an input to the gate are of the same length.
\end{lemma}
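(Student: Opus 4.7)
The idea is a standard stratification (``leveling'') construction. First I would assign each gate $g$ of $C$ a level $\mathrm{lev}(g)$ equal to the length of the longest directed path from an input node to $g$, with input nodes themselves at level $0$. Since $C$ has depth $D$, each gate has level in $\{0,1,\ldots,D\}$, and any wire from $g'$ to $g$ in $C$ satisfies $\mathrm{lev}(g') \le \mathrm{lev}(g)-1$, though the inequality may be strict (this strictness is precisely the failure of synchronization we must fix).

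Next I would construct $C'$ by replicating each non-input gate $g$ once at every level $\ell$ with $\mathrm{lev}(g)\le \ell\le D$; call these copies $g_\ell$. The base copy $g_{\mathrm{lev}(g)}$ is wired exactly as $g$ was in $C$, except that each predecessor non-input gate $g'$ is replaced by $g'_{\mathrm{lev}(g)-1}$, which exists because $\mathrm{lev}(g')\le \mathrm{lev}(g)-1$; input predecessors are kept as the input nodes of $C'$, still at level $0$. For $\ell>\mathrm{lev}(g)$, let $g_\ell$ compute the identity of $g_{\ell-1}$ using a single gate from $G$: for the default gate set $\{\wedge,\vee,\neg\}$ given in the paper's circuit definition, take $g_\ell:= g_{\ell-1}\wedge g_{\ell-1}$. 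Finally, designate the level-$D$ copy of $C$'s output gate as the output of $C'$.

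The verifications are brief. A straightforward induction on $\ell$ shows that every directed path from an input of $C'$ to $g_\ell$ has length exactly $\ell$; this simultaneously establishes the synchronization property of the lemma and that the depth of $C'$ equals $D$. Correctness of $C'$ follows because, again by induction on $\ell$, the copy $g_\ell$ computes the same Boolean value as $g$ whenever $\mathrm{lev}(g)\le \ell\le D$, so the designated output gate of $C'$ indeed computes $C(x)$. For the size bound, each of the $S$ gates of $C$ spawns at most $D-\mathrm{lev}(g)+1 \le D$ copies, giving total size at most $D\cdot S$.

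The only mild subtlety is implementing a single-level identity gate inside $G$ without inflating the depth. With the paper's default set $\{\wedge,\vee,\neg\}$, the idempotence $x\wedge x = x\vee x = x$ handles this in one level; for more exotic bases one would need an idempotent binary gate (or tolerate a constant-factor loss in depth using $\neg\neg x$), but this never arises in our use of the lemma.
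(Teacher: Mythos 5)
The paper itself offers no proof of this lemma (it is simply cited from Wegener's textbook), so there is nothing to compare your argument to there; the stratification idea you use is indeed the standard one. However, your construction as written has a genuine gap: you keep input predecessors wired directly to the base copy $g_{\mathrm{lev}(g)}$ ``still at level $0$.'' If a gate $g$ with $\mathrm{lev}(g)\ge 2$ has an input node $x_i$ among its predecessors in $C$ \--- which certainly happens, e.g.\ $g = x_1\wedge(x_2\wedge x_3)$ has $\mathrm{lev}(g)=2$ with $x_1$ a direct predecessor \--- then in your $C'$ the edge $x_i\to g_{\mathrm{lev}(g)}$ is a path of length $1$ while the path through the non-input predecessor $g'_{\mathrm{lev}(g)-1}$ has length $\mathrm{lev}(g)$. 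Your claimed induction that ``every directed path from an input of $C'$ to $g_\ell$ has length exactly $\ell$'' therefore fails already at the base copy, and the synchronization property the lemma asserts does not hold for your $C'$.

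The fix is to delay the inputs in exactly the same way you delay the gates: introduce copies $x_{i,\ell}$ for $0\le\ell\le D-1$ with $x_{i,0}=x_i$ and $x_{i,\ell}:=x_{i,\ell-1}\wedge x_{i,\ell-1}$, and wire the base copy $g_{\mathrm{lev}(g)}$ from $x_{i,\mathrm{lev}(g)-1}$ rather than from $x_i$. With that change your induction goes through. You should also redo the size accounting, because your stated per-gate bound $D-\mathrm{lev}(g)+1\le D$ only holds when $\mathrm{lev}(g)\ge 1$ and would give $D+1$ for a level-$0$ node; with the fix, each input node contributes at most $D$ nodes (levels $0$ through $D-1$) and each non-input gate at level $\ell\ge 1$ contributes at most $D-\ell+1\le D$ nodes, so the total is still at most $D\cdot S$ as required. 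The remaining observations in your write-up (idempotent identity gate, output taken at level $D$) are fine.
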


The upshot of Lemma~\ref{lem:sync} is that for any length $\ell$, we can think of the gates of $C'$ at distance $\ell$ from the inputs as comprising a ``layer'' in the circuit.

A less combinatorial way of formulating the complexity class captured by polynomial-sized circuits is in terms of Turing machines with \emph{advice strings}.

\begin{fact}[See e.g. Theorem 6.11 in \cite{arora2009computational}]\label{fact:ppoly}
    A sequence of Boolean functions $\brc{f_n: \brc{\pm 1}^n\to\brc{\pm 1}}$ is in $\Ppoly$ if and only if there exists a sequence of \emph{advice strings} $\brc{\alpha_n}$, where $\alpha_n\in\brc{\pm 1}^n$ for $a_n \le \poly(n)$, and a Turing machine $M$ which runs for at most $\poly(n)$ steps and, for any $n\in\mathbb{N}$, takes as input any $x\in\brc{\pm 1}^n$ and the advice string $\alpha_n$ and outputs $M(x,\alpha_n) = f_n(x)$.
\end{fact}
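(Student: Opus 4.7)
The plan is to prove the two directions of this equivalence separately, using two standard simulation arguments: on one side simulating a circuit by a Turing machine that receives an encoding of the circuit as advice, and on the other side simulating a fixed-input-length Turing machine computation by a Boolean circuit in the style of the Cook--Levin reduction.

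For the forward direction, suppose $\{f_n\}\in\Ppoly$, so there is a sequence of circuits $\{C_n\}$ of size $S_n\le\poly(n)$ computing $\{f_n\}$. I would choose a canonical binary encoding $\langle C_n\rangle$ for each circuit (list its gates and their in-neighbors), padded to some length $a_n=\poly(n)$, and set $\alpha_n \triangleq \langle C_n\rangle$. The Turing machine $M$ is a universal circuit evaluator: on input $(x,\alpha_n)$, it parses $\alpha_n$ to recover the list of gates in topological order and then, scanning the list once, writes down the value at each gate using the previously computed values at its in-neighbors. Since the total work is linear in $|\alpha_n|+n$, this runs in time $\poly(n)$, and by construction $M(x,\alpha_n)=C_n(x)=f_n(x)$.

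For the backward direction, suppose a polynomial-time Turing machine $M$ and advice strings $\{\alpha_n\}$ of length $a_n\le\poly(n)$ are given, with $M(x,\alpha_n)=f_n(x)$ for every $x\in\{\pm 1\}^n$. Let $T(n)$ be a polynomial bound on the running time of $M$ on such inputs. I would invoke the standard Cook--Levin simulation, which produces, for each $n$, a Boolean circuit $C'_n$ of size $\poly(T(n))=\poly(n)$ over the $n+a_n$ input bits, computing the output of $M$ on those inputs. Concretely, one represents the tableau of $M$'s computation (tape contents, head position, state at each time step) by $\poly(n)$ Boolean variables and encodes the local transition relation of $M$ by a constant-size circuit at each cell; gluing these together yields $C'_n$. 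To obtain a circuit depending only on $x$, I would then hardwire the advice: substitute the constants of $\alpha_n$ into the appropriate inputs of $C'_n$ and propagate, yielding a circuit $C_n$ of size at most that of $C'_n$ on the remaining $n$ input bits. Since $C'_n$ has size $\poly(n)$, so does $C_n$, and it computes $f_n$ by construction, establishing $\{f_n\}\in\Ppoly$.

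Neither direction is technically deep; both are standard. The one step that requires a little care is the Cook--Levin piece, where one must (i) be precise that the tableau has $\poly(n)$ cells because $T(n)$ is polynomial and $M$ cannot scan more than $T(n)$ tape positions, and (ii) confirm that the input-substitution step only shrinks the circuit (each hardwired constant is propagated through any gate it feeds into), so the final size bound is preserved. I would also remark that, as in Lemma~\ref{lem:sync}, one may assume the constructed circuit is in a convenient layered form, though this is not needed for Fact~\ref{fact:ppoly} itself; it is useful later when translating circuits into ReLU networks.
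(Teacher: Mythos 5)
Your proof is correct and is the standard textbook argument. The paper does not prove this fact itself—it simply cites Theorem 6.11 of Arora--Barak—and both directions you give (encoding the circuit as advice for a universal circuit evaluator, and the Cook--Levin tableau simulation followed by hardwiring the advice bits) are exactly the argument that reference supplies.
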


This fact will be useful for translating discriminators computed by neural networks into discriminators given by polynomial-sized Boolean circuits.

\subsection{Local Pseudorandom Generators}
\label{sec:goldreich}

In the cryptography literature, it is widely believed that there exist so-called \emph{local PRGs} capable of fooling all polynomial-sized Boolean circuits and which are computed by \emph{local functions}, i.e. ones whose output coordinates are functions of a \emph{constant} number of input coordinates \cite{applebaum2006cryptography}. In our proof of Theorem~\ref{thm:main-informal}, we will work with this assumption.

% The following is a standard assumption in cryptography (see \cite{applebaum2016cryptographic}), namely that the incredibly simple functions in Definition~\ref{def:goldreich} fool all Boolean circuits of polynomial size:

% In this work, we will make the following assumption, namely that Goldreich's PRG is a valid pseudorandom generator with polynomial stretch.

Before stating the assumption formally, we first formalize what we mean by local functions:

\begin{definition}[Local functions]
    A function $G:\brc{\pm 1}^m\to\brc{\pm 1}^d$ is \emph{$k$-local} if there exist functions $P_1,\ldots,P_d: \brc{\pm 1}^k\to\brc{\pm 1}$ and subsets $S_1,\ldots,S_d\subseteq[m]$ of size $k$ for which $G(x) = (P_1(x_{S_1}),\ldots,P_d(x_{S_d}))$ for all $x\in\brc{\pm 1}^m$, where here $x_{S_i}\in\brc{\pm 1}^k$ denotes the substring of $x$ indexed by $S_i$. 
    
    We will sometimes refer to the functions $P_i$ as \emph{predicates}.
\end{definition}

\begin{assumption}\label{assume:localprg}
    There exist constants $c > 1$ and $k\in\mathbb{N}$ for which the following holds. There is a family of $k$-local functions $\brc{G_m: \brc{\pm 1}^m\to\brc{\pm 1}^{d(m)}}_{m\in\mathbb{N}}$ for which $d(m)\ge m^c$ for all $m$ sufficiently large and such that $\brc{G_m}$ $\negl(m)$-fools all polynomial-size Boolean circuits relative to $U_{d(m)}$ with seed $U_m$ for some negligible function $\negl:\N\to[0,1]$.\footnote{$g:\mathbb{N}\to\R_{\ge 0}$ is negligible if for every polynomial $p$, $g(n) < |1/p(n)|$ for all sufficiently large $n$.}
\end{assumption}

Assumption~\ref{assume:localprg} is typically referred to as the existence of ``PRGs in $\mathsf{NC}^0$ with polynomial stretch.'' We note that this is a standard cryptographic assumption; indeed, this was one of the ingredients leveraged in the recent breakthrough construction of indistinguishability obfuscation from well-founded assumptions \cite{jain2021indistinguishability}.

One prominent candidate family of $k$-local functions satisfying Assumption~\ref{assume:localprg} is given by Goldreich's construction \cite{goldreich2011candidate}:

\begin{definition}[\cite{goldreich2011candidate}]\label{def:goldreich}
    Let $H$ be a collection of $d$ subsets $S_1,\ldots,S_d$ of $\brc{1,\ldots,m}$, each of size $k$ and each sampled independently from the uniform distribution over subsets of $\brc{1,\ldots,m}$ of size $k$. Let $P: \brc{\pm 1}^k\to\brc{\pm 1}$ be some Boolean function.
    
    Let $G_{P,H}: \brc{\pm 1}^m\to\brc{\pm 1}^d$ denote the Boolean function whose $\ell$-th output coordinate is computed by evaluating $P$ on the coordinates of the input indexed by subset $S_{\ell}$ in $H$.
\end{definition}

We will revisit Definition~\ref{def:goldreich} in Section~\ref{sec:experiment} where we empirically demonstrate that Goldreich's construction is secure against neural network discriminators.

Finally, we stress that as discussed in Section~\ref{subsec:related}, there is significant evidence in favor of Assumption~\ref{assume:localprg} holding, in particular for Goldreich's construction. It is known that a variety of rich families of polynomial-time algorithms \cite{mossel2006varepsilon,cook2009goldreich,odonnell2014goldreich,feldman2018complexity} fail to discriminate, and it is also known that a weaker variant of Assumption~\ref{assume:localprg} in which the distinguishing advantage is only at most inverse polynomial follows from a variant of Goldreich's one-wayness assumption \cite{goldreich2011candidate}.

\paragraph{Why Not Just Assume PRGs?} While Assumption~\ref{assume:localprg} is widely believed, the reader may wonder whether we can derive the results in this paper from an even weaker cryptographic assumption, for instance the existence of PRGs \cite{haastad1999pseudorandom} (not necessarily computable by local functions). Unfortunately, the latter does not translate so nicely into generators computed by neural networks. In particular, if one implements a generator computed by an arbitrary polynomial-sized Boolean circuit as a neural network of comparable depth, the Lipschitz-ness of the network will be \emph{exponential} in the depth (see Appendix~\ref{app:junta}). In other words, the function family the generator comes from would be strictly stronger than the one the discriminator comes from. This would be significantly less compelling than our Theorem~\ref{thm:main-informal} which notably holds even when the former is significantly weaker than the latter.

\subsection{Diverse Distributions}
\label{sec:nonatomic}

Recall that the main result of this paper is to construct generators that look indistinguishable from natural target distributions $\calD^*$ according to any poly-sized neural network, but which are far from $\calD^*$ in Wasserstein. In this section we describe in greater detail the properties that these $\calD^*$ satisfy.

% Not every Lipschitz pushforward of $I_r$ in the sense of Lemma~\ref{lem:approx_pushforward_cube} will be a valid choice of $\calD^*$; for instance consider the extreme example where $\calD^*$ is the pushforward of $I_r$ under the zero map, in which case there is no way to simultaneously have $W_{\calF_{\outdim}}(G(U_{\temp}), \calD^*)$ small and $W_1(G(U_{\temp}), \calD^*)$ large. Roughly speaking, there has to be some lower bound on the entropy of $\calD^*$. \TODO{make informal and move to intro or prelims}

\begin{definition}\label{def:nonatomic}
    A distribution $\mu$ over $\R^d$ is $(N,\beta)$-diverse if for any discrete distribution $\nu$ on $\R^d$ supported on at most $N$ points, $W_1(\mu,\nu) \ge \beta$.
\end{definition}

Note that Definition~\ref{def:nonatomic} is a very mild assumption that simply requires that the distribution not be tightly concentrated around a few points. Distributions that satisfy Definition~\ref{def:nonatomic} are both practically relevant and highly expressive. For starters, any reasonable real-world image distribution will be diverse as it will not be concentrated around a few unique images.

We now exhibit various examples of natural distributions which satisfy Definition~\ref{def:nonatomic}, culminating in Lemma~\ref{lem:leakyrelu_non_atomic} which shows that random expansive neural networks with leaky ReLU activations yield diverse distributions. 

We first show that diverse distributions cannot be approximated by pushforwards of $U_m$ if $m$ is insufficiently large. This follows immediately from the definition of diversity:

\begin{lemma}\label{lem:generator_output_atomic}
    For any $0 < \beta < 1$, if $\calD^*$ is a $(2^m,\beta)$-diverse distribution over $\R^d$, then for any function $G: \brc{\pm 1}^m\to\R^d$, $W_1(G(U_m),\calD^*) \ge \beta$.
\end{lemma}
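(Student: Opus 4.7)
The plan is that this lemma follows essentially immediately from unpacking the definition of diversity together with the observation that the pushforward of a distribution supported on a finite set is supported on a finite set of at most the same cardinality.

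First I would note that $U_m$ denotes the uniform distribution on $\brc{\pm 1}^m$, which has exactly $2^m$ atoms. Since $G$ is an arbitrary (deterministic) function from $\brc{\pm 1}^m$ to $\R^d$, the pushforward $G(U_m)$ is supported on the image $G(\brc{\pm 1}^m) \subseteq \R^d$, a set of cardinality at most $2^m$. In particular, $G(U_m)$ is a discrete distribution on at most $2^m$ points of $\R^d$.

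Next I would apply Definition~\ref{def:nonatomic} directly: because $\calD^*$ is $(2^m, \beta)$-diverse, every discrete distribution $\nu$ on $\R^d$ supported on at most $2^m$ points satisfies $W_1(\calD^*, \nu) \ge \beta$. Taking $\nu = G(U_m)$ and using symmetry of $W_1$ then yields $W_1(G(U_m), \calD^*) \ge \beta$, which is exactly the desired conclusion.

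There is really no main obstacle here; the content of the statement is entirely bundled into Definition~\ref{def:nonatomic}, and the only thing to verify is the cardinality bound on the support of $G(U_m)$, which is immediate since $|\brc{\pm 1}^m| = 2^m$. No probabilistic estimates, coupling arguments, or constructions of Lipschitz witnesses are required\--- the lemma is effectively a restatement of diversity specialized to pushforwards of $U_m$.
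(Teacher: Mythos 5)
Your proof is correct and follows essentially the same route as the paper: observe that $G(U_m)$ is a discrete distribution supported on at most $2^m$ points, then invoke Definition~\ref{def:nonatomic} directly (together with symmetry of $W_1$). The paper's proof is just a more condensed version of the same argument.
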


\begin{proof}
    $G(U_m)$ is a uniform distribution on $2^m$ points, with multiplicity if there are multiple points in $\brc{\pm 1}^m$ that map to the same point in $\R^d$ under $G$, so the claim follows by definition of diversity.
\end{proof}

Next, we give some simple examples of diverse distributions.

\begin{lemma}[Discrete, well-separated distributions]\label{lem:uniform_non_atomic}
    For any $\alpha > 0$ and any $N,N'\in\mathbb{N}$ satisfying $N \le N'$. Let $\Omega\subseteq\R^d$ be a set of points such that for any $z,z'\in\Omega$, $\norm{z - z'} \ge \alpha$. Then the uniform distribution $\mu$ on any $N'$ points from $\Omega$ is $(N,\beta)$-diverse for $\beta = \alpha(1 - N/N')$.
\end{lemma}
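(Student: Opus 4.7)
By Kantorovich--Rubinstein duality (Equation~\eqref{eq:wasserstein}), it suffices to exhibit a single $1$-Lipschitz function $f:\R^d\to\R$ that separates $\mu$ and $\nu$ by at least $\alpha(1-N/N')$. Writing the support of $\nu$ as $\{y_1,\dots,y_N\}\subset\R^d$ (repeating points if $\nu$ has fewer than $N$ atoms), I would take the distance-to-support function
\[
    f(x) \;:=\; \min_{j\in[N]}\|x-y_j\|.
\]
This is $1$-Lipschitz as a pointwise minimum of $1$-Lipschitz maps and vanishes on $\mathrm{supp}(\nu)$, yielding $\E_\nu[f]=0$ for free. So everything reduces to lower bounding the $\mu$-average of $f$.

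\textbf{Key estimate via pigeonhole.} Writing the support of $\mu$ as $\{z_1,\dots,z_{N'}\}\subseteq\Omega$ and setting $d_i:=\min_j\|z_i-y_j\|$, the task is to lower bound $\E_\mu[f]=\frac{1}{N'}\sum_i d_i$. Assign each $z_i$ to its nearest $y_j$ via some map $\phi:[N']\to[N]$ (ties broken arbitrarily). The $\alpha$-separation hypothesis on $\Omega$ now enters through the following observation: for any two distinct $i_1,i_2$ with $\phi(i_1)=\phi(i_2)=j$, the triangle inequality gives
\[
    d_{i_1}+d_{i_2}\;=\;\|z_{i_1}-y_j\|+\|z_{i_2}-y_j\|\;\ge\;\|z_{i_1}-z_{i_2}\|\;\ge\;\alpha.
\]
Consequently, each of the $N$ fibers $\phi^{-1}(j)$ contains at most one $z_i$ within distance $\alpha/2$ of its assigned $y_j$. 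Summing over $j$, at most $N$ of the $z_i$'s satisfy $d_i<\alpha/2$, so at least $N'-N$ of them satisfy $d_i\ge\alpha/2$, giving $\E_\mu[f]\ge (\alpha/2)(1-N/N')$ immediately.

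\textbf{Main obstacle.} The pigeonhole above already delivers the bound up to a multiplicative constant, and this is the entirety of the conceptual argument. The only nontrivial step is recovering the sharp constant $\alpha(1-N/N')$ rather than $(\alpha/2)(1-N/N')$; I would do this by either (a) exploiting the \emph{full} set of $\binom{|\phi^{-1}(j)|}{2}$ pairwise triangle inequalities inside each non-singleton fiber to improve the per-fiber sum beyond the naive $|\phi^{-1}(j)|\cdot\alpha/2$, or (b) abandoning duality in favor of a direct analysis of the primal optimal transport LP, observing that each sink $y_j$ can absorb at most $1/N'$ mass from its unique close neighbor in $\Omega$, so a $(1-N/N')$-fraction of $\mu$'s mass is forced to travel a distance of order $\alpha$. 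Either route is elementary geometric bookkeeping; the overall strategy --- KR duality with the distance-to-support witness, combined with triangle-inequality pigeonhole --- is forced by the shape of the hypothesis.
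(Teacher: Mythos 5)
Your witness function and pigeonhole argument are correct, and the bound $(\alpha/2)(1-N/N')$ they deliver is in fact the right one for this statement---the ``obstacle'' you identify is not an artifact of a weak argument but a genuine feature of the problem. The stated constant $\alpha(1-N/N')$ is false as written: take $\Omega$ to be the three vertices of an equilateral triangle of side $\alpha$ in $\R^2$, so $N'=3$, $N=1$, and place the single atom of $\nu$ at the centroid. Then $W_1(\mu,\nu)=\alpha/\sqrt{3}<2\alpha/3=\alpha(1-1/3)$, so $\mu$ is not $(1,\,2\alpha/3)$-diverse.

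The paper's own proof has a gap exactly here. It observes that the indicator $f$ (of not lying in $\mathrm{supp}(\nu)$) is $(1/\alpha)$-Lipschitz when restricted to $\Omega$, extends $f|_\Omega$ to a $(1/\alpha)$-Lipschitz $\widetilde{f}$ on $\R^d$ via Kirszbraun, and then uses $\mathbb{E}[f(\nu)]=0$. But $\widetilde{f}$ is only guaranteed to agree with $f$ on $\Omega$; if $\mathrm{supp}(\nu)\not\subseteq\Omega$ there is no reason $\widetilde{f}$ should vanish on $\mathrm{supp}(\nu)$, and in the triangle example it cannot, since the centroid is within distance $\alpha/\sqrt{3}<\alpha$ of every vertex, forcing $\widetilde{f}(\text{centroid})\ge 1-1/\sqrt{3}>0$. (The conclusion is also missing the factor of $\alpha$ that comes from rescaling the $(1/\alpha)$-Lipschitz witness to a $1$-Lipschitz one.)

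So there are two honest ways to fix the statement: (i) keep $\beta=\alpha(1-N/N')$ but add the hypothesis $\mathrm{supp}(\nu)\subseteq\Omega$, under which the paper's Kirszbraun argument is sound; or (ii) keep $\nu$ unrestricted and weaken to $\beta=(\alpha/2)(1-N/N')$, which your distance-to-support witness already proves cleanly. Do not pursue your routes (a) or (b): using \emph{all} pairwise triangle constraints in a fiber of size $k$ still yields only $\sum_i d_i\ge k\alpha/2$ (attained with every $d_i=\alpha/2$), and the primal transport LP runs into the same configuration. Either fix is adequate for the paper's only use of this lemma, in the proof of Theorem~\ref{thm:bits}, where $\nu=G(U_m)$ is supported on $\Omega=\{\pm1\}^d$, $\alpha=2$, and all that is needed is $W_1\ge\Omega(1)$.
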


\begin{proof}
    Take any discrete distribution $\nu$ supported on at most $N$ points $y_1,\ldots,y_N$ in $\R^d$. 
    % Because $N' > N$ and the support of $\mu$ is $\alpha$-separated, there is some point in the support of $\mu$ which is Euclidean distance at least $\alpha/2$ from any point in the support of $\nu$. Suppose to the contrary. For every $i\in[N']$, let $j(i)$ denote the index of the point $y_{j(i)}$ closest to $z_i$, where we break ties arbitrarily. By pigeonhole, there exist distinct indices $i_1,i_2\in[N']$ for which $j(i_1) = j(i_2)$. Define $j\triangleq j(i_1) = j(i_2)$. But then \begin{equation}
    %     \norm{z_{i_1} - z_{i_2}} \le \norm{y_{j} - z_{i_1}} + \norm{y_{j} - z_{i_2}} < \alpha,
    % \end{equation}
    % a contradiction.
    % So let $i\in[N']$ denote the index of the point $z_i$ which is distance at least $\alpha/2$ from every $y_j$. 
    Consider the function $f: \R^d\to\R$: for any $y$ in the support of $\nu$, let $f(y) = 0$, and for any $y$ not in the support of $\nu$, let $f(y) = 1$. As a function from $\Omega$ to $\R$, where $\Omega$ inherits the Euclidean metric, $f$ is clearly $1/\alpha$-Lipschitz over $\Omega$. By Theorem~\ref{thm:kirszbraun}, there exists a $1/\alpha$-Lipschitz extension $\wt{f}: \R^d\to\R$ of $f$, and we have
    \begin{equation}
        \abs*{\E{f(\mu)} - \E{f(\nu)}} = \abs*{\E{f(\mu)}} \ge 1 - N/N',
    \end{equation}
     % so $W_1(G(U_m),U_d) > 1 - 2^{m-d} = 1 - 2^{m - m^c} \gg 1/3$ as claimed.
     so $W_1(\mu,\nu) \ge 1 - N/N'$ as desired.
\end{proof}

To show that certain continuous distributions are diverse, we use the basic observation that diversity follows from certain small-ball probability bounds.

\begin{definition}
    For a distribution $\calD$ over $\R^d$, define the L\'{e}vy concentration function $Q_{\calD}(r) \triangleq \sup_{x'\in\R^d}\Pr[x\sim\calD]{\norm{x - x'} \le r}$.
\end{definition}

\begin{lemma}\label{lem:levy}
    If a distribution $\calD$ over $\R^d$ satisfies $Q_{\calD}(r) \le \alpha$, then $\calD$ is $(N,r(1 - N\alpha))$-diverse.
\end{lemma}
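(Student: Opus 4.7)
The plan is to exhibit, for an arbitrary discrete distribution $\nu$ supported on at most $N$ points $y_1,\ldots,y_N$, an explicit $1$-Lipschitz witness function $f$ whose expectation gap under $\calD$ and $\nu$ is at least $r(1 - N\alpha)$; the conclusion then follows from the Kantorovich--Rubinstein dual characterization $W_1(\calD,\nu) = \sup_{\Lip(f)\le 1}\abs*{\E{f(\calD)} - \E{f(\nu)}}$.

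The natural candidate is the truncated distance-to-support function $f(x) \triangleq \min\bigl(r,\ \min_{i\in[N]}\norm{x - y_i}\bigr)$. First I would check that $f$ is $1$-Lipschitz: each map $x\mapsto\norm{x - y_i}$ is $1$-Lipschitz, the pointwise minimum of $1$-Lipschitz functions remains $1$-Lipschitz, and taking $\min$ against the constant $r$ preserves this. By construction $f(y_i) = 0$ for every point in the support of $\nu$, so $\E{f(\nu)} = 0$.

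The second step is the lower bound on $\E{f(\calD)}$. Let $B \triangleq \bigcup_{i=1}^N B(y_i,r)$. Whenever $x \notin B$, every $\norm{x - y_i} \ge r$, so $f(x) = r$; hence $f(x) \ge r\cdot \mathbf{1}[x\notin B]$. By a union bound together with the small-ball hypothesis $Q_{\calD}(r)\le\alpha$, we have $\Pr[x\sim\calD]{x\in B} \le N\alpha$, and therefore $\E[x\sim\calD]{f(x)} \ge r(1 - N\alpha)$. Combining this with $\E{f(\nu)} = 0$ gives $\abs*{\E{f(\calD)} - \E{f(\nu)}} \ge r(1 - N\alpha)$, and since $\nu$ was an arbitrary $N$-point discrete distribution, this establishes $(N,r(1-N\alpha))$-diversity.

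There is essentially no substantive obstacle here; the argument is a two-line application of the $W_1$ dual formulation plus a union bound. The only mildly technical point is verifying the Lipschitz property of the truncated minimum, which is standard. One could equivalently invoke Theorem~\ref{thm:kirszbraun} on a suitable indicator-like function as in Lemma~\ref{lem:uniform_non_atomic}, but the explicit formula above avoids the need for an extension argument.
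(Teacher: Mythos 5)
Your proof is correct and follows essentially the same route as the paper: both invoke the $W_1$ dual, union-bound the small-ball probability to control the $\calD$-mass of $\bigcup_i B(y_i,r)$, and exhibit a Lipschitz witness that is $0$ on the discrete support and large outside the union of balls. The only difference is cosmetic: the paper defines the witness as an indicator-like function on a partial domain and appeals to the Kirszbraun extension theorem, whereas your truncated distance-to-support function $f(x)=\min(r,\min_i\norm{x-y_i})$ writes the extension out explicitly and is already normalized to be $1$-Lipschitz, which is slightly cleaner.
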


\begin{proof}
    Take any $N$ points $z_1,\ldots,z_N\in\R^d$. By the bound on $Q_{\calD}(r)$, the union $S$ of the balls of radius $r$ around these points has Lebesgue measure at most $N\alpha$. Define the function $f: \brc{z_1,\ldots,z_N}\cup (\R^d\backslash S)\to\brc{0,1}$ to be zero on $\brc{z_1,\ldots,z_N}$ and one on $\R^d\backslash S$. This function is $1/r$-Lipschitz on its domain, so by Theorem~\ref{thm:kirszbraun} there is an extension $f':\R^d\to\R$ of $f$ which remains $1/r$-Lipschitz on its domain. Define the function $f^*(x)\triangleq |f(x)|$. Note that for $\mu$ the uniform distribution on $\brc{z_1,\ldots,z_N}$,
    \begin{equation}
        \abs*{\E{f(\mu)} - \E{f(\calD)}} = \abs{\E{f(\calD)}} \ge 1 - N\alpha, 
    \end{equation}
    so we conclude that $W_1(\mu,\calD) \ge r(1 - N\alpha)$.
\end{proof}

\begin{lemma}[Uniform distribution on box]\label{lem:Id_diverse}
    $I_d$ is $(N,1/2)$-diverse for $N \le \frac{1}{2}(d/18)^{d/2}$.
\end{lemma}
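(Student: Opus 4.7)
The plan is to apply Lemma~\ref{lem:levy} directly, using Fact~\ref{fact:vol_ball} to obtain a L\'{e}vy concentration bound on $I_d$. The key observation is that the density of $I_d$ is bounded by $1$ on all of $\R^d$ (being $1$ on $[0,1]^d$ and $0$ elsewhere), so for any center $x'\in\R^d$ and radius $r > 0$,
\begin{equation}
    \Pr[x\sim I_d]{\norm{x - x'} \le r} \le \mathrm{Vol}(B(x',r)) = r^d \cdot \mathrm{Vol}(B(0,1)).
\end{equation}
Taking $r = 1$ and invoking Fact~\ref{fact:vol_ball}, this gives $Q_{I_d}(1) \le (18/d)^{d/2}$.

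Now I would apply Lemma~\ref{lem:levy} with $r = 1$ and $\alpha = (18/d)^{d/2}$, which tells me that $I_d$ is $(N, 1 - N(18/d)^{d/2})$-diverse. For any $N\le \tfrac{1}{2}(d/18)^{d/2}$, we have $N(18/d)^{d/2}\le 1/2$, so the resulting diversity parameter is at least $1/2$, which yields the claim.

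There is essentially no obstacle here; the statement is a direct composition of Lemma~\ref{lem:levy} with the volume bound in Fact~\ref{fact:vol_ball}. The only minor subtlety is making sure to bound the ball probability using the ambient Lebesgue measure rather than restricting to $[0,1]^d$ (which would only help), and setting $r=1$ so that the $r^d$ factor drops out cleanly in the volume estimate.
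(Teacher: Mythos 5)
Your proof is correct and follows the same route as the paper: bound $Q_{I_d}(1)$ by the unit-ball volume via Fact~\ref{fact:vol_ball} and plug into Lemma~\ref{lem:levy} with $r=1$. You are slightly more explicit than the paper about why the Lebesgue bound applies (density of $I_d$ bounded by $1$), but the argument is the same.
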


\begin{proof}
    By Fact~\ref{fact:vol_ball}, $Q_{I_d}(r) \le (18r^2/d)^{d/2}$. Taking $r = 1$ and applying Lemma~\ref{lem:levy} allows us to conclude that $W_1(\mu,I_d) \ge 1 - N(18/d)^{d/2}$, from which the lemma follows.
\end{proof}

Finally, we show that pushforwards of the uniform distribution over $[0,1]^d$ by random expansive leaky ReLU networks are also diverse. Note that such networks can be implemented as ReLU networks, so our main Theorem~\ref{thm:main-informal} applies to such target distributions.

\begin{lemma}[Random expansive leaky ReLU networks]\label{lem:leakyrelu_non_atomic}
    For $k_0,\ldots,k_L\in\mathbb{N}$ satisfying $k_i \ge 1.1k_{i-1}$ for all $i\in[L]$, let $\vW_1\in\R^{k_1\times k_0},\vW_2\in\R^{k_2\times k_1},\ldots,\vW_L\in\R^{k_L\times k_{L-1}}$ be random weight matrices, where every entry of $\vW_i$ is an independent draw from $\calN(0,1/k_i)$. 
    % Let $b_1\in\R^{k_1},\ldots,b_L\in\R^{k_L}$ be random vectors, where every entry of $b_i$ is an independent draw from $\calN(0,1/2)$.
    For the function $F:\R^{k_0}\to\R^{k_L}$ given by
    $F(x) \triangleq \vW_L\psi_{\lambda}\left(\vW_{L-1}\psi_{\lambda}\left(\cdots \psi_{\lambda}(\vW_1 x) \cdots \right)\right)$,
    where $\psi_{\lambda}(z) = \psi_{\lambda}(z) = z/2 +(1/2 - \lambda)|z|$ is the leaky ReLU activation,
    $F(I_{k_0})$ is $(2^m,\beta)$-diverse for $m = (k_0/2)\log(k_0/2) - k_L/1.1 - 1 $ and $\beta = \Theta(\lambda)^L$ with probability at least $1 - \exp(-\Omega(d))$.
    
    For example, if $\lambda,L = \Theta(1)$, then $F(I_{k_0})$ is $(2^{\Omega(k_0\log k_0)}, \Omega(1))$-diverse for $k_0$ sufficiently large.
\end{lemma}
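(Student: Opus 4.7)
The plan is to invoke Lemma~\ref{lem:levy} by controlling the L\'evy concentration function of $\mu = F(I_{k_0})$. The central claim is a bi-Lipschitz \emph{lower} bound for $F$: on a high-probability event over the random weights, I will show that
\[
    \|F(x) - F(x')\| \ \ge\ c_1^L\, \lambda^{L-1}\, \|x - x'\| \qquad \forall\, x, x' \in \R^{k_0}
\]
for some absolute constant $c_1 > 0$. Granting this, the preimage $F^{-1}(B(y,r))$ has Euclidean diameter at most $2r/(c_1^L \lambda^{L-1})$, and since $I_{k_0}$ has Lebesgue density at most $1$, Fact~\ref{fact:vol_ball} yields the small-ball bound $Q_\mu(r) \le \bigl(72 r^2/(c_1^{2L}\lambda^{2L-2} k_0)\bigr)^{k_0/2}$. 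Choosing $r = \Theta(c_1^L \lambda^{L-1})$ so that $2^m \cdot Q_\mu(r) \le 1/2$, Lemma~\ref{lem:levy} gives $(2^m, \beta)$-diversity with $\beta = r/2 = \Theta(\lambda)^L$; solving the logarithmic inequality for $m$ produces the stated $m \le (k_0/2)\log(k_0/2) - k_L/1.1 - 1$, with the $-k_L/1.1$ slack absorbing absolute constants from the algebra and from the event on which the bi-Lipschitz bound holds.

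\textbf{The bi-Lipschitz lower bound.} This is built from two coordinate-level ingredients. First, a three-case analysis (both arguments non-negative, both negative, mixed signs) shows $|\psi_\lambda(z) - \psi_\lambda(z')| \ge \lambda\, |z - z'|$, which tensorizes to $\|\psi_\lambda(u) - \psi_\lambda(u')\| \ge \lambda\, \|u - u'\|$ in any dimension. Second, I control the minimum singular value of each layer by applying Theorem~\ref{thm:rudver} to the rescaled matrix $\sqrt{k_i}\, \vW_i$, which has i.i.d.\ $\calN(0,1)$ entries and dimensions $k_i \times k_{i-1}$. Since $k_i \ge 1.1\, k_{i-1}$, the gap $\sqrt{k_i} - \sqrt{k_{i-1} - 1}$ is $\Omega(\sqrt{k_i})$; picking $\epsilon$ to be a sufficiently small absolute constant (strictly below $1/C$) then gives $\sigma_{\min}(\vW_i) \ge c_1 = \Omega(1)$ with failure probability at most $(C\epsilon)^{k_i - k_{i-1} + 1} + e^{-c k_i} \le \exp(-\Omega(k_i - k_{i-1})) + \exp(-\Omega(k_i))$. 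A union bound over $i = 1, \ldots, L$ leaves all matrices simultaneously well-conditioned with probability at least $1 - \exp(-\Omega(d))$. On this event, chaining the two bounds through the $L$ weight matrices and $L-1$ activations of $F$ contributes a factor $c_1$ per $\vW_i$ and $\lambda$ per $\psi_\lambda$, giving the total $c_1^L \lambda^{L-1}$ as claimed.

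\textbf{Main obstacle.} The primary technical nuisance is making sure \emph{all} of $\sigma_{\min}(\vW_1), \ldots, \sigma_{\min}(\vW_L)$ are simultaneously bounded below by the same absolute constant $c_1$, while keeping the failure probability at $\exp(-\Omega(d))$. The expansion assumption $k_i \ge 1.1\, k_{i-1}$ is used twice here: once to ensure that the ``usable'' gap $\sqrt{k_i} - \sqrt{k_{i-1}-1}$ in Theorem~\ref{thm:rudver} is a constant fraction of $\sqrt{k_i}$, and once to keep the exponent $k_i - k_{i-1} + 1$ of the $(C\epsilon)^{\,\cdot\,}$ failure term of order $k_i$. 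After the bi-Lipschitz bound is secured, the volume calculation via Fact~\ref{fact:vol_ball} and the final inversion to extract $m$ and $\beta$ from Lemma~\ref{lem:levy} are essentially bookkeeping.
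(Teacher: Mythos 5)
Your proof is correct but follows a genuinely different route from the paper's. The paper proceeds inductively, layer by layer, via two small-ball pushforward lemmas: Lemma~\ref{lem:gaussian_lin_push} (random expansive linear maps shrink the L\'evy concentration radius by $\Theta(\sigma_{\min})$, proved via Theorem~\ref{thm:rudver} exactly as you use it) and Lemma~\ref{lem:leakyrelu_push} (pushing through entrywise leaky ReLU degrades the concentration level by a factor $2^d$, via a union bound over orthants). Unrolling the recursion accumulates a $2^{k_1+\cdots+k_{L-1}}$ loss, which is precisely the source of the $-k_L/1.1$ term in the stated $m$. You instead establish a single global bi-Lipschitz \emph{lower} bound $\|F(x)-F(x')\|\ge c_1^L\lambda^{L-1}\|x-x'\|$ — combining the coordinatewise fact $|\psi_\lambda(z)-\psi_\lambda(z')|\ge\lambda|z-z'|$ with $\sigma_{\min}(\vW_i)\ge c_1$ — and then bound $Q_{F(I_{k_0})}(r)$ in one shot by bounding the volume of a preimage of diameter $2r/(c_1^L\lambda^{L-1})$. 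This sidesteps the orthant union bound entirely and hence avoids the $2^{k_i}$ loss, so your argument actually proves a slightly \emph{sharper} version of the lemma (a larger admissible $m$, roughly $(k_0/2)\log_2(k_0/8)-O(1)$ with no $k_L$ subtraction), from which the paper's weaker stated $m$ follows a fortiori since $k_L/1.1\ge k_0$. The remaining ingredients — Rudelson–Vershynin with $\epsilon$ a small constant, the union bound over the $L$ layers with failure probability dominated by $\exp(-\Omega(k_1-k_0))=\exp(-\Omega(k_0))$, the volume bound from Fact~\ref{fact:vol_ball}, and the final inversion to read off $m$ and $\beta=\Theta(\lambda)^L$ — match the paper's, up to the usual looseness of replacing $\lambda^{L-1}$ by $\Theta(\lambda)^L$ which both arguments use.
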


We will prove this inductively by first arguing that pushing anticoncentrated distributions through leaky ReLU (Lemma~\ref{lem:leakyrelu_push}) or through mildly ``expansive'' random linear functions (Lemma~\ref{lem:gaussian_lin_push}) preserves anticoncentration to some extent:

\begin{lemma}\label{lem:leakyrelu_push}
    Let $0 < \lambda \le 1/2$. If a distribution $\calD$ over $\R^d$ satisfies $Q_{\calD}(r) \le \alpha$, then the pushforward $\calD' \triangleq \psi_{\lambda}(\calD)$ also satisfies $Q_{\calD'}(\lambda r) \le 2^d \alpha$, where here $\psi_{\lambda}(\cdots)$ denotes entrywise application of the leaky ReLU activation.
\end{lemma}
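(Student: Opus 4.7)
The plan is to exploit the piecewise linear structure of $\psi_\lambda$ and decompose by orthant. Note from the formula $\psi_\lambda(z) = z/2 + (1/2 - \lambda)|z|$ that $\psi_\lambda(z) = (1-\lambda)z$ for $z \ge 0$ and $\psi_\lambda(z) = \lambda z$ for $z < 0$. For each sign pattern $s \in \brc{\pm 1}^d$, let $R_s \triangleq \brc{x\in\R^d : \sgn(x_i) = s_i \text{ for all } i}$ and let $D_s$ be the diagonal matrix with $D_{s,ii} = 1-\lambda$ when $s_i = +1$ and $D_{s,ii} = \lambda$ when $s_i = -1$. Then $\psi_\lambda(x) = D_s x$ whenever $x \in R_s$, and since $\lambda \le 1/2$ we have $\sigma_{\min}(D_s) = \lambda$ for every $s$.

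Now I would fix an arbitrary reference point $y'\in\R^d$ and bound $\Pr_{x\sim\calD}[\norm{\psi_\lambda(x) - y'} \le \lambda r]$. The key calculation is that for $x\in R_s$,
\begin{equation}
\norm{\psi_\lambda(x) - y'} = \norm{D_s(x - D_s^{-1} y')} \ge \lambda \norm{x - D_s^{-1} y'},
\end{equation}
so the event $\brc{x \in R_s} \cap \brc{\norm{\psi_\lambda(x) - y'} \le \lambda r}$ is contained in the small-ball event $\brc{\norm{x - D_s^{-1} y'} \le r}$, whose $\calD$-probability is at most $Q_{\calD}(r) \le \alpha$ by hypothesis.

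Finally, I would union-bound this estimate over the $2^d$ sign patterns $s$, giving $\Pr_{x\sim\calD}[\norm{\psi_\lambda(x) - y'} \le \lambda r] \le 2^d \alpha$; taking the supremum over $y'$ gives $Q_{\calD'}(\lambda r) \le 2^d \alpha$. There is no real obstacle here: the only subtlety is verifying that $\psi_\lambda$ acts as the diagonal map $D_s$ on each orthant and that its minimum singular value is $\lambda$ (rather than $1-\lambda$), which is why the radius on the left-hand side shrinks by $\lambda$ and why the inverse-image ball has the clean radius $r$. The $2^d$ loss is the unavoidable cost of partitioning by orthant.
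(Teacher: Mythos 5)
Your proof is correct and follows essentially the same route as the paper's: both decompose $\R^d$ into the $2^d$ orthants, observe that $\psi_\lambda$ is an invertible diagonal linear map on each orthant with minimum singular value $\lambda$ (since $\lambda \le 1/2$), pull back a ball of radius $\lambda r$ to a ball of radius $r$ around a shifted center, and union-bound. Your matrix notation $D_s$ is a slightly cleaner way to organize the coordinatewise calculation the paper writes out explicitly, but the idea is identical.
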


\begin{proof}
    Consider any ball $B(\nu,\lambda r)$ in $\R^d$. Take any orthant $K_S$ of $\R^d$, given by points whose $i$-th coordinates are nonnegative for $i \in S$ and negative for $i\not\in S$. Let $B_S$ be the intersection of $B$ with this orthant. Then $\psi^{-1}_{\lambda}(B_S)$ consists of points $z\in K_S$ for which \begin{equation}
        \sum_{i\in S}(\lambda z_i - \nu_i)^2 + \sum_{i\not\in S}((1 - \lambda)z_i - \nu_i)^2 \le \lambda^2 r^2. \label{eq:ball_distance}
    \end{equation}
    We can rewrite the left-hand side of \eqref{eq:ball_distance} as
    \begin{equation}
        \lambda^2 \sum_{i\in S} (z_i - \nu_i\lambda)^2 + (1 - \lambda)^2\sum_{i\not\in S}(z_i - \nu_i/(1 - \lambda))^2 \ge \lambda^2\norm{z - \nu(S)}^2,
    \end{equation}
    where in the last step we used $\lambda \le 1/2$ and define the vector $\nu^S\in\R^d$ by \begin{equation}
        \nu^S_i = \begin{cases}
            \nu^S_i / \lambda & i \in S \\
            \nu^S_i / (1 - \lambda) & i \not\in S
        \end{cases}.
    \end{equation}
    In other words, $\psi^{-1}_{\lambda}(B_S)$ is contained in $K_S\cap B(\nu(S), r)$. In particular, \begin{equation}
        \psi^{-1}_{\lambda}(B)\subset \bigcup_S K_S\cap B(\nu(S),r),
    \end{equation} so 
    $\Pr[x\sim\calD']{x\in B} \le 2^d\cdot\alpha$ by a union bound.
\end{proof}

\begin{lemma}\label{lem:gaussian_lin_push}
    Suppose $n,d\in\mathbb{N}$ satisfy $n \ge (1+\gamma)d$ for some $\gamma > 0$. Let $\vW\in\R^{n\times d}$ be a matrix whose entries are independent draws from $\calN(0,1/n)$. If a distribution $\calD$ over $\R^d$ satisfies $Q_{\calD}(r) \le \alpha$, then for the linear map $f: x\mapsto \vW x$, the pushforward $\calD' \triangleq f(\calD)$ satisfies $Q_{\calD'}\left(\frac{\gamma r}{2(1+\gamma)}\right) \le \alpha$ with probability at least $1 - \exp(-\Omega(\gamma d))$.
\end{lemma}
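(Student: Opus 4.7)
The strategy is to reduce the small-ball control on the pushforward $\calD'$ to a small-ball control on $\calD$ via a lower bound on $\sigma_{\min}(\vW)$, and then invoke Theorem~\ref{thm:rudver} to obtain the desired high-probability lower bound on $\sigma_{\min}(\vW)$.

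First, I would establish a deterministic geometric step: if $\sigma_{\min}(\vW) \ge s > 0$, then for every $x_1,x_2 \in \R^d$ we have $\norm{\vW(x_1 - x_2)} \ge s\norm{x_1 - x_2}$, so the preimage under the linear map $f(x) = \vW x$ of any Euclidean ball $B(\nu,\rho)\subset\R^n$ has diameter at most $2\rho/s$, and is therefore contained in some ball of radius $2\rho/s$ in $\R^d$. Applied with $\rho = \frac{\gamma r}{2(1+\gamma)}$ and $s = \gamma/(1+\gamma)$, this gives a preimage contained in a ball of radius exactly $r$, so the hypothesis $Q_{\calD}(r) \le \alpha$ immediately yields $\Pr_{x\sim\calD}[\vW x \in B(\nu,\rho)]\le \alpha$, and taking a supremum over $\nu$ gives $Q_{\calD'}(\rho)\le \alpha$.

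The second step is the probabilistic one: show that $\sigma_{\min}(\vW) \ge \gamma/(1+\gamma)$ with probability at least $1 - \exp(-\Omega(\gamma d))$. Writing $\vW = \wt\vW/\sqrt{n}$ where $\wt\vW$ has i.i.d.\ $\calN(0,1)$ entries, this is equivalent to $\sigma_{\min}(\wt\vW) \ge \sqrt{n}\cdot\gamma/(1+\gamma)$. The hypothesis $n \ge (1+\gamma)d$ gives $\sqrt{d-1} \le \sqrt{n/(1+\gamma)}$ and hence
\begin{equation*}
\sqrt{n} - \sqrt{d-1} \ge \sqrt{n}\bigl(1 - 1/\sqrt{1+\gamma}\bigr) \ge \sqrt{n}\cdot c_0\cdot \tfrac{\gamma}{1+\gamma}
\end{equation*}
for a small absolute constant $c_0 > 0$ (using $1 - 1/\sqrt{1+\gamma} \ge \tfrac{\gamma}{2(1+\gamma)}$, which is elementary). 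Consequently, applying Theorem~\ref{thm:rudver} with $\epsilon = 1/c_0$ yields
\begin{equation*}
\Pr\bigl[\sigma_{\min}(\wt\vW) \le \sqrt{n}\cdot\gamma/(1+\gamma)\bigr] \le (C/c_0)^{n-d+1} + e^{-cn}.
\end{equation*}
Since $n - d + 1 \ge \gamma d$ and $n \ge \gamma d$, both terms are $\exp(-\Omega(\gamma d))$, provided $c_0$ is chosen small enough that $C/c_0 < 1$ (which one ensures by taking a slightly smaller constant in front of $\gamma/(1+\gamma)$ in the target singular-value bound and absorbing the slack into the final constants).

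Combining the two steps, with probability at least $1 - \exp(-\Omega(\gamma d))$ the event $\sigma_{\min}(\vW) \ge \gamma/(1+\gamma)$ holds, and on this event the deterministic step gives $Q_{\calD'}\bigl(\tfrac{\gamma r}{2(1+\gamma)}\bigr) \le \alpha$, as desired. I expect the main bookkeeping obstacle to be the choice of the absolute constant in Theorem~\ref{thm:rudver}: to absorb the factor $C$ from that theorem while still getting a genuine exponential in $\gamma d$, one has to choose the implicit constant in the target singular-value bound small enough, and this interacts with the denominator $2(1+\gamma)$ in the statement. Everything else is routine.
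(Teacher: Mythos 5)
Your proof is correct and takes essentially the same approach as the paper: condition on a high-probability lower bound for $\sigma_{\min}(\vW)$ via Theorem~\ref{thm:rudver}, then deterministically argue that the preimage of a small ball under $x\mapsto \vW x$ lies inside a small ball in $\R^d$. The only cosmetic difference is in that deterministic step (the paper writes $\nu=\vW\mu+\mu^\perp$ and uses the Pythagorean identity to get preimage radius $\rho/\sigma_{\min}(\vW)$, while your diameter argument gives $2\rho/\sigma_{\min}(\vW)$, a factor of two that is absorbed into the same implicit constant that must swallow the $C$ from Rudelson--Vershynin); your handling of the inequality $1-\sqrt{(d-1)/n}\ge \gamma/(2(1+\gamma))$ is in fact slightly more careful than the paper's.
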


\begin{proof}
    By Theorem~\ref{thm:rudver}, for any $\epsilon > 0$ we have that $\sigma_{\min}(\vW) \ge \epsilon\cdot\left(1 - \sqrt{\frac{d - 1}{n}}\right)$ with probability at least $1 - (C\epsilon)^{n - d +1} - e^{-cn}$. Taking $\epsilon = 1/2C$ and noting that $1 - \sqrt{\frac{d - 1}{n}} \ge \frac{\gamma}{1 + \gamma}$, we conclude that \begin{equation}
        \Pr*{\sigma_{\min}(\vW) \ge \frac{\gamma}{2(1 +\gamma)}} \ge 1 - \exp(-\Omega(\gamma d)).
    \end{equation}
    Condition on this event. Now for any $\nu\in\R^n$, if we write $\nu$ as $\vW \mu + \mu^{\perp}$ where $\mu^{\perp}$ is orthogonal to the column span of $\vW$, then $\norm{\vW x - \nu}^2 = \norm{\vW(x - \mu)}^2 + \norm{\mu^{\perp}}^2$. So $\norm{\vW x - \nu}\le \frac{\gamma r}{2(1+\gamma)}$ implies that $\norm{\vW(x - \mu)} \le \frac{\gamma r}{2(1+\gamma)}$. But because $\sigma_{\min}(\vW) \ge \frac{\gamma}{2(1 + \gamma)}$, we conclude that $\norm{x - \mu} \le r$, from which the lemma follows.
\end{proof}

We are now ready to prove Lemma~\ref{lem:leakyrelu_non_atomic}:

\begin{proof}[Proof of Lemma~\ref{lem:leakyrelu_non_atomic}]
    By Lemma~\ref{lem:levy} it suffices to bound the L\'{e}vy concentration function. We will induct on the layers of $F$. For $i\in[L]$, let $F^{(i)}$ denote the sub-network
    \begin{equation}
         \vW_i\psi_{\lambda}\left(\vW_{L-1}\psi_{\lambda}\left(\cdots \psi_{\lambda}(\vW_1 x) \cdots \right)\right),
    \end{equation} and let $\calD_i$ denote the pushforward $F^{(i)}(I_{k_0})$, which is a distribution over $\R^{k_i}$. We would like to apply Lemma~\ref{lem:gaussian_lin_push} to each of the weight matrices $\vW_1,\ldots,\vW_L$, so condition on the event that the lemma holds for these matrices, which happens with probability at least $1 - L\exp(-\Omega(\gamma d))$.

    Recalling from Fact~\ref{fact:vol_ball} that $Q_{I_{k_0}}(r) \le (18r^2/k_0)^{k_0/2}$ for any $r > 0$, we get from Lemma~\ref{lem:gaussian_lin_push} applied to $\vW_1$ that $Q_{\calD_1}\left(\frac{\gamma r}{2(1+\gamma)}\right) \le (18r^2/k_0)^{k_0/2}$.

    Suppose inductively that we have shown that $Q_{\calD_i}(r_i) \le \alpha_i$ for some $r_i,\alpha > 0$. Then by Lemma~\ref{lem:leakyrelu_push} and Lemma~\ref{lem:gaussian_lin_push} applied to weight matrix $\vW_{i+1}$, we conclude that
    \begin{equation}
        Q_{\calD_{i+1}}(r_{i+1}) \le \alpha_{i+1} \ \ \ \text{for} \ \ \ r_{i+1} = \frac{\lambda\gamma r_i}{2(1 + \gamma)}, \alpha_{i+1} = 2^{k_i}\alpha_i. \label{eq:recurse}
    \end{equation}
    Unrolling the recursion \eqref{eq:recurse}, we conclude that $Q_{\calD_L}(r_L) \le \alpha_L$ for
    \begin{equation}
         r_L = r\lambda^{L-1}\left(\frac{\gamma}{2(1+\gamma)}\right)^L \ge r\cdot \left(\frac{\lambda\gamma}{2(1 + \gamma)}\right)^L \ge r\cdot \left(\frac{\lambda}{2\cdot e^{1/\gamma}}\right)^L\label{eq:rl}
    \end{equation}
    \begin{equation}
        \alpha_L = 2^{k_1 + \cdots +k_{L-1}}(18r^2/k_0)^{k_0/2} \le 2^{k_L/\gamma} (18r^2/k_0)^{k_0/2}, \label{eq:al}
    \end{equation}
    where the inequality in \eqref{eq:al} follows from the fact that $k_1 + \cdots +k_{L-1} \le k_{L-1}(1 + 1/\gamma) \le k_L/\gamma$.
    By Lemma~\ref{lem:levy}, $F(I_{k_0}) = \calD_L$ is $(N,r_L(1 - N\alpha_L))$-diverse. The lemma follows by taking $r = 1/3$ and $2^m = N = 1/2\alpha_L$.
\end{proof}

\section{Fooling ReLU Network Discriminators Does Not Suffice}
\label{sec:main}

In this section we will show that even though a generative model looks indistinguishable from some target distribution $\calD^*$ according to any ReLU network in $\calF^*$, it can be quite far from $\calD^*$ in Wasserstein. We begin by describing a simple version of this result over discrete domains in Section~\ref{sec:bits_to_bits}. In Section~\ref{subsec:cts_out} we extend this to target distributions over continuous domains, but where the generator still takes in a discrete-valued seed. Finally, in Section~\ref{subsec:cts_in} we give a simple reduction that extends these results to give generators that take in a continuous-valued random (Gaussian) seed, culminating in the following main result:

\begin{theorem}\label{thm:cts_to_cts}
    Let $\brc{H_m}_m$ be a sequence of generators $H_m: \R^{r(m)}\to\R^{\outdim(m)}$ for $r(m), {\outdim}(m) \le \poly(m)$ whose output coordinates are computable by networks in $\calC^{\tau'(m),\Lambda'(m)}_{L'(m), S'(m), r(m)}$ for $\tau'(m), \Lambda'(m) \le \poly(m)$. Suppose that $H_m(I_{r(m)})$ is $(2^m,\Omega(1))$-diverse.

    Fix any $\epsilon:\mathbb{N}\to[0,1]$ satisfying $\epsilon(m) \ge \max(\negl(m),\exp(-O(m))$. Under Assumption~\ref{assume:localprg}, there is a sequence of generators $G_m: \R^m\to\R^{{\outdim}(m)}$ such that for all $m$ sufficiently large:
    \begin{enumerate}[leftmargin=*]
        \item Every output coordinate of $G_m$ is computable by a network in $\calC^{\tau,\Lambda}_{L,S,m}$ for \begin{align}
            \tau &= \max(O(\log(\Lambda'(m) \cdot m\cdot  d(m)/\epsilon(m))), \tau'(m),O(1)) \\
            \Lambda &= O({\Lambda'(m)}^2\poly(m)/\epsilon(m)) \\
            L &= L'(m) + O(1) \\
            S &= O(r(m)\log(1/\epsilon(m))) + 3m + S'(m)
        \end{align}
        \item $W_{\calF^*}(G_m(\gamma_m), H_m(I_{r(m)})) \le \epsilon(m)\cdot \poly(m)$
        \item $W_1(G_m(\gamma_m), H_m(I_{r(m)})) \ge \Omega(1)$.
    \end{enumerate}
\end{theorem}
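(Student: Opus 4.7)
The plan is to reduce to the setting where the seed distribution is a discrete uniform bit string. By the earlier result from Section~\ref{subsec:cts_out} applied to the same target $H_m(I_{r(m)})$, I obtain a discrete-seed generator $\wt{G}_m: \brc{\pm 1}^m \to \R^{\outdim(m)}$ whose output coordinates lie in an appropriate ReLU-network class, whose range is contained in a ball of $\poly(m)$ radius (since $H_m$ has polynomial Lipschitzness on a bounded input), and which $\epsilon(m)\poly(m)$-fools $\calF^*$ relative to $H_m(I_{r(m)})$. I then take $G_m = \wt{G}_m \circ B_m$, where $B_m : \R^m \to \R^m$ is a componentwise ``Gaussian-to-bits'' map.

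For $B_m$, each coordinate is the Lipschitz approximation to $\sgn$ given by $s_\delta(x) = \phi(x/\delta + 1) - \phi(x/\delta - 1) - 1$, a two-layer ReLU network with two hidden units that equals $\sgn(x)$ for $|x| \ge \delta$ and is linear on $[-\delta,\delta]$. Its Lipschitz constant is $1/\delta$ and its bit complexity is $O(\log(1/\delta))$. I will set $\delta = \Theta(\epsilon(m)/\poly(m))$ for a sufficiently large polynomial so that both the coupling error below and the final Lipschitzness bound match the statement.

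The core observation is that $\sgn(\gamma_m) \sim U_m$ exactly, so I can couple $U_m$ and $\gamma_m$ via $U_m \triangleq \sgn(\gamma_m)$. Under this coupling, $B_m(\gamma_m) = U_m$ (and therefore $G_m(\gamma_m) = \wt{G}_m(U_m)$) unless some coordinate of $\gamma_m$ lies in $(-\delta,\delta)$, an event of probability at most $O(m\delta)$. Consequently, for any $f \in \calF^*$ with polynomial Lipschitzness, the triangle inequality gives
\begin{align*}
|\E f(G_m(\gamma_m)) - \E f(H_m(I_{r(m)}))| &\le |\E f(G_m(\gamma_m)) - \E f(\wt{G}_m(U_m))| \\
&\quad + |\E f(\wt{G}_m(U_m)) - \E f(H_m(I_{r(m)}))|,
\end{align*}
where the second term is $\le \epsilon(m)\poly(m)$ by the discrete-seed result, and the first term is bounded by $O(m\delta)\cdot \Lip(f)\cdot \mathrm{diam} = \epsilon(m)\poly(m)$ using the polynomial-diameter range of $\wt{G}_m$ and the fact that the two outputs agree off the bad event. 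Item 3 follows by applying Lemma~\ref{lem:generator_output_atomic}: diversity of $H_m(I_{r(m)})$ forces $W_1(\wt{G}_m(U_m), H_m(I_{r(m)})) \ge \Omega(1)$, since $\wt{G}_m(U_m)$ is supported on at most $2^m$ atoms; combined with the analogous coupling bound $W_1(G_m(\gamma_m),\wt{G}_m(U_m)) = o(1)$ and the triangle inequality, this yields $W_1(G_m(\gamma_m), H_m(I_{r(m)})) \ge \Omega(1)$.

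Finally, the per-coordinate complexity bounds for $G_m$ follow from Lemma~\ref{lem:compose_complex} applied to each output coordinate of $\wt{G}_m$ composed with $B_m$: this contributes $O(1)$ extra depth, $O(m)$ extra hidden units, a multiplicative $\sqrt{m}/\delta$ factor in Lipschitzness, and the bit-complexity overhead $O(\log(1/\delta))$. The main quantitative obstacle is the tradeoff behind the choice of $\delta$: it must be small enough that the bad-event probability $O(m\delta)$ times the polynomial discriminator Lipschitzness and output diameter fits inside the $\epsilon(m)\poly(m)$ fooling budget, yet not so small that $1/\delta$ drives the composed Lipschitzness beyond the polynomial regime required for the membership of $G_m$'s coordinates in the claimed class. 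Taking $\delta = \Theta(\epsilon(m)/\poly(m))$ for an appropriately chosen polynomial makes both constraints hold simultaneously, completing the plan.
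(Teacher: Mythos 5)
Your proposal is correct and follows essentially the same route as the paper: the paper's Theorem~\ref{thm:cts_to_cts} is proved by substituting the discrete-seed generator from Theorem~\ref{thm:bits_to_cts} into Lemma~\ref{lem:gauss_to_bit}, whose pre-processing map $h_\xi$ is exactly your $s_\delta$, and whose coupling argument (condition on the event that all $|\gamma_i| \ge \xi$) is the same one you use. The only cosmetic difference is that the paper bounds the bad-event contribution via the Lipschitzness of $f\circ \wt{G}_m$ and the input perturbation $\|\sgn(h_\xi(g)) - h_\xi(g)\|\le\sqrt{m}$, while you bound it via the output-range diameter; both yield the same $\poly(m)/\epsilon(m)$ tradeoff in $\delta$.
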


Note that a natural choice of parameters for $H_m$ would be \begin{equation}
    \tau'(m)\le O(\log m), \ \ \Lambda'(m),S'(m),1/\epsilon(m)\le \poly(m), \ \ L'(m) \le O(1).
\end{equation}
(In fact, the $\poly(m)$ factor in bullet point 2 simply comes from the Lipschitzness of $H_m$, so $\epsilon(m)$ only needs to scale inversely in this quantity for $W_{\calF^*}$ to be small.)
Altogether, we conclude that $G_m$'s output coordinates are computable by \emph{constant-depth ReLU networks} with polynomial size and Lipschitzness and logarithmic bit complexity $\tau$.

\subsection{Stretching Bits to Bits}
\label{sec:bits_to_bits}

As a warmup, in this subsection we prove the following special case of Theorem~\ref{thm:cts_to_cts} when the target distribution and seed distribution are discrete.

\begin{theorem}\label{thm:bits}
    Under Assumption~\ref{assume:localprg}, for any constant $c > 1$, there is a sequence of generators $G_m: \R^m\to\R^{d(m)}$ for $d(m) \ge m^c$ such that for all $m$,
    \begin{enumerate}[leftmargin=*]
        \item Every output coordinate of $G_m$ is computable by a network in $\calC^{\tau,\Lambda}_{L,S,m}$ for $\tau, \Lambda, L, S = O_c(1)$
        \item $W_{\calF^*}(G_m(U_m), U_{d(m)}) \le \negl(m)$
        \item $W_1(G_m(U_m), U_{d(m)}) \ge \Omega(1)$.
    \end{enumerate}
\end{theorem}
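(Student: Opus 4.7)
The plan is to lift the $k$-local PRG from Assumption~\ref{assume:localprg} into a constant-complexity ReLU-network generator and then transfer its cryptographic security to security against all polynomial-size Lipschitz ReLU-network discriminators via a threshold-and-simulate reduction.

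\emph{Construction.} By Assumption~\ref{assume:localprg} there exist constants $c^\ast > 1$ and $k\in\mathbb{N}$ and a $k$-local PRG $G^\flat_m : \brc{\pm 1}^m \to \brc{\pm 1}^{m^{c^\ast}}$ whose $i$-th output has the form $P_i(x_{S_i})$ for some predicate $P_i : \brc{\pm 1}^k \to \brc{\pm 1}$. For an arbitrary target stretch constant $c > 1$, we either truncate this PRG (if $c \le c^\ast$) or compose it with itself a constant number of times (if $c > c^\ast$) to obtain a $k'$-local PRG with output length $\ge m^c$ for some $k' = O_c(1)$; security under composition follows from a standard hybrid argument, and locality only blows up by a constant factor. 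Each $k'$-bit Boolean predicate admits a DNF realization as a ReLU network $\wt P_i : \R^{k'} \to \R$ of $O_c(1)$ size, depth, Lipschitzness, and bit complexity which agrees with $P_i$ on $\brc{\pm 1}^{k'}$. Wiring $\wt P_i$ to the coordinates indexed by $S_i$ yields a generator $G_m : \R^m \to \R^{d(m)}$ whose output coordinates lie in $\calC^{\tau,\Lambda}_{L,S,m}$ with $\tau,\Lambda,L,S = O_c(1)$, and by design $G_m(U_m)$ is the same $\brc{\pm 1}^{d(m)}$-valued distribution as the boosted PRG, verifying (1).

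\emph{Wasserstein lower bound.} $G_m(U_m)$ is supported on at most $2^m$ points, whereas $U_{d(m)}$ is uniform on $2^{d(m)} \ge 2^{m^c}$ points of $\brc{\pm 1}^{d(m)}$ whose pairwise Euclidean distance is at least $2$. By Lemma~\ref{lem:uniform_non_atomic} applied with $\alpha = 2$, $U_{d(m)}$ is $(2^m, \Omega(1))$-diverse for $m$ large, and Lemma~\ref{lem:generator_output_atomic} then immediately yields $W_1(G_m(U_m), U_{d(m)}) \ge \Omega(1)$, giving (3).

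\emph{Fooling discriminators.} Suppose for contradiction that some $\brc{f_d} \in \calF^*$ distinguishes $G_m(U_m)$ from $U_{d(m)}$ with advantage $\eta(m) \ge 1/\poly(m)$. Since $f_d$ is $\Lambda$-Lipschitz with $\Lambda \le \poly(m)$ and the hypercube has diameter $2\sqrt{d(m)}$, after subtracting a constant we may assume $|f_d(x)| \le M$ on $\brc{\pm 1}^{d(m)}$ for some $M = \poly(m)$. Expanding $\E{f_d(X)} = \int_{-M}^M \Pr{f_d(X) > t}\,dt - M$ and subtracting the analog for $Y \sim U_{d(m)}$, pigeonholing produces a threshold $t^\ast \in [-M,M]$ for which $h(x) = \mathbf{1}[f_d(x) > t^\ast]$ distinguishes the two distributions with advantage at least $\eta(m)/(2M) \ge 1/\poly(m)$. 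Because weights of $f_d$ have bit complexity $\le \poly(m)$ and each layer expands intermediate bit complexity by at most an additive $O(\log S + \tau)$, all intermediate activations remain $\poly(m)$-bit across the $L \le \poly(m)$ layers, so $h$ restricted to $\brc{\pm 1}^{d(m)}$ is computable by a $\poly(m)$-size Boolean circuit. Invoking Fact~\ref{fact:ppoly} with this circuit as advice contradicts Assumption~\ref{assume:localprg}, forcing $\eta(m) \le \negl(m)$ and giving (2). The crux of the argument, and its main potential pitfall, is this final simulation: one must verify simultaneously that the Lipschitz bound on $f_d$ controls $M$ so that the threshold trick loses only a polynomial factor in advantage, and that intermediate bit growth through the layers of $f_d$ remains polynomial so that the Boolean circuit realizing $h$ stays polynomial-sized.
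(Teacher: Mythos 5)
Your proof is correct and follows the paper's high-level blueprint (local PRG $\to$ constant-complexity ReLU generator via a constant-size network realizing each $k$-local predicate, diversity of $U_{d}$ $\to$ the Wasserstein gap, threshold-of-discriminator $\to$ polynomial-size Boolean circuit contradicting Assumption~\ref{assume:localprg}), but you replace the paper's threshold-existence step with a genuinely simpler argument. The paper proves a standalone Lemma~\ref{lem:exist_thres} that uses sub-Gaussian concentration of the discriminator's output (via McDiarmid, Corollary~\ref{cor:lipschitz_cube}) together with a truncation argument (Lemma~\ref{lem:trunc_mean}) to exhibit a threshold with advantage $\wt\Omega(\alpha/\sigma)$. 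You instead observe that $f_d$ restricted to $\brc{\pm 1}^{d}$ is bounded by $M = \poly(m)$ (by Lipschitzness and the hypercube's diameter), expand $\E{f_d(X)} - \E{f_d(Y)} = \int_{-M}^{M}\bigl(\Pr{f_d(X) > t} - \Pr{f_d(Y) > t}\bigr)\,dt$, and pigeonhole to get a threshold $t^\ast$ with advantage $\ge \eta(m)/(2M) = 1/\poly(m)$. This is more elementary, avoids the concentration machinery entirely, and suffices because advantage need only be inverse-polynomial. You also explicitly address boosting the stretch to arbitrary $m^c$ via constant-many self-compositions of the local PRG, something the paper's proof elides. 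Two small points to tighten: (i) note that $t^\ast$ can be rounded to $\R_{\poly(m)}$ without changing $h$ on $\brc{\pm 1}^{d(m)}$, since $f_d$'s values on the hypercube are $\poly(m)$-bit rationals (this matters for the circuit/advice bound); (ii) the phrase ``invoking Fact~\ref{fact:ppoly} with this circuit as advice'' is muddled — either you already have a polynomial-size circuit and contradict Assumption~\ref{assume:localprg} directly, or you present a Turing machine taking the network parameters and $t^\ast$ as advice and then apply Fact~\ref{fact:ppoly}, as the paper does through Lemma~\ref{lem:thres_turing}.
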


We emphasize that in this discrete setting, our quantitative guarantees are even stronger: \emph{all parameters $\tau,\Lambda,L,S$ of the generator are constant}, and no polynomial-sized ReLU network can distinguish between $G_m(U_m)$ and $U_{d(m)}$ with even \emph{non-negligible} advantage.

As discussed in the introduction, a basic but important building block in the proof of Theorem~\ref{thm:bits} is the connection between local PRGs and generative models computed by neural networks of constant depth/size/Lipschitzness. We begin by elaborating on this connection and showing that any predicate $\brc{\pm 1}^k\to\brc{\pm 1}$ can be implemented as a network in $\calC^{\tau,\Lambda}_{L,S,d}$ where $\tau,\Lambda, L, S= O_k(1)$. 

\begin{lemma}\label{lem:predicate}
    For any function $P: \brc{\pm 1}^k\to\brc{\pm 1}$, there is a collection of $k$ weight matrices $\vW_1,\ldots,\vW_{k}$ with entries in $\R_{O(k)}$ for which
    \begin{equation}
        P(x) = \vW_{k}\phi(\cdots\phi(\vW_1 x)\cdots) \label{eq:Pasnet}
    \end{equation} for all $x\in\brc{\pm 1}^k$, and for which $\norm{\vW_i} \le O(1)$. Furthermore, the size of the network on the right-hand side of \eqref{eq:Pasnet} is at most $O(2^k\cdot k)$.
\end{lemma}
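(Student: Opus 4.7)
The plan is to realize $P$ as a constant-depth (in fact, depth-3) ReLU network with no biases, then pad up to depth $k$ if needed. The core observation is that for any $y\in\brc{\pm 1}^k$ and $x\in\brc{\pm 1}^k$,
\begin{equation*}
    \mathbf{1}[x = y] \;=\; \phi\!\left(1 - \tfrac{k}{2} + \tfrac{1}{2}\iprod{x,y}\right),
\end{equation*}
since the argument equals $1$ when $x=y$ (zero Hamming differences) and is nonpositive otherwise. Consequently
$P(x) = 2\sum_{y:\, P(y)=1}\mathbf{1}[x=y] - 1$,
which reduces the task to implementing this sum-of-indicators by a shallow bias-free network.

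The main obstacle is the lack of bias terms in the display \eqref{eq:Pasnet}. I would resolve this using the identity $\phi(x_1)+\phi(-x_1) = |x_1| = 1$ for $x_1\in\brc{\pm 1}$, which lets us manufacture the constant $1$ from the input itself. Concretely, the first layer $\vW_1\in\R^{2k\times k}$ computes
$h^{(1)} = (\phi(x_1),\phi(-x_1),\ldots,\phi(x_k),\phi(-x_k))$,
so that $x_i = h^{(1)}_{2i-1} - h^{(1)}_{2i}$ and $1 = h^{(1)}_1 + h^{(1)}_2$. The second layer $\vW_2 \in \R^{(N+1)\times 2k}$, with $N = |\brc{y: P(y) = 1}|$, has one row per such $y$ that linearly recombines $h^{(1)}$ to equal $1 - k/2 + \iprod{x,y}/2$, plus one extra row computing $h^{(1)}_1 + h^{(1)}_2 = 1$. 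After ReLU this yields $\mathbf{1}[x=y]$ at the $y$-coordinate and $\phi(1)=1$ at the extra coordinate. Finally $\vW_3\in\R^{1\times(N+1)}$ with entries $[2,\ldots,2,-1]$ outputs $2\sum_{y}\mathbf{1}[x=y] - 1 = P(x)$.

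It remains to verify the quantitative bounds. The total hidden width is $2k + N + 1 \le O(2^k)$, well within the stated size $O(2^k\cdot k)$. Every nonzero weight lies in $\brc{\pm 1/2,\pm 1,\pm 2,\pm(1-k/2)}$, hence is an integer multiple of $1/2$ of magnitude at most $k/2$, so all entries lie in $\R_{O(\log k)}\subseteq\R_{O(k)}$. For operator norms: $\vW_1$ has rows $\pm e_i$ with each column appearing twice, so $\norm{\vW_1}=\sqrt 2$; for $\vW_2$ and $\vW_3$ the Frobenius norms are $O_k(1)$ by direct row counting (each row norm is $O(k)$ and there are $O(2^k)$ rows, giving $\norm{\vW_2}_F = O(k\cdot 2^{k/2})$ and $\norm{\vW_3} = O(2^{k/2})$), which is $O(1)$ when $k$ is treated as a constant. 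To match the exact count of $k$ weight matrices in \eqref{eq:Pasnet}, I would pad the depth using the bias-free identity $v = \phi(v) - \phi(-v)$ applied coordinatewise, which preserves operator norms up to a constant factor and adds no biases; this is routine bookkeeping once the core depth-3 construction is in place.
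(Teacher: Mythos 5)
Your proof is correct, and it takes a genuinely different route from the paper. The paper uses the Fourier expansion $P = \sum_S \widehat{P}[S]\prod_{i\in S}x_i$ and implements each monomial $\prod_{i\in S}x_i$ by a recursive $|S|$-layer construction built from $x_1x_2 = \phi(x_1+x_2)+\phi(-x_1-x_2)-\phi(x_2)-\phi(-x_2)$, then combines the $2^k$ monomial sub-networks via a ``linear combination of networks'' lemma. Your approach is a lookup-table/DNF construction: represent $P(x) = 2\sum_{y:P(y)=1}\mathbf{1}[x=y]-1$ and realize each indicator by a single ReLU unit $\phi(1-k/2+\tfrac12\iprod{x,y})$, with the constant $1$ manufactured bias-free via $|x_1| = \phi(x_1)+\phi(-x_1)$. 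Both are exponential-size constant-depth constructions; yours gives depth $3$ natively (vs.\ depth $\le k$ for the Fourier route) and size $O(2^k)$ rather than $O(2^k\cdot k)$, while the paper's is more ``structured'' and already uses a rescaling step to equalize norms.

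One thing to tighten: as you note, $\norm{\vW_2} = \Theta(k\,2^{k/2})$ and $\norm{\vW_3} = \Theta(2^{k/2})$ are not $O(1)$ as absolute constants, only as $O_k(1)$. The clean fix is the same one the paper invokes: since all biases are zero, ReLU is positively homogeneous, so you may multiply $\vW_i$ by $c_i > 0$ with $\prod_i c_i = 1$ without changing the computed function. After padding to $k$ layers, the product of operator norms is $\exp(O(k))$, so choosing $c_i$ (rounded to nearby powers of two, to keep entries dyadic) makes each rescaled $\norm{\vW_i} = O(1)$ while keeping entries in $\R_{O(k)}$; without this the $O(1)$ bound is really $\exp(O(k))^{1/k}$ only after the rescaling. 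Also a small slip: the entries of $\vW_2$ are not all in $\brc{\pm 1/2,\pm 1,\pm 2,\pm(1-k/2)}$ --- the two positions carrying the constant have entries $1-k/2\pm 1/2$ --- but the conclusion (half-integer entries of magnitude $O(k)$, hence in $\R_{O(\log k)}$) is unaffected. Finally, your padding works, but the simplest option is even cheaper: since the post-ReLU activations are already nonnegative, inserting identity matrices followed by $\phi$ leaves the function unchanged because $\phi\circ\phi = \phi$, with padding matrices of norm exactly $1$.
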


This construction was given in Lemma A.2 of \cite{chen2020learning}; in Appendix~\ref{app:junta} we include a proof for completeness to make explicit the norm bound and dependence on $k$. As an immediate consequence, we get:

\begin{corollary}\label{cor:prginG}
    For any $k\in\mathbb{N}$ and any $k$-local function $G:\brc{0,1}^m\to\brc{0,1}^{d}$, every output coordinate of $G$ can be computed by a networks in $\calC^{\tau,\Lambda}_{L,S,m}$ for $\tau = O(k), \Lambda = \exp(O(k)), L = k, S = O(2^k k)$.
    % In particular, if $k$ is an absolute constant, then $G_{P,H}\in\calG_{m,d}$.
\end{corollary}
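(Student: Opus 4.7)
The plan is to derive this as a direct consequence of Lemma~\ref{lem:predicate} together with the observation that a $k$-local function is, coordinate-wise, just a predicate on $k$ bits preceded by a coordinate projection. By definition of $k$-locality, each output coordinate satisfies $G_j(x) = P_j(x_{S_j})$ for some predicate $P_j:\{\pm 1\}^k\to\{\pm 1\}$ and some subset $S_j\subseteq[m]$ of size $k$. The projection $\pi_{S_j}: \R^m\to\R^k$ that extracts the coordinates indexed by $S_j$ is linear and has entries in $\{0,1\}$. (If one insists on reading the domain/codomain as $\{0,1\}^m$ and $\{0,1\}^d$ rather than $\{\pm 1\}^m$ and $\{\pm 1\}^d$, one simply pre- and post-composes with affine maps $y \mapsto 2y-1$ and $z\mapsto (z+1)/2$, which contribute only $O(1)$ to bit complexity and Lipschitzness and do not change depth or size.)

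First I would apply Lemma~\ref{lem:predicate} to $P_j$, obtaining weight matrices $\vW_1,\ldots,\vW_k$ with entries in $\R_{O(k)}$, operator norms $O(1)$, and total width $O(2^k\cdot k)$, realizing $P_j$ on $\R^k$. Next, I would absorb $\pi_{S_j}$ into the first weight matrix: replace $\vW_1\in\R^{k_1\times k}$ with $\vW_1' \triangleq \vW_1\cdot \pi_{S_j}\in\R^{k_1\times m}$. Since $\pi_{S_j}$ is a coordinate selector, $\vW_1'$ is obtained from $\vW_1$ by placing its columns into the positions indexed by $S_j$ and zero-padding elsewhere; thus the entries of $\vW_1'$ remain in $\R_{O(k)}$ and $\|\vW_1'\|_{\op} = \|\vW_1\|_{\op} = O(1)$. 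This yields a ReLU network on $\R^m$ of depth exactly $k$ that computes $G_j$.

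To conclude, I would verify the four advertised parameters. Depth is $L = k$ since the projection was absorbed into $\vW_1$ and no new layer was added. Size is $S = O(2^k k)$, directly inherited from Lemma~\ref{lem:predicate}. The Lipschitz constant is bounded by the product of the operator norms of the weight matrices (ReLU is $1$-Lipschitz), giving $O(1)^k = \exp(O(k))$. Bit complexity is $\tau = O(k)$ because every weight and bias entry, including those of the absorbed $\vW_1'$, lies in $\R_{O(k)}$.

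There is no real obstacle here: the heavy lifting is done by Lemma~\ref{lem:predicate}, and this corollary is mostly a bookkeeping exercise. The only points that require minor care are (i) confirming that absorbing the coordinate projection into $\vW_1$ does not inflate the bit complexity or spectral norm (it does not, since $\pi_{S_j}$ is $0/1$-valued with orthonormal rows) and (ii) that we may legitimately speak of a network on $\R^m$ of the same depth, size, and Lipschitzness as the $\R^k$-network from Lemma~\ref{lem:predicate}, which is justified precisely because the projection contributes no hidden neurons.
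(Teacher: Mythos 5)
Your proposal is correct and is precisely the argument the paper has in mind: the paper gives no explicit proof of Corollary~\ref{cor:prginG}, calling it ``an immediate consequence'' of Lemma~\ref{lem:predicate}, and your derivation (apply the lemma to each predicate $P_j$, absorb the $0/1$ coordinate-selection matrix $\pi_{S_j}$ into $\vW_1$, and note that this zero-padding preserves operator norm, bit complexity, depth, and size) is exactly the intended bookkeeping, including the correct handling of the $\{0,1\}$ versus $\{\pm 1\}$ encoding via affine pre/post-composition absorbed into the first and last weight layers.
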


Before we use this to prove Theorem~\ref{thm:bits}, we need an extra technical ingredient to formalize the fact that a discriminator given by a ReLU network of polynomially bounded complexity yields a discriminator computable by a polynomial-sized Boolean circuit. The idea is that if $W_{\calF^*}$ is large so that there exists some ReLU network discriminator, then because the input to the discriminator is sufficiently well concentrated, some affine threshold of the ReLU network can distinguish between the two distributions. Moreover as we show in Lemma~\ref{lem:thres_turing} in Appendix~\ref{app:thres}, such thresholds can be computed in $\Ppoly$.

\begin{lemma}\label{lem:exist_thres}
    Given independent $X$ and $Y$ such that $\E{Y} - \E{X} = \alpha$ and for which $X - \E{X}$ and $Y - \E{Y}$ are $\sigma^2$-sub-Gaussian, there exists a threshold $t \in [\E{X} - O(\sigma\sqrt{\log(\sigma/|\alpha|)}), \E{Y} + O(\sigma\sqrt{\log(\sigma/|\alpha|)})]$ for which
    $|\Pr{X > t} -  \Pr{Y > t}| \ge\min\left(1/2,\wt{\Omega}(|\alpha|/\sigma)\right)$.
\end{lemma}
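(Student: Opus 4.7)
Without loss of generality assume $\alpha > 0$ (otherwise swap the roles of $X$ and $Y$). Writing $F_X(t), F_Y(t)$ for the CDFs, the entire argument hinges on the standard tail-integral identity
\begin{equation*}
\int_{-\infty}^{\infty}\bigl(F_X(t) - F_Y(t)\bigr)\,dt \;=\; \E{Y} - \E{X} \;=\; \alpha,
\end{equation*}
combined with the observation that $\Pr{Y > t} - \Pr{X > t} = F_X(t) - F_Y(t)$. So it suffices to exhibit some $t$ inside the prescribed window at which $F_X(t) - F_Y(t) \ge \min(1/2, \widetilde\Omega(\alpha/\sigma))$.

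The argument splits into two cases. In the easy regime $\alpha \ge c\sigma$ for a large absolute constant $c$, the midpoint $t = (\E{X} + \E{Y})/2$ works: the sub-Gaussian tail bound gives $\Pr{X > t}, \Pr{Y \le t} \le \exp(-\alpha^2/(8\sigma^2))$, which is at most $1/4$ for $c$ sufficiently large, yielding a difference of at least $1/2$. So assume $\alpha < c\sigma$, and set $T := C\sigma\sqrt{\log(\sigma/\alpha)}$ for a large constant $C$ to be chosen.

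The main step is to localize the mass of the identity's left-hand side to the interval $[\E{X} - T, \E{Y} + T]$. For $t \le \E{X} - T$, sub-Gaussianity of $X - \E{X}$ yields $F_X(t) \le \exp(-(\E{X} - t)^2/(2\sigma^2))$, and since $\E{Y} > \E{X}$ the same bound applies to $F_Y(t)$ (in fact tighter). A standard Gaussian-tail integral gives
\begin{equation*}
\int_{-\infty}^{\E{X} - T}\bigl(F_X(t) + F_Y(t)\bigr)\,dt \;\le\; O(\sigma^2/T)\cdot e^{-T^2/(2\sigma^2)} \;=\; O(\sigma/C\sqrt{\log(\sigma/\alpha)})\cdot(\alpha/\sigma)^{C^2/2},
\end{equation*}
which for $C$ a large enough constant is at most $\alpha/4$. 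Symmetrically, $\int_{\E{Y}+T}^{\infty}(1 - F_X(t)) + (1 - F_Y(t))\,dt \le \alpha/4$. Subtracting these from the identity gives $\int_{\E{X} - T}^{\E{Y} + T}(F_X(t) - F_Y(t))\,dt \ge \alpha/2$.

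Since this interval has length $\alpha + 2T = O(T) = O(\sigma\sqrt{\log(\sigma/\alpha)})$, a pigeonhole/averaging argument produces some $t^\star$ inside it with $F_X(t^\star) - F_Y(t^\star) \ge \Omega(\alpha/T) = \widetilde\Omega(\alpha/\sigma)$, as desired, and $t^\star$ automatically lies in the window specified by the lemma. The main thing to be careful about is tuning the constant $C$ so that at most half the total ``area'' $\alpha$ escapes the window $[\E{X} - T, \E{Y} + T]$; this is what forces $T = \Theta(\sigma\sqrt{\log(\sigma/\alpha)})$ and thereby produces the $\widetilde\Omega$ rather than $\Omega$ in the bound. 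Everything else is routine sub-Gaussian tail calculus.
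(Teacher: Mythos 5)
Your proof is correct in substance, and it takes a somewhat cleaner route than the paper's. Both arguments rest on the same idea: localize a ``mass $\alpha$'' integral to a window of width $\Theta(\sigma\sqrt{\log(\sigma/\alpha)})$ around the two means, then pigeonhole to produce a single threshold $t$ with advantage $\widetilde\Omega(\alpha/\sigma)$. The paper implements this by defining truncated variables $X' = X\cdot\bone{X\in I}$, $Y' = Y\cdot\bone{Y\in I}$, proving a separate lemma (the paper's truncated-mean bound) that the discarded mass $\E{Z\cdot\bone{Z\notin I}}$ is $O(\alpha)$, writing $\E{X'}-\E{Y'}$ as an integral of CDF differences over $I$, averaging, and finally union-bounding at the end to pass from $X', Y'$ back to $X, Y$. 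You instead invoke the tail-integral identity $\int_{\R}(F_X - F_Y)\,dt = \E Y - \E X$ on the \emph{original} variables and bound the two tails of this integral directly using sub-Gaussian decay, which avoids both the auxiliary truncated random variables and the closing union bound. That is a genuine streamlining.

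One small gap worth closing when you write this up formally: after you split off the easy case $\alpha \ge c\sigma$ (which forces $c$ to be a fairly large constant, roughly $c\ge 4$), the remaining regime allows $\alpha/\sigma$ to lie anywhere in $(0,c)$, including values $\ge 1$, where $\log(\sigma/\alpha)$ is zero or negative and your choice $T = C\sigma\sqrt{\log(\sigma/\alpha)}$ becomes ill-defined. The paper handles this by placing the constant \emph{inside} the logarithm, i.e.\ taking the window radius to be $\sigma\sqrt{\log(C\sigma/\alpha)}$ with $C$ large enough that the argument of the log exceeds, say, $e$; this is absorbed by the $O(\cdot)$ in the lemma's window and by the $\widetilde\Omega(\cdot)$ in the conclusion. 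With that adjustment your tail-integral bound $\int_T^\infty e^{-u^2/2\sigma^2}\,du \le (\sigma^2/T)e^{-T^2/2\sigma^2} = O(\alpha)$ holds uniformly over the hard regime, and the rest goes through as you describe.
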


To prove Lemma~\ref{lem:exist_thres}, we will need the following helper lemma about means of truncations of sub-Gaussian random variables:

\begin{lemma}\label{lem:trunc_mean}
    If $Z$ is $\sigma^2$-sub-Gaussian and mean zero, then for any interval $I = [a,b]$ with $a \le 0 \le b$, we have $\abs*{\E{Z\cdot \bone{Z\not\in I}}} \le O(b - a + \sigma)\cdot \exp(-\min(-a,b)^2/2\sigma^2)$.
\end{lemma}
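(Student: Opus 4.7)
The plan is to split $\E{Z\cdot\bone{Z\notin I}}$ into its upper-tail contribution $\E{Z\cdot\bone{Z > b}}$ and lower-tail contribution $\E{Z\cdot\bone{Z < a}}$ and bound each using standard sub-Gaussian tail estimates. Since the two pieces are handled by symmetric arguments (apply the same calculation to $-Z$, which is again mean-zero and $\sigma^2$-sub-Gaussian, and note that the claimed bound is symmetric in the roles of $-a$ and $b$), it suffices to present the computation for the upper tail.

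For $b \ge 0$, I would use the tail/layer-cake identity
\[
\E{Z\cdot\bone{Z > b}} \;=\; b\cdot\Pr{Z > b} \;+\; \int_b^{\infty} \Pr{Z > z}\,dz,
\]
which is valid because $z\cdot\Pr{Z > z}\to 0$ as $z\to\infty$ by sub-Gaussianity. Plugging in the sub-Gaussian tail bound $\Pr{Z > z}\le \exp(-z^2/2\sigma^2)$, together with the standard estimate $\int_b^{\infty}\exp(-z^2/2\sigma^2)\,dz \le O(\sigma)\cdot\exp(-b^2/2\sigma^2)$ (obtained for instance by writing $z^2 \ge b^2 + 2b(z-b)$ to pull the $\exp(-b^2/2\sigma^2)$ factor out of the integrand and integrating the remaining exponential), I obtain
\[
\abs*{\E{Z\cdot\bone{Z > b}}} \;\le\; (b + O(\sigma))\cdot\exp(-b^2/2\sigma^2).
\]
A symmetric argument applied to $-Z$ yields $\abs*{\E{Z\cdot\bone{Z < a}}} \le (|a| + O(\sigma))\cdot\exp(-a^2/2\sigma^2)$.

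To finish, I would add the two bounds, use $|a| + b = b - a$ (since $a\le 0 \le b$), and upper bound both exponential factors by $\exp(-\min(-a,b)^2/2\sigma^2)$, producing the claimed $O(b - a + \sigma)\cdot\exp(-\min(-a,b)^2/2\sigma^2)$ estimate. There is no serious obstacle here; it is a routine sub-Gaussian tail calculation. The one delicate point is to retain the linear-in-$b$ (resp.\ linear-in-$|a|$) term in the prefactor so that the final bound scales with the width of $I$ rather than only with $\sigma$, and this is why the layer-cake expansion is the natural tool: a cruder Cauchy--Schwarz-style bound $\abs*{\E{Z\cdot\bone{Z > b}}} \le \sqrt{\E{Z^2}\cdot\Pr{Z > b}}$ would lose a factor of $2$ in the exponent of the tail and not be tight enough for the downstream application in Lemma~\ref{lem:exist_thres}.
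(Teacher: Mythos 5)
Your proposal is correct and follows essentially the same path as the paper's proof: split $\E{Z\cdot\bone{Z\notin I}}$ into the two tail pieces, apply the layer-cake identity $\E{Z\cdot\bone{Z>b}} = b\Pr{Z>b} + \int_b^\infty \Pr{Z>t}\,dt$, and plug in the sub-Gaussian tail bound $\Pr{Z>t}\le \exp(-t^2/2\sigma^2)$. The only cosmetic difference is that the paper exploits the sign structure (the upper tail is nonnegative and the lower tail nonpositive) to sandwich $\E{Z\cdot\bone{Z\notin I}}$ by the two one-sided contributions, whereas you invoke the triangle inequality and add the two absolute values; both yield the claimed $O(b-a+\sigma)\exp(-\min(-a,b)^2/2\sigma^2)$.
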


\begin{proof}
    Define the random variable $Z' = Z\cdot\bone{Z\not\in I}$. Then by integration by parts,
    \begin{align}
        \E{Z'} &\le \E{Z\cdot\bone{Z > b}}\\
        &= \int^{\infty}_0 \Pr{Z' > t} dt \\
        &= b\Pr{Z > b} + \int^{\infty}_b \Pr{Z > t} dt \\
        &\le b\exp(-b^2/2\sigma^2) + O(\sigma\cdot \exp(-b^2/2\sigma^2)) \\
        &\le O(b + \sigma)\cdot \exp(-b^2/2\sigma^2).
    \end{align} and similarly, $\E{Z'} \ge \E{Z\cdot\bone{Z < -a}} \ge O(a - \sigma)\cdot\exp(-b^2/2\sigma^2)$, completing the proof.
\end{proof}

We now complete the proof of Lemma~\ref{lem:exist_thres}.

\begin{proof}[Proof of Lemma~\ref{lem:exist_thres}]
    Without loss of generality we can assume that $\E{X} = 0$ and $\E{Y} = \alpha$. If $\alpha \ge c \sigma$ for some sufficiently large absolute constant, then we can simply take $t = \alpha/2$ and get that $|\Pr{X > t} - \Pr{Y > t}| \ge 1/2$. Now suppose $\alpha < c\sigma$, and let $I = [-r,r+\alpha]$ for $r = \sigma\sqrt{\log(C\sigma/\alpha)}$ for some large constant $C > 0$. Note that by this choice of $r$,
    \begin{equation}
        r\exp(-r^2/2\sigma^2) \le O(\alpha),
    \end{equation}
    where the constant factor can be made arbitrarily small by picking $C$ sufficiently lage.
    Define the random variables $X'\triangleq X\cdot\bone{X\in I}$ and $Y'\triangleq Y\cdot\bone{Y \in I}$. Then
    \begin{equation}
        \alpha = \E{Y} - \E{X} = \E{Y'} - \E{X'} + \E{Y\cdot\bone{Y\not\in I}} - \E{X\cdot\bone{X\not\in I}}. \label{eq:alphadef}
    \end{equation}
    By Lemma~\ref{lem:trunc_mean}, 
    \begin{equation}
        \E{X\cdot\bone{X\not\in I}} \le O(2r + \alpha + \sigma)\cdot \exp(-(r+\alpha)^2/2\sigma^2) \le O(r)\cdot \exp(-r^2/2\sigma^2) \le O(\alpha)\label{eq:truncdiff1}
    \end{equation}
    and similarly
    \begin{align}
        \E{Y\cdot\bone{Y\not\in I}} &\le \E{Y}\cdot\Pr{Y\not\in I} + O(2r + \alpha + \sigma)\cdot\exp(-(r +\alpha)^2/2\sigma^2). \\
        &\le 2\alpha\exp(-r^2/2\sigma^2) + O(r)\cdot\exp(-(r +\alpha)^2/2\sigma^2) \le O(\alpha).\label{eq:truncdiff2}
    \end{align}
    Additionally, we have
    \begin{equation}
        \E{X'} - \E{Y'} = \int^{\alpha + r}_0 (\Phi_{Y'}(z) - \Phi_{X'}(z)) dz - \int^{0}_{-\alpha} (\Phi_{X'}(z) - \Phi_{Y'}(z)) dz\label{eq:truncdiff3}
    \end{equation}
    where $\Phi_Z(z)$ denotes the cdf at $z$ of random variable $Z$. Putting \eqref{eq:alphadef}, \eqref{eq:truncdiff1}, \eqref{eq:truncdiff2}, \eqref{eq:truncdiff3} together, we conclude that
    \begin{equation}
        \min\left(\int^{\alpha + r}_0 (\Phi_{Y'}(z) - \Phi_{X'}(z)) dz, \int^{0}_{-\alpha} (\Phi_{X'}(z) - \Phi_{Y'}(z)) dz \right) \ge \Omega(\alpha),\label{eq:min2}
    \end{equation}
    where the constant factor can be made arbitrarily close to 1/2 by making $C$ sufficiently small.
    By averaging, we conclude that there exists $t\in[-\alpha,\alpha+r]$ for which 
    \begin{equation}
        |\Pr{X' > t} - \Pr{Y' > t}| \ge \Omega(\alpha/r).
    \end{equation}
    But $\Pr{X\not\in I}, \Pr{Y \not\in I} \le O(\exp(-r^2/2\sigma^2)) \le O(\alpha/r)$, where the absolute constant can be made arbitrarily small by making $C$ sufficiently small. The claim follows by a union bound, recalling the definition of $X', Y'$.
\end{proof}

We are now ready to prove Theorem~\ref{thm:bits}:

\begin{proof}
    The parameter $m$ will be clear from context in the following discussion, so for convenience we will refer to $d(m)$ and $G_m$ as $d$ and $G$. Let $k, P, G$ be such that the outcome of Assumption~\ref{assume:localprg} holds, and $\negl(\cdot)$ denote the function indicating the extent to which $G$ fools poly-sized circuits. By Corollary~\ref{cor:prginG}, every output coordinate of $G$ is computable by a network in $\calC^{\tau,\Lambda}_{L,S,m}$ for $\tau = O(k), \Lambda = \exp(O(k)), L = k, S = O(2^k k)$.

    We first check that $W_1(G(U_m),U_d) > 1/3$. Note that $G(U_m)$ has support of size $2^m$. In Lemma~\ref{lem:uniform_non_atomic} we can take $\mu = U_d$ and conclude that $\mu$ is $(2^m, 2(1 - 2^{m - d}))$-diverse, so $W_1(G(U_m),U_d) \ge 2(1 - 2^{m-d}) = 2(1 - 2^{m - m^c}) \ge 1$.
    
    It remains to check that $G$ fools $\calF^*$ relative to $U_d$. Suppose to the contrary that there exists some $f\in\calF^*$ and absolute constant $a > 0$ for which $\abs*{\E{f(G(U_m))} - \E{f(U_d)}} > 1/d^a$. We will argue that this implies there is a poly-sized circuit $C:\brc{\pm 1}^d\to\brc{\pm 1}$ distinguishing $G(U_m)$ from $U_d$.

    First note that for any threshold $t\in\R_{\tau}$, by Lemma~\ref{lem:thres_turing} there is a Turing machine $\calM_{\tau}:\brc{\pm 1}^d\to\brc{\pm 1}$ that computes $y\mapsto \sgn(f(y) - t)$ with $\tau$ bits of advice. So if there existed a threshold $t\in\R_{\tau}$ for which \begin{equation}
        \abs{\E{\calM_{\tau}(G(U_m))} - \E{\calM_{\tau}(U_d)}} > 1/d^{a'}, \label{eq:want_thres}
    \end{equation} for some constant $a'>0$, then by Fact~\ref{fact:ppoly}, there would exist a Boolean circuit $C$ distinguishing $G(U_m)$ from $U_d$ with non-negligible advantage, contradicting Assumption~\ref{assume:localprg} and concluding the proof.

    We will apply Lemma~\ref{lem:exist_thres} to show the existence of such a threshold $t$. Specifically, define random variables $X = f(G(U_m))$ and $Y = f(U_d)$. By Corollary~\ref{cor:lipschitz_cube} applied to the $\poly(d)$-Lipschitz function $f:\brc{\pm 1}^d\to\brc{\pm 1}$, $Y - \E{Y}$ is $\poly(d)$-sub-Gaussian. And recalling that $G\in\calC^{\tau,\Lambda}_{L,S,m}$ for $\Lambda = \exp(O(k))$, we can apply Corollary~\ref{cor:lipschitz_cube} to the $\poly(d)\cdot O_k(1)$-Lipschitz function $f\circ G: \brc{\pm 1}^m\to\brc{\pm 1}$ to conclude that $X - \E{X}$ is $\sigma^2$-sub-Gaussian for $\sigma \triangleq \poly(m)\cdot \exp(O(k)) = \poly(d)$. By Lemma~\ref{lem:exist_thres}, there exists a threshold $t$ for which the left-hand side of \eqref{eq:want_thres} exceeds $\min(1/2,\wt{\Omega}(n^{-a}/\sigma))$, which is not negligible.

    It remains to verify that $t$ has bit complexity at most $\poly(d)$. As the entries in the weight matrices and biases in $f$ all have bit complexity $\poly(d)$ and $f$ has size and depth $\poly(d)$, $f(y)$ has bit complexity $\poly(d)$ for any $y\in\brc{\pm 1}^d$. Similarly, the entries in the weight matrices and biases in $G$ all have bit complexity $O(k) = O(1)$, so $f(G(x))$ has bit complexity $\poly(d)$ for any $x\in\brc{\pm 1}^m$. By the bound on $t$ in Lemma~\ref{lem:exist_thres} and our bound on $\sigma$ above, $t$ therefore also has $\poly(d)$ bit complexity.
\end{proof}

\subsection{From Binary Outputs to Continuous Outputs}
\label{subsec:cts_out}

In this section we show how to extend Theorem~\ref{thm:bits} to the setting where the target distribution $\calD^*$ is a pushforward of the uniform distribution on $[0,1]^r$. At a high level, the idea will be to \emph{post-process} the output of the generator constructed in Theorem~\ref{thm:bits}. Roughly speaking, we take weighted averages of clusters of output coordinates from the generator in Theorem~\ref{thm:bits} and pass these averages through the pushforward map defining $\calD^*$. Formally, we show:

\begin{theorem}\label{thm:bits_to_cts}
    Let $\brc{H_m}_m$ be a sequence of generators $H_m: \R^{r(m)}\to\R^{\outdim(m)}$ for $r(m), {\outdim}(m) \le \poly(m)$ whose output coordinates are computable by networks in $\calC^{\tau'(m),\Lambda'(m)}_{L'(m), S'(m), r(m)}$ for $\tau'(m), \Lambda'(m) \le \poly(m)$. Suppose that $H_m(I_{r(m)})$ is $(2^m,\Omega(1))$-diverse.

    Fix any $\epsilon:\mathbb{N}\to[0,1]$ satisfying $\epsilon(m) \ge \max(\negl(m),\exp(-O(m))$. Under Assumption~\ref{assume:localprg}, there is a sequence of generators $G_m: \R^m\to\R^{{\outdim}(m)}$ such that for all $m$ sufficiently large:
    \begin{enumerate}[leftmargin=*]
        \item Every output coordinate of $G_m$ is computable by a network in $\calC^{\tau,\Lambda}_{L,S,m}$ for \begin{align}
            \tau &= \max(O(\log(1/\epsilon(m))),\tau'(m),O(1)) \\
            \Lambda &= O(\Lambda'(m))\cdot \poly(m) \\
            L &= L'(m) +O(1) \\
            S &= O(r(m)\cdot \log(1/\epsilon(m))) + S'(m)
        \end{align}
        \item $W_{\calF^*}(G_m(U_m), H_m(I_{r(m)})) \le \epsilon(m)\cdot \poly(m)$
        \item $W_1(G_m(U_m), H_m(I_{r(m)})) \ge \Omega(1)$.
    \end{enumerate}
\end{theorem}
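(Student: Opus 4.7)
The plan is to post-process the discrete generator from Theorem~\ref{thm:bits} so that its output, interpreted as a binary expansion, approximates the uniform distribution on $[0,1]^{r(m)}$, and then feed the result through $H_m$. Concretely, set $b \triangleq \lceil \log_2(2/\epsilon(m)) \rceil$, and invoke Theorem~\ref{thm:bits} with a stretch constant $c$ large enough that the resulting generator $G'_m : \R^m \to \R^{d(m)}$ produces $d(m) \ge b \cdot r(m)$ output coordinates (which is possible since $b \cdot r(m) \le \poly(m)$ under the assumption $\epsilon(m) \ge \exp(-O(m))$); on inputs from $U_m$ we have $G'_m(U_m) \in \{\pm 1\}^{d(m)}$ almost surely. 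Define the affine \emph{binary-expansion} map $\mathrm{Avg}: \R^{d(m)} \to \R^{r(m)}$ whose $\ell$-th output coordinate is $\frac{1 - 2^{-b}}{2} + \sum_{i=1}^{b} 2^{-i-1}\, x_{(\ell-1)b+i}$, so that on inputs from $\{\pm 1\}^{d(m)}$ it produces $r(m)$ independent uniform samples from the dyadic grid $\{k/2^b : 0 \le k < 2^b\} \subset [0,1]$. Set $G_m \triangleq H_m \circ \mathrm{Avg} \circ G'_m$, discarding any extra coordinates of $G'_m$ if $d(m) > b\cdot r(m)$.

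I verify the network complexity bound by applying Lemma~\ref{lem:compose_complex} twice: first to combine the output coordinates of $G'_m$ (each of size, depth, bit complexity, and Lipschitzness $O(1)$ by Theorem~\ref{thm:bits}) with the linear map $\mathrm{Avg}$, and then to compose each output coordinate of $H_m$ on top. The $2^{-i-1}$ weights in $\mathrm{Avg}$ force bit complexity $\tau = \Theta(\log(1/\epsilon(m)))$ but contribute only $O(1)$ Lipschitzness and depth-$1$ overhead per coordinate of $\mathrm{Avg} \circ G'_m$. The stated bounds on $(\tau,\Lambda,L,S)$ then follow after tracking the $\sqrt{b}$ and $\sqrt{r(m)}$ factors that Lemma~\ref{lem:compose_complex} introduces and absorbing polylogarithmic factors into $\poly(m)$.

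For the fooling property, I split $W_{\calF^*}(G_m(U_m), H_m(I_{r(m)}))$ by triangle inequality into a ``PRG piece'' $W_{\calF^*}(H_m(\mathrm{Avg}(G'_m(U_m))), H_m(\mathrm{Avg}(U_{d(m)})))$ and a ``discretization piece'' $W_{\calF^*}(H_m(\mathrm{Avg}(U_{d(m)})), H_m(I_{r(m)}))$. For the PRG piece, any $f \in \calF^*$ pre-composed with the polynomially-complex map $H_m \circ \mathrm{Avg}$ again lies in $\calF^*$ (this is precisely where it matters that $H_m$ and $\mathrm{Avg}$ are \emph{polynomial}), so the piece is bounded by $W_{\calF^*}(G'_m(U_m), U_{d(m)}) \le \negl(m)$ from Theorem~\ref{thm:bits}. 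For the discretization piece, each coordinate of $\mathrm{Avg}(U_{d(m)})$ is uniform on a $2^b$-point dyadic grid in $[0,1]$ and is $W_1$-close to $I_1$ with error at most $2^{-b-1} \le \epsilon(m)/4$; tensorization (Fact~\ref{fact:manycoords}) gives $W_1(\mathrm{Avg}(U_{d(m)}), I_{r(m)}) \le O(\epsilon(m)\sqrt{r(m)})$, and the polynomial Lipschitzness of both $H_m$ (via Fact~\ref{fact:compose_lipschitz}) and any discriminator in $\calF^*$ upgrades this into the $\epsilon(m)\cdot\poly(m)$ bound on $W_{\calF^*}$. The Wasserstein lower bound is immediate from the diversity hypothesis: $G_m(U_m)$ is supported on at most $2^m$ points while $H_m(I_{r(m)})$ is $(2^m,\Omega(1))$-diverse, so Definition~\ref{def:nonatomic} gives $W_1(G_m(U_m), H_m(I_{r(m)})) \ge \Omega(1)$.

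Conceptually the proof is a clean reduction to Theorem~\ref{thm:bits}, so the main obstacles are bookkeeping rather than ideas. The most delicate points are (i) ensuring the $2^{-b}$ weights in $\mathrm{Avg}$ push the bit complexity $\tau$ up to $\log(1/\epsilon(m))$ without inflating $\Lambda$ beyond the stated polynomial factor in $m$, and (ii) checking that after both applications of Lemma~\ref{lem:compose_complex}, the size, depth, and bit-complexity parameters genuinely match the bullet-pointed formulas (in particular the $(S_0+1)\cdot r(m) + S'(m)$ form of the size). Both are verifiable by direct substitution, so I expect no conceptual surprises once the construction $G_m = H_m \circ \mathrm{Avg} \circ G'_m$ is fixed.
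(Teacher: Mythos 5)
Your proposal is correct and takes essentially the same approach as the paper: post-process the discrete generator from Theorem~\ref{thm:bits} with an affine binary-expansion map (your $\mathrm{Avg}$ is literally the paper's map $J$ built from $h(x)=\iprod{w,x+\vec{1}}$ with $w=(1/4,\ldots,2^{-n-1})$, just written with the constant $\iprod{w,\vec{1}}=\tfrac{1-2^{-b}}{2}$ pulled out), then compose with $H_m$; the fooling bound splits into the same PRG piece (your ``pre-compose the discriminator with $H_m\circ\mathrm{Avg}$ and it stays in $\calF^*$'' is precisely what Lemma~\ref{lem:push_together} packages) and discretization piece (Fact~\ref{fact:bitstoI} plus tensorization), and the complexity bookkeeping via two applications of Lemma~\ref{lem:compose_complex} matches. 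The only cosmetic difference is that the paper explicitly treats the case $r(m)\lceil\log(1/\epsilon)\rceil\le m$ by projection rather than via the PRG, whereas you implicitly cover it by discarding unused PRG output coordinates; both are fine.
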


Theorem~\ref{thm:bits_to_cts} retains many of the nice properties of Theorem~\ref{thm:bits}, e.g. it can tolerate distinguishing advantage $\epsilon(m)$ which is negligible, at the mild cost of an extra logarithmic dependence on $1/\epsilon(m)$ in the bit complexity $\tau$. And as with Theorem~\ref{thm:cts_to_cts}, if the networks $H_m$ are of constant depth, the resulting generators $G_m$ are also of constant depth.

To prove Theorem~\ref{thm:bits_to_cts}, we begin by showing that for any pair of distributions which are close under the $W_{\calF}$ metric for some family of neural networks $\calF$, their pushforwards under a simple generative model will still be close under the $W_{\calF'}$ metric for some slightly weaker family of networks $\calF'$.

\begin{lemma}\label{lem:push_together}
    Fix parameters $\Lambda,\Lambda' > 1$. Let $\calF = \calC^{\tau,\Lambda}_{L,S,{\temp}}$. If $W_{\calF}(p,q) \le \epsilon$ for some distributions $p,q$ on $\R^{\temp}$, then for any $J:\R^{\temp}\to\R^r$ each of whose output coordinates is computed by a function in $\calC^{\tau,\Lambda'}_{L',S',{\temp}}$ for some $L' < L$ and $S' \le \frac{S-r-1}{r}$, we have $W_{\calF'}(J(p), J(q)) \le 2\epsilon\Lambda'\sqrt{r}$
    for $\calF' = \calC^{\tau',\Lambda}_{L-L',S'',r}$ where $\tau' = \tau - \ceil{\log_2 \Lambda\sqrt{r}}$ and $S'' = S - r(S'+1)$.
\end{lemma}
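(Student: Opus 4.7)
The plan is to fix an arbitrary discriminator $f \in \calF'$, compose it with $J$, dyadically rescale the result so that it lies in $\calF$, invoke the hypothesis $W_\calF(p,q)\le\epsilon$, and then unscale.

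First I would apply Lemma~\ref{lem:compose_complex} to control the complexity of $f\circ J$. Since each output coordinate of $J$ lies in $\calC^{\tau,\Lambda'}_{L',S',\temp}$ and $f\in\calC^{\tau',\Lambda}_{L-L',S'',r}$, the lemma yields $f\circ J \in \calC^{\tau_0,\Lambda_0}_{L_0,S_0,\temp}$ with $\tau_0=\max(\tau,\tau')=\tau$, $\Lambda_0=\Lambda\Lambda'\sqrt{r}$, $L_0=L'+(L-L')=L$, and $S_0=(S'+1)r+S'' = S$ by the choice $S''=S-r(S'+1)$. The ``furthermore'' clause of Lemma~\ref{lem:compose_complex} additionally guarantees that the output-layer weights and bias of the network realizing $f\circ J$ are inherited from those of $f$, and hence lie in $\R_{\tau'}$.

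Next I would rescale. Set $c = 2^{-k}$ where $k = \ceil{\log_2(\Lambda'\sqrt{r})}$, and define $g\triangleq c\cdot (f\circ J)$, implemented by multiplying the output-layer weights and bias of $f\circ J$ by $c$. Since those entries lie in $\R_{\tau'}$ and $c$ is a dyadic rational, the scaled entries are multiples of $2^{-(\tau'+k)}$ and are bounded in magnitude by $2^{\tau'}$. Using $\Lambda'\le\Lambda$ (the intended regime of the lemma), we have $k\le\ceil{\log_2\Lambda\sqrt{r}} = \tau-\tau'$, so the scaled entries lie in $\R_\tau$. By construction $c\Lambda\Lambda'\sqrt{r}\le\Lambda$, so $g$ is $\Lambda$-Lipschitz; the depth and size of $g$ match those of $f\circ J$, so $g\in\calF$. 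Applying the hypothesis and pulling the constant $c$ out of the expectation gives
\[
\abs*{\E[y\sim p]{(f\circ J)(y)} - \E[y\sim q]{(f\circ J)(y)}} = \tfrac{1}{c}\,\abs*{\E[y\sim p]{g(y)} - \E[y\sim q]{g(y)}} \le \epsilon/c \le 2\epsilon\Lambda'\sqrt{r},
\]
where the last inequality uses $c\ge 1/(2\Lambda'\sqrt{r})$. Since the left-hand side equals $\abs*{\E[z\sim J(p)]{f(z)} - \E[z\sim J(q)]{f(z)}}$ and $f\in\calF'$ was arbitrary, this establishes the claimed bound on $W_{\calF'}(J(p),J(q))$.

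The main obstacle is the bookkeeping in the rescaling step: the constant $c$ must simultaneously keep $g$ within the Lipschitz budget $\Lambda$ of $\calF$, keep the scaled output-layer entries within bit complexity $\tau$, and keep $1/c$ small enough that the final bound is only inflated by a factor $2\Lambda'\sqrt{r}$. Insisting that $c$ be a power of two (so that multiplying the $\R_{\tau'}$-valued weights by $c$ still gives entries in $\R_\tau$) costs at most a factor of two in the scaling, which is precisely the factor of $2$ appearing in the claimed bound.
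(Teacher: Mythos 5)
Your proposal is correct and follows essentially the same route as the paper: compose $f$ with $J$ via Lemma~\ref{lem:compose_complex}, rescale by the dyadic factor $2^{-\ceil{\log_2 \Lambda'\sqrt{r}}}$ to fit the resulting network into $\calF$, and invoke the hypothesis $W_\calF(p,q)\le\epsilon$. The paper phrases this as a proof by contradiction, while you argue directly, and you helpfully make explicit the implicit requirement $\Lambda'\le\Lambda$ needed for the bit-complexity bookkeeping; the content is otherwise identical.
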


\begin{proof}
    Suppose to the contrary that there existed some function $f\in\calF'$ for which \begin{equation}
        \abs*{\E{f(J(p))} - \E{f(J(q))}} > 2\epsilon\cdot \Lambda'. \label{eq:assume_contradict}
    \end{equation} 
    By Lemma~\ref{lem:compose_complex} and our choice of $S''$, the composition $f\circ J: \R^{\temp}\to\R$ can be computed by a network in $\calC^{\tau,\Lambda\Lambda'\sqrt{r}}_{L,S,{\temp}}$ whose bias and weight vector entries in the output layer lie in $\R_{\tau'}$.

    We first show why this would lead to a contradiction. Consider the function $h\triangleq \frac{1}{C}\cdot f\circ J$ for 
    \begin{equation}
        C = 2^{\ceil{\log_2 \Lambda'\sqrt{r}}} \in [\Lambda',2\Lambda'),
    \end{equation} which can be computed by taking the network computing $f\circ J$ and scaling the bias and weight vector in the output layer by $C$. Note that this scaling results in bias and weight vector entries in the output layer for $h$ with bit complexity $\tau' + \ceil{\log_2 \Lambda'\sqrt{r}} = \tau$. Furthermore, $h$ is $\Lambda\Lambda'\sqrt{r}/C \le \Lambda$-Lipschitz, so $h \in \calC^{\tau,\Lambda}_{L,S,{\temp}}$. On the other hand, we would have \begin{equation}
        \abs*{\E{h(p)} - \E{h(q)}} > 2\epsilon\Lambda'\sqrt{r}/C \ge \epsilon,
    \end{equation}
    yielding the desired contradiction of the assumption that $W_{\calF}(p,q) \le \epsilon$.
\end{proof}

Now recall that in Theorem~\ref{thm:bits}, we exhibited a GAN which is close in $W_{\calF^*}$ to $U_{\temp}$. Using Lemma~\ref{lem:push_together}, we can show that a certain simple pushforward of this GAN will be close in $W_{\calF^*}$ to \emph{the uniform distribution over $[0,1]^{r}$} for $r$ slightly smaller than ${\temp}$. The starting point is the following:

\begin{fact}\label{fact:bitstoI}
    For any $0 < \epsilon < 1$ and $n\ge \log_2(1/\epsilon)$, let $h:\R^n\to\R$ be given by $h(x) = \iprod{w,x + \vec{1}}$ for $w = \left(1/4,1/8,\ldots,(1/2)^{n+1}\right)$ and $\vec{1}$ the all-1's vector. Then $W_1(h(U_n),I_1)\le \epsilon$.
\end{fact}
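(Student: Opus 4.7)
} The key observation is that $h$ is designed precisely to encode the binary expansion. Writing $b_i \triangleq (x_i + 1)/2 \in \brc{0,1}$, so that $b_i$ is a uniformly random bit when $x\sim U_n$, the definition of $h$ gives
\begin{equation}
    h(x) \;=\; \sum_{i=1}^n (1/2)^{i+1}(x_i + 1) \;=\; \sum_{i=1}^n b_i / 2^i.
\end{equation}
Thus $h(U_n)$ is exactly the uniform distribution on the dyadic grid $\Omega \triangleq \brc{k/2^n : 0 \le k \le 2^n - 1}$, where each grid point is obtained by choosing the first $n$ bits of its binary expansion independently and uniformly.

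The plan is to exhibit a coupling between $h(U_n)$ and $I_1$ whose transport cost is at most $2^{-n}$. Draw $z\sim I_1$ and let $y = \lfloor 2^n z\rfloor/2^n$. Since $z\in[0,1)$ almost surely, $\lfloor 2^n z\rfloor$ is uniform on $\brc{0,1,\ldots,2^n - 1}$, so $y$ is distributed as $h(U_n)$; and by construction $|y - z| \le 2^{-n}$. By the Kantorovich duality characterization of $W_1$ (equivalent to Definition~\ref{def:ipm}), taking expectations over the coupling yields
\begin{equation}
    W_1(h(U_n), I_1) \;\le\; \E\,|y - z| \;\le\; 2^{-n} \;\le\; \epsilon,
\end{equation}
where the last inequality uses the hypothesis $n \ge \log_2(1/\epsilon)$.

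There is essentially no obstacle here beyond unwinding the definitions: the statement is really the elementary fact that truncating a uniform $[0,1]$ variable to $n$ bits yields the uniform distribution on a $2^{-n}$-spaced grid, and the transport cost of the obvious ``snap to nearest lower grid point'' coupling is at most $2^{-n}$. The only thing to be careful about is confirming that the weights $(1/4, 1/8, \ldots, 2^{-(n+1)})$ together with the shift $x + \vec{1}$ indeed reproduce $\sum_i b_i / 2^i$ rather than some scaled or shifted version of it.
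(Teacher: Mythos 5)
Your proof is correct and uses essentially the same coupling as the paper's: both identify $h(U_n)$ as the uniform distribution on multiples of $2^{-n}$ in $[0,1)$ and couple it with $I_1$ by pairing each continuous point with the nearest lower grid point, giving transport cost at most $2^{-n}$. The paper phrases the bound directly through the Lipschitz-function (IPM) definition of $W_1$ rather than appealing to Kantorovich duality, but this is just two sides of the same argument.
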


\begin{proof}
    Note that $h(U_n)$ is the uniform distribution over multiples of $1/2^n$ in the interval $[0,1)$. Given any such multiple $z$, let $p_z$ denote the uniform distribution over $[z,z + 1/2^n)$. One way of sampling from $I_1$ is thus to sample $z$ from $h(U_n)$ and then sample from $p_z$.

    Now consider any 1-Lipschitz function $f: \R\to\R$. Note that for any $z'$ in the support of $p_z$, $|f(z) - f(z')| \le 1/2^n \le \epsilon$. We have
    \begin{equation}
        \abs*{\E{f(h(U_n))} - \E{f(I_1)}} = \abs*{\E[z\sim h(U_n)]*{\E[z'\sim p_z]*{f(z) - f(z')}}} \le \epsilon
    \end{equation} as desired.
\end{proof}

By leveraging Lemma~\ref{lem:push_together} and Fact~\ref{fact:bitstoI}, we can get an approximation to the uniform distribution over $[0,1]^r$ out of the uniform distribution over $\brc{\pm 1}^{\temp}$:

\begin{lemma}\label{lem:approx_cube}
    Suppose $\epsilon > 0$ satisfies $\log(1/\epsilon) \le \poly({\temp})$. If a distribution $\wt{\calD}$ over $\R^{\temp}$ satisfies $W_{\calF^*}(\wt{\calD},U_{\temp}) \le \epsilon$, then for $r\triangleq {\temp}/\ceil{\log(1/\epsilon)}$,\footnote{We will assume for simplicity that this is an integer, though it is not hard to handle the case where $\ceil{\log(1/\epsilon)}$ does not divide ${\temp}$.} there is a function $J:\R^{\temp}\to\R^{r}$ each of whose output coordinates is computed by a function in $\calC^{O(\log(1/\epsilon)),O(1)}_{1,0,{\temp}}$ such that $W_{\calF^*}(J(\wt{\calD}),I_{r})\le \epsilon\cdot\poly(r)$.
\end{lemma}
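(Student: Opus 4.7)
The plan is to construct $J$ by partitioning the $\temp$ input coordinates into $r$ disjoint blocks of size $\ell \triangleq \ceil{\log(1/\epsilon)}$ and applying the one-dimensional rounding map of Fact~\ref{fact:bitstoI} to each block. Specifically, with $w=((1/2)^2,(1/2)^3,\ldots,(1/2)^{\ell+1})$ and blocks $B_1,\ldots,B_r\subset[\temp]$, I would set the $i$-th output coordinate of $J$ to be $J_i(x) \triangleq \iprod{w, x_{B_i} + \vec{1}_\ell}$. Each $J_i$ is affine (depth $1$, no hidden units); its weight vector on $\R^\temp$ has $\ell_2$-norm at most $\|w\|_1 \le 1/2$, and every entry of its weight vector and its bias is a multiple of $2^{-(\ell+1)}$ of magnitude at most $1$. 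By Remark~\ref{remark:linear} this puts $J_i$ in $\calC^{O(\log(1/\epsilon)),O(1)}_{1,0,\temp}$ as required.

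To bound $W_{\calF^*}(J(\wt{\calD}),I_r)$ I would then use the triangle inequality
\[ W_{\calF^*}(J(\wt{\calD}),I_r) \;\le\; W_{\calF^*}(J(\wt{\calD}),J(U_\temp)) + W_{\calF^*}(J(U_\temp),I_r), \]
and bound each term separately. For the first term I would argue that $\calF^*$ is closed under composition with $J$: for any discriminator $f\in\calF^*$ acting on $\R^r$ (so with depth, size, Lipschitzness, and bit complexity all $\le\poly(r)\le\poly(\temp)$), Lemma~\ref{lem:compose_complex} gives $f\circ J \in \calC^{\tau,\Lambda}_{L,S,\temp}$ with all of $\tau,\Lambda,L,S$ still polynomial in $\temp$ (using that $J$ has depth $1$, size $0$, and constant Lipschitz constant, and that $\log(1/\epsilon)\le\poly(\temp)$). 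Hence $f\circ J$ is again a discriminator in $\calF^*$ viewed on $\R^\temp$, and the hypothesis $W_{\calF^*}(\wt{\calD},U_\temp)\le\epsilon$ directly yields $|\E[f(J(\wt{\calD}))] - \E[f(J(U_\temp))]|\le\epsilon$, hence $W_{\calF^*}(J(\wt{\calD}),J(U_\temp))\le\epsilon$.

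For the second term, observe that $J(U_\temp) = h(U_\ell)^{\otimes r}$, where $h$ is the map of Fact~\ref{fact:bitstoI} with $n=\ell\ge\log_2(1/\epsilon)$, so that fact gives $W_1(h(U_\ell),I_1)\le\epsilon$; the tensorization Fact~\ref{fact:manycoords} then upgrades this to $W_1(J(U_\temp),I_r)\le\epsilon\sqrt{r}$. Because every $f\in\calF^*$ acting on $\R^r$ is $\poly(r)$-Lipschitz, the $W_1$-bound transfers to $W_{\calF^*}(J(U_\temp),I_r)\le\epsilon\cdot\poly(r)$. Summing the two bounds gives the claimed $\epsilon\cdot\poly(r)$ estimate. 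I do not anticipate a serious technical obstacle here; the only point requiring care is the first-term bound, where one must track which dimensional ``slice'' of $\calF^*$ is in play and verify through Lemma~\ref{lem:compose_complex} that composing with the low-complexity affine map $J$ preserves membership in $\calF^*$ across the change of dimension from $r$ to $\temp$.
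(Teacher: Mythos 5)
Your proof is correct and follows essentially the same approach as the paper: the same block-decomposition construction of $J$, the same triangle-inequality split, and the same use of Facts~\ref{fact:bitstoI} and~\ref{fact:manycoords} for the $W_{\calF^*}(J(U_{\temp}),I_r)$ term. The only cosmetic difference is that for the $W_{\calF^*}(J(\wt{\calD}),J(U_{\temp}))$ term you invoke Lemma~\ref{lem:compose_complex} directly and absorb the Lipschitz blow-up into $\calF^*$'s $\poly(d)$ allowance, whereas the paper routes through Lemma~\ref{lem:push_together} (which itself is proved via Lemma~\ref{lem:compose_complex} and a rescaling step), then observes that the union of the resulting classes over polynomial parameters is still $\calF^*$ -- so the two arguments coincide after unwinding one layer of abstraction.
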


\begin{proof}
    Let $n \triangleq \ceil{\log(1/\epsilon)}$. For every $i\in[r]$, define $S_i\triangleq \brc{(i-1)\cdot n + 1,\ldots, i\cdot n}$. Take $J$ to be the linear function where for every $i\in[r]$, the $i$-th output coordinate of $J$ is the linear function which maps $y\in\R^{\temp}$ to $\iprod{w_i,y+\vec{1}}$ where $w_i$ is zero outside of $S_i$ and, over coordinates indexed by $S_i$, equal to the vector $(1/4,\ldots,1/2^{n+1})$. Note that each output coordinate of $J$ is computed by a function in $\calC^{n+1,O(1)}_{1,0,{\temp}}$.

    By Fact~\ref{fact:bitstoI} and Fact~\ref{fact:manycoords}, \begin{equation}
        W_{\calF^*}(J(U_{\temp}),I_{r}) \le \poly(r)\cdot W_1(J(U_{\temp}),I_{r})\le \poly(r)\cdot \epsilon.
    \end{equation} On the other hand, by Lemma~\ref{lem:push_together} and the fact that the union of $\calC^{\tau-\ceil{\log_2\Lambda\sqrt{r}},\Lambda}_{L-1,S-r,r}$ over $\tau,\Lambda,L,S = \poly({\temp})$ is still $\calF^*$, we conclude that
    \begin{equation}
        W_{\calF^*}(J(\wt{\calD}), J(U_{\temp})) \le O(\epsilon\sqrt{r}),
    \end{equation} from which the lemma follows by triangle inequality.
\end{proof}

By further combining Lemma~\ref{lem:push_together} and Lemma~\ref{lem:approx_cube}, we can thus extend the latter from the uniform distribution on $[0,1]^r$ to simple pushforwards thereof.

\begin{lemma}\label{lem:approx_pushforward_cube}
    Under the hypotheses of Lemma~\ref{lem:approx_cube}, for any $\outdim\le\poly(\temp)$ and any function $H:\R^r\to\R^{\outdim}$ each of whose output coordinates is computed by a function in $\calC^{\tau',\Lambda'}_{L',S',r}$ for $\tau'\le \poly(\temp)$, there is a function $J':\R^{\temp}\to\R^{\outdim}$ each of whose output coordinates is computed by a function in $\calC^{\max(O(\log(1/\epsilon)),\tau'),O(\Lambda'\sqrt{\outdim})}_{L' + 1, r + S',{\temp}}$ such that $W_{\calF^*}(J'(\wt{\calD}),H(I_r)) \le \epsilon\Lambda'\cdot\poly({\temp})$.
\end{lemma}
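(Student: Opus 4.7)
The natural candidate is $J' \triangleq H \circ J$, where $J : \R^{\temp} \to \R^r$ is the affine function supplied by Lemma~\ref{lem:approx_cube}. With this choice, both conclusions follow by plugging the parameters of $J$ and $H$ into two black boxes already in hand: Lemma~\ref{lem:compose_complex} for the complexity of $J'$'s output coordinates, and Lemma~\ref{lem:push_together} to propagate $\calF^*$-closeness under the pushforward by $H$.

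For the complexity bound, each output coordinate $(J')_i = H_i \circ J$ is the composition of $H_i \in \calC^{\tau',\Lambda'}_{L',S',r}$ with $J$, whose output coordinates each lie in $\calC^{O(\log(1/\epsilon)),\,O(1)}_{1,0,\temp}$. Feeding these into Lemma~\ref{lem:compose_complex} immediately yields bit-complexity $\max(O(\log(1/\epsilon)), \tau')$, depth $L'+1$, and size $(0+1)\cdot r + S' = r + S'$. The Lipschitz factor comes out to $O(1)\cdot\Lambda'\cdot\sqrt{r}$, which (using $r \le \poly(\temp)$) absorbs into $O(\Lambda'\sqrt{\outdim})$ at worst, matching the stated parameters.

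For the Wasserstein bound, start from $W_{\calF^*}(J(\wt{\calD}), I_r) \le \epsilon\cdot\poly(r)$ as given by Lemma~\ref{lem:approx_cube}, and apply Lemma~\ref{lem:push_together} with $H$ playing the role of the pushforward map. Since the output coordinates of $H$ are $\Lambda'$-Lipschitz and $H$ has output dimension $\outdim$, the lemma inflates the Wasserstein bound by a factor of $O(\Lambda'\sqrt{\outdim})$, yielding $W_{\calF^*}(H(J(\wt{\calD})), H(I_r)) \le \epsilon\Lambda'\cdot\poly(\temp)$ as required. The step that deserves the most care --- and the only thing I would call an obstacle --- is the ``uniform-over-discriminator-classes'' bookkeeping that makes Lemma~\ref{lem:push_together}'s statement somewhat cumbersome: a polynomial-complexity discriminator on the $\outdim$-dimensional output side pulls back along $H$ (itself of polynomial complexity) to a polynomial-complexity discriminator on the $r$-dimensional input side, and every such class arises this way, so $\calF^*$-closeness is preserved under the pushforward. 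This is precisely the same sleight of hand used at the end of the proof of Lemma~\ref{lem:approx_cube}, and no new ideas are required beyond it.
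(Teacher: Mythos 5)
Your proof is correct and takes exactly the same route as the paper: define $J' = H \circ J$ with $J$ from Lemma~\ref{lem:approx_cube}, apply Lemma~\ref{lem:push_together} with $H$ as the pushforward map (paying a factor $O(\Lambda'\sqrt{\outdim})$ because $H$ has $\outdim$ outputs), and apply Lemma~\ref{lem:compose_complex} for the complexity bound. One small caveat: the ``absorbs into $O(\Lambda'\sqrt{\outdim})$'' step for the Lipschitz constant is not actually justified by $r \le \poly(\temp)$ --- Lemma~\ref{lem:compose_complex} really gives $O(\Lambda'\sqrt{r})$, and nothing forces $r = O(\outdim)$ --- but this is a discrepancy in the paper's own statement and proof (which also asserts $\sqrt{\outdim}$ without justification), and it is harmless downstream since both $r$ and $\outdim$ are $\poly(\temp)$ wherever the lemma is invoked.
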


\begin{proof}
    Let $J: \R^{\temp}\to\R^r$ be given by Lemma~\ref{lem:approx_cube}. We know that $W_{\calF^*}(J(\wt{\calD}),I_r) \le \epsilon\cdot \poly(r)$. By Lemma~\ref{lem:push_together} applied to these two distributions and the generator function $H$, together with the fact that the union of $\calC^{\tau' - \ceil{\log_2\Lambda\sqrt{\outdim}},\Lambda}_{L - L', S - {\outdim}(S' + 1), {\outdim}}$ over $\Lambda,L,S = \poly({\temp})$ is still $\calF^*$, we thus have that $W_{\calF^*}(H(J(\wt{\calD})), H(I_r)) \le O(\epsilon\Lambda'\sqrt{\outdim})\cdot \poly(r) = \epsilon\Lambda'\cdot \poly({\temp})$.

    We will thus take $J'$ in the lemma to be $H\circ J$. By Lemma~\ref{lem:compose_complex}, each output coordinate of $J'$ is computed by a function in $\calC^{\max(O(\log(1/\epsilon)),\tau'),O(\Lambda'\sqrt{\outdim})}_{L' + 1, r + S',{\temp}}$ as claimed. 
\end{proof}

So for any pushforward $H(I_r)$ of the uniform distribution on $[0,1]^r$, Lemma~\ref{lem:approx_pushforward_cube} lets us take the GAN given by Theorem~\ref{thm:bits} and slightly post-process its output so that it is close in $W_{\calF^*}$ to $H(I_r)$. We are now ready to prove Theorem~\ref{thm:bits_to_cts}:

\begin{proof}[Proof of Theorem~\ref{thm:bits_to_cts}]
    The parameter $m$ will be clear from context in the following discussion, so for convenience we will refer to $r(m), {\outdim}(m), \epsilon(m), H_m, G_m$ as $r, {\outdim}, \epsilon, H, G$, and similarly for the network parameters $\tau',\Lambda',L',S'$.

    It is easy to verify condition 3 before we even define $G$: because $G(U_m)$ is a uniform distribution on $2^m$ points (with multiplicity) and $H(I_r)$ is $(2^m,\Omega(1))$-diverse, $W_1(G(U_m),H(I_r))\ge \Omega(1)$ as claimed.

    Let ${\temp} = r\cdot \ceil{\log(1/\epsilon)}$. As we are assuming $\epsilon \ge \exp(-O(m))$, ${\temp} \le r\cdot m =m^c$ for some constant $c > 1$. If $\temp \le m$, then define $G': \R^m\to\R^{\temp}$ to be the map given by projecting to the first $\temp$ coordinates so that $G'(U_m)$ and $U_{\temp}$ are identical as distributions. Otherwise, take $G'$ to be the generator $G:\R^m\to\R^{\temp}$ constructed in Theorem~\ref{thm:bits}, recalling that $W_{\calF^*}(G'(U_m), U_{\temp}) \le \negl(m) \le \epsilon$.

    Next, by applying Lemma~\ref{lem:approx_pushforward_cube} to $\wt{\calD} = G'(U_m)$, we get a function $J': \R^{\temp}\to\R^{\outdim}$ each of whose output coordinates is computed by a function in $\calC^{\max(O(\log(1/\epsilon)),\tau'), O(\Lambda'\sqrt{\outdim})}_{L'+1,r + S',{\temp}}$ such that \begin{equation}
        W_{F_{\outdim}}(J'(G'(U_m)), H(I_r)) \le \epsilon\Lambda'\cdot \poly(m) \le \epsilon\cdot \poly(m), \label{eq:Fdbound}
    \end{equation}
    where the second step follows by our assumption on $\Lambda'$.

    We will take $G\triangleq J'\circ G'$. \eqref{eq:Fdbound} establishes condition 2 of the theorem. Finally, by Lemma~\ref{lem:compose_complex}, every output coordinate of $G$ can be realized by a network in $\calC^{\tau,\Lambda}_{L,S,m}$ for $\tau = \max(O(\log(1/\epsilon)),\tau',O(1))$, $\Lambda = O(\Lambda'\sqrt{{\outdim}{\temp}}) = O(\Lambda')\cdot \poly(m)$, $L = L' +O(1)$, and $S = O({\temp}) + r + S' = O({\temp}) + S'$ (where we used the fact that $r = {\temp}/\ceil{\log(1/\epsilon)} < {\temp}$). This establishes condition 1 of the theorem.
\end{proof}

\subsection{From Binary Inputs to Continuous Inputs}
\label{subsec:cts_in}

In Theorem~\ref{thm:bits_to_cts}, we have shown how to go from $U_m$ to any simple pushforward $H_m(I_{r(m)})$ of the uniform distribution over $[0,1]^{r(m)}$. Here we complete the proof of our main result, Theorem~\ref{thm:cts_to_cts}, by giving a simple reduction showing how to use \emph{Gaussian seed} $\gamma_m$ instead of discrete seed $U_m$. At a high level, the idea will be to \emph{pre-process} the inputs to the generator constructed in Theorem~\ref{thm:bits_to_cts} by appending appropriate activations at the input layer. We will need the following elementary construction.

\begin{lemma}\label{lem:hxi}
    For any $\xi$ for which $1/\xi\in\R_{\tau}$, the function $h_{\xi}:\R\to[0,1]$ defined by \begin{equation}
        h_{\xi}(x) = \begin{cases}
            -1 & x \le -\xi \\
            x/\xi & |x| < \xi \\
            1 & x \ge \xi
        \end{cases}
    \end{equation}
    can be represented as a network in $\calC^{\tau,1/\xi}_{2,2,1}$.
\end{lemma}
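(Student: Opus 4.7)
The plan is to exhibit an explicit two-layer ReLU network of width $2$ that computes $h_\xi$ exactly, and then verify that its parameters meet the bit complexity, Lipschitz, size, and depth bounds required for membership in $\calC^{\tau,1/\xi}_{2,2,1}$. The natural candidate, mirroring how one typically builds a ``clipped identity'' out of ReLUs, is
\begin{equation}
h_\xi(x) \;=\; \phi\!\left(\tfrac{1}{\xi} x + 1\right) \;-\; \phi\!\left(\tfrac{1}{\xi} x - 1\right) \;-\; 1,
\end{equation}
which corresponds to weight matrices $\vW_1 = (1/\xi,\,1/\xi)^\top \in \R^{2\times 1}$, $\vW_2 = (1,-1) \in \R^{1\times 2}$ and biases $b_1 = (1,-1)^\top$, $b_2 = -1$.

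First I would verify the identity by splitting into the three regions $x \le -\xi$, $|x|<\xi$, $x\ge\xi$. In the leftmost region both ReLU arguments are nonpositive so the expression collapses to $-1$; in the middle region only the first ReLU fires, yielding $(x/\xi + 1) - 0 - 1 = x/\xi$; in the rightmost region both fire, giving $(x/\xi+1) - (x/\xi - 1) - 1 = 1$. This matches the piecewise definition exactly.

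Next I would check that the construction lies in the claimed class. The network has depth $L=2$ and a single hidden layer of width $k_1 = 2$, so $S = k_1 = 2$ as required. The only nonconstant weight entries are $\pm 1/\xi$, which lie in $\R_\tau$ by hypothesis that $1/\xi \in \R_\tau$; the biases $\pm 1$ and the final offset $-1$ trivially lie in $\R_\tau$. Finally, the function itself has slope $1/\xi$ on $(-\xi,\xi)$ and is constant elsewhere, so $\Lip(h_\xi) = 1/\xi$, meeting the required Lipschitz bound $\Lambda = 1/\xi$.

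There is no real obstacle here: the construction is a standard ``difference of shifted ReLUs forms a clipped ramp'' trick, and all the bookkeeping (bit complexity, width, depth, Lipschitz constant) is immediate from the explicit parameters above. The only minor point to be careful about is ensuring that the weights and biases, as written, lie in $\R_\tau$ given only the hypothesis $1/\xi \in \R_\tau$; this is why I scaled the ReLU inputs by $1/\xi$ (rather than shifting by $\pm\xi$, which would have required $\xi \in \R_\tau$ as well).
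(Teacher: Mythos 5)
Your proof is correct and is essentially identical to the paper's: the same decomposition $h_\xi(x) = \phi(x/\xi+1) - \phi(x/\xi-1) - 1$, the same weight matrices and biases, and the same bookkeeping of depth, size, bit complexity, and Lipschitz constant. No discrepancies.
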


\begin{proof}
    Note that 
    \begin{equation}
        h_{\xi}(x) = \phi(x/\xi + 1) - \phi(x/\xi - 1) - 1, \label{eq:realize_h}
    \end{equation} so we can take weight matrices 
    \begin{equation}
        \vW_1 = \begin{pmatrix}
            1/\xi \\
            1/\xi
        \end{pmatrix} \qquad \vW_2 = \begin{pmatrix}
            1 & -1
        \end{pmatrix}
    \end{equation}
    and biases $b_1 = (1,-1)$ and $b_2 = -1$. Note that $h_{\xi}$ is $1/\xi$-Lipschitz. We conclude that $h_{\xi}\in \calC^{\tau,1/\xi}_{2,2,1}$.
\end{proof}

The function $h_{\xi}$ will let us approximately convert from $\gamma_m$ to $U_m$. Specifically, the following says that if we want to approximate the output distribution of the generator in Theorem~\ref{thm:bits_to_cts} using Gaussians instead of bits as seed, it suffices to attach entrywise applications of $h_{\xi}$ at the input layer:

\begin{lemma}\label{lem:gauss_to_bit}
    Let $G_0:\R^m\to\R^d$ have output coordinates computable by networks in $\calC^{\tau'',\Lambda''}_{L'',S'',m''}$. For any $\epsilon > 0$, let $\xi'\triangleq \epsilon/\left(\Lambda''\sqrt{(2/\pi) m^3 d}\right)$ and let $\xi$ be the  multiplicative inverse of $\ceil{1/\xi'}$.

    The function $G:\R^m\to\R^d$ given by $G(x) = G_0(h_{\xi}(x))$, where $h_{\xi}(x)$ denotes entrywise application of $h_{\xi}$ defined in Lemma~\ref{lem:hxi}, satisfies that 1) each of the output coordinates of $G$ is computable by a network in $\calC^{\tau,\Lambda}_{L,S,m}$ for the parameters $\tau = \max(\tau'',O(\log(\Lambda'' m d/\epsilon)))$, $\Lambda = O({\Lambda''}^2 m^2\sqrt{d}/\epsilon)$, $L = L'' + 2$, and $S = 3m + S''$, and 2) $W_1(G_0(U_m), G(\gamma_m)) \le \epsilon$.
\end{lemma}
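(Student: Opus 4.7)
\emph{Approach.} I would construct $G$ as the composition $G_0 \circ \widetilde{h}_\xi$, where $\widetilde{h}_\xi: \R^m \to \R^m$ applies the ``sign approximation'' gadget $h_\xi$ from Lemma~\ref{lem:hxi} coordinate-wise. The key fact is that $h_\xi(x_i) = \sgn(x_i)$ whenever $|x_i| \ge \xi$, while $|h_\xi(x_i) - \sgn(x_i)| \le 1$ otherwise; thus for $x\sim \gamma_m$ the vector $\widetilde{h}_\xi(x)$ is close in expected $\ell_2$-norm to $\sgn(x) \sim U_m$. Since $G_0$ is already designed to act on $U_m$-distributed input, this reduces the Gaussian-seed setting to controlling the Wasserstein error introduced by replacing $U_m$ with $\widetilde{h}_\xi(\gamma_m)$ as input to $G_0$.

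\emph{Complexity bookkeeping.} Observe first that $1/\xi = \ceil{1/\xi'}$ is a positive integer of magnitude $O(\Lambda'' m^{3/2}\sqrt{d}/\epsilon)$, so it lies in $\R_{\tau_1}$ for $\tau_1 = O(\log(\Lambda'' md/\epsilon))$. Lemma~\ref{lem:hxi} then gives $h_\xi \in \calC^{\tau_1, 1/\xi}_{2, 2, 1}$, and each output coordinate of $\widetilde{h}_\xi$, viewed as a function $\R^m \to \R$ that depends on a single input coordinate, lies in $\calC^{\tau_1, 1/\xi}_{2, 2, m}$. Invoking Lemma~\ref{lem:compose_complex} with $r = m$, parameters $(L_1, S_1, \Lambda_1) = (2, 2, 1/\xi)$ for $\widetilde{h}_\xi$, and $(L_2, S_2, \Lambda_2) = (L'', S'', \Lambda'')$ for each output coordinate of $G_0$, every output coordinate of $G = G_0 \circ \widetilde{h}_\xi$ lies in $\calC^{\tau, \Lambda}_{L, S, m}$ with $\tau = \max(\tau'', \tau_1)$, $L = L'' + 2$, $S = (2+1)m + S'' = 3m + S''$, and $\Lambda = (1/\xi)\Lambda''\sqrt{m} = O((\Lambda'')^2 m^2 \sqrt{d}/\epsilon)$, matching all of the claimed bounds.

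\emph{Wasserstein bound and main obstacle.} For the distributional claim, couple $x \sim \gamma_m$ with $u \triangleq \sgn(x)$; symmetry of the Gaussian coordinates ensures $u \sim U_m$ marginally. For any $1$-Lipschitz $f: \R^d \to \R$,
\[\abs*{\E{f(G(x))} - \E{f(G_0(u))}} \le \E{\norm{G_0(\widetilde{h}_\xi(x)) - G_0(\sgn(x))}} \le \Lambda''\sqrt{d}\cdot \E{\norm{\widetilde{h}_\xi(x) - \sgn(x)}},\]
where the last step uses that summing squared per-coordinate Lipschitz bounds makes $G_0: \R^m \to \R^d$ a $\Lambda''\sqrt{d}$-Lipschitz map in $\ell_2$. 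Since $|h_\xi(x_i) - \sgn(x_i)|$ vanishes on $\brc{|x_i| \ge \xi}$, is bounded by $1$ elsewhere, and the standard Gaussian density is at most $1/\sqrt{2\pi}$, direct integration gives $\E{|h_\xi(x_i) - \sgn(x_i)|} \le \xi/\sqrt{2\pi}$. Using $\norm{v} \le \sum_i |v_i|$ and linearity of expectation, $\E{\norm{\widetilde{h}_\xi(x) - \sgn(x)}} \le m\xi/\sqrt{2\pi}$, and plugging in $\xi \le \xi' = \epsilon/(\Lambda''\sqrt{(2/\pi)m^3 d})$ produces $W_1(G(\gamma_m), G_0(U_m)) \le \epsilon/(2\sqrt{m}) \le \epsilon$. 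The only real subtlety is calibrating $\xi$: it must be small enough that the Wasserstein bound closes, yet large enough that $1/\xi$ does not inflate the Lipschitz factor $(1/\xi)\Lambda''\sqrt{m}$ beyond the advertised $O((\Lambda'')^2 m^2 \sqrt{d}/\epsilon)$ or the bit complexity $\log(1/\xi)$ beyond $O(\log(\Lambda''md/\epsilon))$. The stated $\xi'$ is precisely tuned to balance these competing constraints.
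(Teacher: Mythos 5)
Your proof is correct and follows essentially the same approach as the paper's: the same composition $G = G_0 \circ \widetilde{h}_\xi$, the same complexity bookkeeping via Lemma~\ref{lem:compose_complex}, and the same coupling of $\gamma_m$ with $U_m$ via the sign map. The only cosmetic difference is in bounding the expected coupling error: the paper conditions on the event $\calE$ that all $|g_i| \ge \xi$ and applies a union bound together with the crude bound $\|\sgn(h_\xi(g)) - h_\xi(g)\| \le \sqrt{m}$, whereas you integrate the per-coordinate error directly to get $\E[\|\widetilde{h}_\xi(x) - \sgn(x)\|] \le m\xi/\sqrt{2\pi}$; both yield essentially the same bound and land comfortably under $\epsilon$.
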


\begin{proof}
    We first verify that $W_1(G_0(U_m), G(\gamma_m)) \le \epsilon$. Take any $1$-Lipschitz function $f$. Note that we can sample from $U_m$ by sampling a vector $g$ from $\gamma_m$, applying $h_{\xi}$ entrywise to $g$, and replacing each resulting entry of $h_{\xi}(g)$ by its sign; importantly, the last step only affects entries $i\in[m]$ for which $|g_i| < \xi$. 

    We will define $\calE$ to be the event that $|g_i| \ge \xi$ for all $i\in[m]$, noting that \begin{equation}
        \Pr{\calE} \ge 1 - m\cdot \Pr[g\sim\calN(0,1)]{|g| < \xi} \ge 1 - m\xi\sqrt{2/\pi}.
    \end{equation}
    We can thus write
    \begin{align}
        \MoveEqLeft \abs*{\E{f(G_0(U_m))} - \E{f(G(\gamma_m))}} \\
        &= \abs*{\E[g\sim\gamma_m]*{\left(f(G_0(h_{\xi}(g))) - f(G(g))\right) \cdot \bone{\calE} + \left(f(G_0(\sgn(h_{\xi}(g)))) - f(G(g))\right) \cdot \bone{\calE^c}}} \\
        &= \abs*{\E[g\sim\gamma_m]*{\left(f(G_0(\sgn(h_{\xi}(g)))) - f(G_0(h_{\xi}(g)))\right) \cdot \bone{\calE^c}}}. \label{eq:ec}
        \intertext{By Fact~\ref{fact:compose_lipschitz}, $f\circ G_0$ is $\Lambda''\sqrt{d}$-Lipschitz. Furthermore, because $h_{\xi}(g) \in[-1,1]^m$, $\norm{\sgn(h_{\xi}(g)) - h_{\xi}(g)} \le \sqrt{m}$. We can thus upper bound \eqref{eq:ec} by}
        &\le \Lambda''\sqrt{md}\cdot \Pr{\calE^c}\le \Lambda''\xi\sqrt{(2/\pi) m^3 d} \le \epsilon
    \end{align}
    so $W_1(G_0(U_m), G(\gamma_m)) \le \epsilon$ as desired.

    It remains to bound the complexity of $G$. For any $i\in[d]$, we can apply Lemma~\ref{lem:compose_complex} with $f$ given by the $i$-th output coordinate of $G_0$ and $J$ given by the map which applies $h_{\xi}$ to every entry of the input. We thus conclude that $G\in \calC^{\tau,\Lambda}_{L,S,m}$ for $\tau = \max(\tau'',\log_2(1/\xi)) = \max(\tau'',O(\log(\Lambda'' m d/\epsilon)))$, $\Lambda = \Lambda''\sqrt{m}/\xi = O({\Lambda''}^2 m^2\sqrt{d}/\epsilon)$, $L = L'' + 2$, $S = 3m + S''$ as claimed.
\end{proof}

\begin{remark}\label{remark:replace_gauss_with_I}
    Note the only fact we use about $\gamma_m$ in the proof of Lemma~\ref{lem:gauss_to_bit} is that outside of an event with probability $O(m\xi)$, $h_{\xi}(\gamma_m)$ is uniform over $\brc{\pm 1}^m$. In particular, the same would hold for any product measure each of whose coordinates is symmetric and anticoncentrated around zero. For instance, up to a constant factor in $\xi'$, Lemma~\ref{lem:gauss_to_bit} also holds with $\gamma_m$ replaced by $I_m$.
\end{remark}

We are now ready to prove Theorem~\ref{thm:cts_to_cts}:

\begin{proof}[Proof of Theorem~\ref{thm:cts_to_cts}]
    As in the proof of Theorem~\ref{thm:bits_to_cts}, the parameter $m$ will be clear from context in the following, so for convenience we will drop $m$ from subscripts and parenthetical references.

    Substitute the generator constructed in Theorem~\ref{thm:bits_to_cts}, call it $G_0$, into Lemma~\ref{lem:gauss_to_bit}; we will take the $G$ resulting from the lemma to be the generator $G$ in the theorem statement.

    Recall from Theorem~\ref{thm:bits_to_cts} that each output coordinate of $G_0$ is computable by a network in $\calC^{\tau'',\Lambda''}_{L'',S'',m}$ for $\tau'' = \max(O(\log(1/\epsilon)), \tau',O(1))$, $\Lambda'' = O(\Lambda'\poly(m))$, $L'' = L' + O(1)$, $S'' = O(r\log(1/\epsilon)) + S'$. So by Lemma~\ref{lem:gauss_to_bit}, every output coordinate of $G$ is computable by a network in $\calC^{\tau,\Lambda}_{L,S,m}$ for $\tau = \max(\tau'',O(\log(\Lambda' m d/\epsilon))) = \max(O(\log(\Lambda m d/\epsilon)), \tau',O(1))$, $\Lambda = O({\Lambda''}^2 m^2 \sqrt{d}/\epsilon) = O({\Lambda'}^2\poly(m)/\epsilon)$, $L = L'' + 2 = L' + O(1)$, and $S = 3m + S'' = O(r\log(1/\epsilon)) + 3m + S''$.
\end{proof}

\section{Fooling ReLU Networks Would Imply New Circuit Lower Bounds}
\label{sec:hard_to_rand}

In this section we show that even exhibiting generators with logarithmic stretch that can fool all ReLU network discriminators of constant depth and slightly superlinear size would yield breakthrough circuit lower bounds.

First, in Section~\ref{sec:hardness_basics} we review basics about average-case hardness and recall the state-of-the-art for lower bounds against $\mathsf{TC}^0$. Then in Section~\ref{sec:result} we present and prove the main result of this section, Theorem~\ref{thm:randomness_to_hardness}.

\subsection{Average-Case Hardness and \texorpdfstring{$\mathsf{TC}^0$}{TC0}}
\label{sec:hardness_basics}

One of the most common notions of hardness for a class of functions $\calF$ is \emph{worst-case hardness}, that is, the existence of functions which cannot be computed by functions in $\calF$.

\begin{definition}[Worst-case hardness]
    Given a class of Boolean functions $\calF$, a sequence of functions $f_n: \brc{\pm 1}^n\to\brc{\pm 1}$ is \emph{worst-case-hard} for $\calF$ if for every $f:\brc{\pm 1}^n\to\brc{\pm 1}$ in $\calF$, there is some input $x\in\brc{\pm 1}^n$ for which $f(x)\neq f_n(x)$.
\end{definition}

A more robust notion of hardness is that of \emph{average-case hardness}, which implies worst-case hardness. For any $f_n\in\calF$, rather than simply require that there is \emph{some} input on which $f$ and $f_n$ disagree, we would like that over some fixed distribution over possible inputs, the probability that $f$ and $f_n$ output the same value is small. Typically, this fixed distribution is the uniform distribution over $\brc{\pm 1}^n$, but in many situations even showing average-case hardness with respect to less natural distributions is open.

\begin{definition}[Average-case hardness]
    Given a class of Boolean functions $\calF$, a function $\epsilon:\mathbb{N}\to[0,1/2)$, and a sequence of distributions $\brc*{\calD_n}_n$ over $\brc{\pm 1}^n$, a sequence of functions $f_n: \brc{\pm 1}^n\to\brc{\pm 1}$ is \emph{$(1/2 + \epsilon(n))$-average-case-hard for $\calF$ with respect to $\brc{\calD_n}$} if for every $f:\brc{\pm 1}^n\to\brc{\pm 1}$ in $\calF$,
    \begin{equation}
        \Pr[x\sim\calD_n]{f(x) = f_n(x)} \le \frac{1}{2} + \epsilon(n).
    \end{equation}
\end{definition}

By a counting argument, for any reasonably constrained class $\calF$ there must \emph{exist} functions which are worst/average-case hard for $\calF$. A central challenge in complexity theory has been to exhibit \emph{explicit} hard functions for natural complexity classes. In the context of this work, by explicit we simply mean that there is a polynomial-time algorithm for evaluating the function.

The complexity class we will focus on in this section is $\mathsf{TC}^0$, the class of constant-depth linear threshold circuits of polynomial size:

\begin{definition}[Linear threshold circuits]
    A linear threshold circuit of size $S$ and depth $D$ is any Boolean circuit of size $S$ and depth $D$ whose gates come from the set $G$ of all linear threshold functions mapping $x\in\brc{\pm 1}^n$ to $\sgn(\iprod{w,x} - b)$ for some arity $n\in\mathbb{N}$, vector $w\in\R^n$, and bias $b\in\R$. $\mathsf{TC}^0$ is the set of all linear threshold circuits of size $\poly(n)$ and depth $O(1)$.\footnote{Sometimes $\mathsf{TC}^0$ is defined with the gate set taken to consist of $\brc{\wedge,\vee,\neg}$ and majority gates, though these two classes are equivalent up to polynomial overheads \cite{goldmann1992majority,goldmann1998simulating}. Moreover, because a circuit of size $S$ and depth $D$ using the latter gate set is clearly implementable by a circuit of size $S$ and depth $D$ using the former gate set, so our lower bounds against the former gate set immediately translate to ones against the latter.}
\end{definition}

The best-known worst-case hardness result for $\mathsf{TC}^0$ is that of \cite{impagliazzo1997size} who showed:

\begin{theorem}[\cite{impagliazzo1997size}]\label{thm:ips}
    Let $f_n:\brc{\pm 1}^n\to\brc{\pm 1}$ be the parity function on $n$ bits. For any depth $D \ge 1$, any linear threshold circuit of depth $D$ must have at least $n^{1 + c\theta^{-D}}$ wires, where $c>0$ and $\theta > 1$ are absolute constants.
\end{theorem}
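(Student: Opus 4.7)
The plan is to prove this lower bound by induction on the depth $D$, using the method of random restrictions adapted to handle weighted threshold gates. The strategy parallels the classical switching lemma proofs of $\mathsf{AC}^0$ lower bounds for parity, but requires a substantially more delicate restriction lemma because threshold gates can be arbitrarily weighted rather than computing simple ANDs or ORs.

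For the base case $D=1$, a single linear threshold function cannot compute parity on $n\ge 2$ bits (parity alternates sign under coordinate flips, whereas a threshold function is monotone along any antipodal pair once the other coordinates are fixed), so the required wire bound is vacuous. For the inductive step, I would take a depth-$D$ threshold circuit $C$ with $W$ wires computing parity and apply a random restriction $\rho$ that leaves each input free with probability $p = n^{-\delta}$ for an appropriate small $\delta = \delta(D)$ and otherwise assigns it a uniform $\pm 1$. With high probability roughly $pn$ variables survive, and $\mathrm{Parity}|_\rho$ is, up to sign, parity on the surviving variables.

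The technical heart of the argument is a threshold-gate restriction lemma: if a linear threshold gate reads at most $k$ variables and $pk$ is sufficiently small, then with high probability $\rho$ collapses the gate to a constant. Granting this, bottom-level gates with fewer than $k \approx 1/(cp)$ wires vanish into constants after $\rho$, while gates with at least $k$ wires can number at most $W/k$. Absorbing the collapsed gates yields a depth-$(D-1)$ threshold circuit on $n' \approx pn$ variables with at most $\tilde{W} \approx W/k$ wires, still computing parity. Invoking the inductive hypothesis, $\tilde W \ge (n')^{1 + c\theta^{-(D-1)}}$; substituting back and solving the resulting recurrence $W \ge k \cdot (pn)^{1 + c\theta^{-(D-1)}}$ with $p = n^{-\delta}$ and $k = n^{\delta}/c$ yields the claimed bound $n^{1 + c\theta^{-D}}$ for a suitable constant $\theta > 1$ that absorbs the $p$-loss in each inductive step.

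The main obstacle is proving the threshold-gate restriction lemma with strong enough parameters. Unlike AND/OR gates, a threshold gate is determined by whether a weighted sum crosses a threshold, so restriction does not obviously ``simplify'' it. The natural approach is a case split on the gate's bias under the uniform distribution: if the gate is heavily biased, a Chernoff-type bound shows that almost every restriction already fixes its output; if the gate is nearly balanced, then the weighted sum is concentrated around its threshold, and one controls it via small-ball (Littlewood--Offord-type) anti-concentration estimates on the random subset-sums induced by $\rho$. Calibrating $p$, $k$, and the induction constants $c, \theta$ so that the collapse probability survives a union bound over all $W$ gates while still leaving $\Omega(pn)$ free variables is where the bulk of the work lies.
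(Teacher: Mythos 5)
The paper does not prove this theorem; it is quoted verbatim from Impagliazzo--Paturi--Saks \cite{impagliazzo1997size} and used only as motivation for Theorem~\ref{thm:randomness_to_hardness}, so there is no in-paper proof to compare against. Your high-level plan---induction on depth via random restrictions, with a threshold-gate restriction lemma playing the role of H{\aa}stad's switching lemma---is in the right spirit, and you correctly identify anti-concentration/Littlewood--Offord as the relevant tool for the nearly balanced case. However, the restriction lemma you posit cannot support the union bound you invoke, and the inductive bookkeeping does not close.

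Concretely, you want every bottom gate reading fewer than $k \approx 1/(cp)$ wires to collapse to a \emph{constant} under $\rho$ ``with high probability.'' But for a majority-type gate on $k$ equal weights with threshold near the median, the gate stays non-constant essentially whenever at least one of its wires survives, which happens with probability $1-(1-p)^k \approx 1 - e^{-pk} = \Omega(1/c)$---a constant bounded away from zero. A union bound over $W = n^{1+\Theta(\theta^{-D})}$ gates therefore fails catastrophically; to drive the per-gate failure probability below $1/W$ you would need $p = o(1/W) \le o(1/n)$, at which point no free variables remain for the inductive hypothesis. Moreover, even if all light gates did vanish, you have not said what happens to the heavy gates: observing that they ``number at most $W/k$'' does not remove them or shrink their wires, so the restricted circuit still has depth $D$, not $D-1$, and your claim that it has $\approx W/k$ wires does not follow. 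The actual IPS argument does not insist that every gate collapse to a constant. It instead allows bottom gates to survive with exactly one free input (and merges the resulting single-literal gates into the next layer to achieve the depth reduction), tracks the \emph{expected} wire count of non-trivially surviving gates rather than a worst-case union bound, and relies on a finer structural decomposition of threshold gates---a small set of dominant coordinates plus an anti-concentrated tail---to show that ``survival with $\ge 2$ free inputs'' is rare in the right averaged sense. These ideas are where the theorem lives, and your sketch as written leaves all of them open.
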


In \cite{chen_et_al:LIPIcs:2016:5844} this worst-case hardness result was upgraded to an average-case hardness result with respect to the uniform distribution over the hypercube. Remarkably, this slightly superlinear lower bound from \cite{impagliazzo1997size} has not been improved upon in over two decades!

We remark that the discussion about lower bounds for threshold circuits is a very limited snapshot of a rich line of work over many decades. We refer to the introduction in \cite{chen2019bootstrapping} for a more detailed overview of this literature.

\subsection{Hardness Versus Randomness for GANs}
\label{sec:result}

For convenience, given sequences of parameters $D(m),S(m)\in\mathbb{N}$ (these will eventually correspond to the depth and size of the linear threshold circuits against which we wish to show lower bounds) let 
\begin{equation}
    \calC_d(D,S)\triangleq\calC^{\poly(d),\poly(d)}_{\Theta(D),3d+\Theta(S),d}.
\end{equation}
This will comprise the family of ReLU network discriminators that we will focus on. We now show that if one could exhibit generators that can provably fool discriminators in $\calC_d(D,S)$, then this would translate to average-case hardness against linear threshold circuits of depth $D$ and size $D$. Formally, we show the following:

\begin{theorem}\label{thm:randomness_to_hardness}
    There is an absolute constant $c > 0$ for which the following holds. Fix sequences of parameters $D(m),S(m)\in\mathbb{N}$. Suppose there is an explicit\footnote{By \emph{explicit}, we mean that we are provided a way to evaluate these functions in polynomial time.} sequence of generators $G_m: \R^m\to\R^{d(m)}$ for $d(m) \ge c m\log m$ such that $W_{\calC_{d(m)}(D(m),S(m))}(G_m(I_m), I_{d(m)}) \le \epsilon(m)$ for some $\epsilon(m) \ge 1/\poly(m)$ and such that each output coordinate of $G_m$ is computable by a network in $\calF^*$, then there exists a sequence of functions $h_{d(m)}: \brc{\pm 1}^{d(m)}\to\brc{\pm 1}$ in $\mathsf{NP}$ which are $(1/2 + \epsilon(m)/2 + m^{-\Omega(m)})$-average-case-hard with respect to some sequence of explicit distributions $\brc{\calD_m}$ for linear threshold circuits of depth $D(m)$ and size $S(m)$.
\end{theorem}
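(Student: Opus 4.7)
The plan is a hardness-versus-randomness argument: use the hypothetical generator $G_m$ to exhibit an explicit function in $\mathsf{NP}$ whose approximation by low-depth threshold circuits would violate the fooling hypothesis. Let $T:[0,1]^d\to\brc{\pm 1}^d$ denote coordinate-wise thresholding at $1/2$, noting $T(I_d)=U_d$. Fix $N=\poly(m)$, let $\Gamma_N\subset[0,1]^m$ be the grid of multiples of $2^{-N}$, and define
\begin{equation}
    h_d(b) \triangleq \begin{cases}+1 & b \in T(G_m(\Gamma_N))\\ -1 & \text{otherwise}\end{cases},\qquad \calD_m \triangleq \tfrac12\, T(G_m(U_{\Gamma_N})) + \tfrac12\, U_d.
\end{equation}
Then $h_d\in\mathsf{NP}$ with the $mN$-bit seed $s\in\Gamma_N$ as witness (verification is polynomial time since $G_m\in\calF^*$), and $\calD_m$ is explicitly sampleable. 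Taking the constant $c$ in $d\ge cm\log m$ large relative to $N$ forces $|T(G_m(\Gamma_N))|\le 2^{mN}\le 2^{d}\cdot m^{-\Omega(m)}$, so $h_d(b)=-1$ for all but an $m^{-\Omega(m)}$ fraction of $b\sim U_d$.

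Now suppose, toward contradiction, that some linear threshold circuit $C$ of depth $D(m)$ and size $S(m)$ achieves accuracy on $\calD_m$ exceeding $\tfrac12+\tfrac{\epsilon(m)}{2}+m^{-\Omega(m)}$. Splitting the accuracy over the two halves of the mixture and using the image bound above gives
\begin{equation}
    \Pr[\calD_m]{C=h_d} = \tfrac12 + \tfrac14\bigl(\E[b\sim T(G_m(U_{\Gamma_N}))]{C(b)} - \E[b\sim U_d]{C(b)}\bigr) \pm m^{-\Omega(m)},
\end{equation}
so after rearranging, $C$ distinguishes $T(G_m(U_{\Gamma_N}))$ from $U_d$ with advantage at least $2\epsilon(m)$.

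The next step lifts $C$ to a ReLU discriminator in $\calC_{d(m)}(D(m),S(m))$ for the continuous problem $G_m(I_m)$ vs.\ $I_d$, via two local replacements. First, each linear threshold gate $\sgn(\iprod{w,x}-b)$ is simulated by a $\poly(d)$-Lipschitz ReLU gadget of $O(1)$ size and depth, producing a ReLU network $C':\brc{\pm 1}^d\to[-1,1]$ of depth $\Theta(D(m))$ and size $\Theta(S(m))$. Second, at the input layer I append a shifted coordinate-wise copy of the Lipschitz surrogate $h_\xi$ from Lemma~\ref{lem:hxi} (with $\xi=1/\poly(m)$) that converts $[0,1]$-valued inputs into approximately $\brc{\pm 1}$-valued ones; this costs $3d$ neurons and two extra layers, matching the $3d+\Theta(S)$ size in the definition of $\calC_d(D,S)$. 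Setting $\tilde f = C'\circ (\text{shifted }h_\xi)$ and combining (i) the bound $W_1(I_m,U_{\Gamma_N}) \le 2^{-N}\sqrt{m}$ with the polynomial Lipschitzness of $G_m$ and $\tilde f$, (ii) the fooling hypothesis applied to a $\calC_d$-implementable auxiliary network detecting ``some coordinate of the input lies in the band $[1/2-\xi,1/2+\xi]$,'' and (iii) the $O(d\xi)$ expected disagreement between $T$ and the shifted $h_\xi$ under $I_d$, one shows $\tilde f$ distinguishes $G_m(I_m)$ from $I_d$ with advantage $\Omega(\epsilon(m))$, contradicting the hypothesis.

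The principal obstacle is this last transfer step: swapping the non-Lipschitz $T$ for the Lipschitz $h_\xi$ introduces errors proportional to the $G_m(I_m)$-measure of a narrow band around $1/2$, which cannot be bounded from the Wasserstein-type hypothesis without a self-referential use of the fooling property itself. Choosing $\xi$, $N$, and the constant $c$ carefully so that all accumulated errors (grid-vs-continuous, $T$-vs-$h_\xi$, gate simulation, and the $m^{-\Omega(m)}$ correction) remain strictly below the $2\epsilon(m)$ gap while keeping $\tilde f$ within the precise budget $\calC^{\poly(d),\poly(d)}_{\Theta(D),3d+\Theta(S),d}$ is the technical heart of the argument.
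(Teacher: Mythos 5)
Your overall plan is the same as the paper's: convert the hypothetical GAN into a discrete PRG via a grid seed, then apply the standard ``PRG implies average-case hardness'' transformation (the range indicator $h_d$ together with the mixture distribution), and transfer distinguishers between the discrete and continuous worlds using a Lipschitz surrogate for the threshold. Two concrete issues. First, $N=\poly(m)$ cannot be right: for $|T(G_m(\Gamma_N))|\le 2^{mN}\le 2^{d}\cdot m^{-\Omega(m)}$ to hold with $d=\Theta(m\log m)$ you need $mN\le d-\Omega(m\log m)$, i.e.\ $N=O(\log m)$; the paper takes the per-coordinate precision to be $\Theta(\log(m/\epsilon))=\Theta(\log m)$ bits, which is exactly what makes the seed length $\Theta(m\log m)$ and the failure term $m^{-\Omega(m)}$ work out.

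Second, and more importantly, the ``principal obstacle'' you flag is a genuine gap in the proposal, and the fix you sketch does not close it. You define the discrete PRG as $T\circ G_m$ (hard threshold at the output), which forces you, when transferring to a continuous discriminator $\tilde f = C'\circ h_\xi^{\mathrm{shift}}$, to bound the probability that some coordinate of $G_m(\cdot)$ lies in the $\xi$-band. Applying the fooling hypothesis to a band-detecting network can only give $\Pr_{G_m(I_m)}[\text{band}]\le O(d\xi)+\epsilon(m)$, so the resulting swap error is $\ge 2\epsilon(m)$ in the worst case, which already exhausts the $2\epsilon(m)$ advantage you extracted from the accuracy assumption; the remaining continuous advantage can be $\le 0$, or at best you would prove a strictly weaker hardness constant than the claimed $1/2+\epsilon(m)/2+m^{-\Omega(m)}$. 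The paper avoids this entirely by defining the PRG with the \emph{Lipschitz} surrogate at the output, $G' = h_\xi\circ G_m\circ J$, rather than with $T$. Then the only place a hard-threshold-versus-$h_\xi$ discrepancy ever arises is when comparing $\E{f(U_d)}$ to $\E{f(h_\xi(I_d))}$, i.e.\ a band probability under the \emph{known} nice distribution $I_d$, which is trivially $O(d\xi)$ with no $\epsilon(m)$ term. The comparison on the generator side is purely a Wasserstein/Lipschitz step between $J(U_{mn})$ and $I_m$ through the Lipschitz map $f\circ h_\xi\circ G_m$, and never needs an anticoncentration bound for $G_m(I_m)$. If you replace $T$ by $h_\xi^{\mathrm{shift}}$ in the definitions of $h_d$ and $\calD_m$ (noting that the range indicator and the mixture still make sense when the PRG's image sits in $[-1,1]^d$ rather than $\{\pm1\}^d$, and that Lemma~\ref{lem:random_to_hard} uses nothing beyond a cardinality bound on the range), the rest of your outline matches the paper's Lemma~\ref{lem:gan_to_prg} plus Lemma~\ref{lem:random_to_hard}.
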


\begin{remark}\label{remark:lbs}
    In particular, this shows that if we could exhibit explicit generators fooling all discriminators given by neural networks of polynomial Lipschitzness/bit complexity of depth $D(m)$ and size $O(d(m)^{1 + \exp(-D(m)^{.99})})$, then by \eqref{eq:size_vs_wires} we would get new average-case circuit lower bounds for $\mathsf{TC}^0$. In fact it was shown by \cite{chen2019bootstrapping} that such a result would imply $\mathsf{TC}^0\neq \mathsf{NC}^1$, which would be a major breakthrough in complexity. This can be interpreted in one of two ways: 1) it would be extraordinarily difficult to show that a particular generative model truly fools all constant-depth, barely-superlinear-size ReLU network discriminators, or 2) gives a learning-theoretic motivation for trying to prove circuit lower bounds.
\end{remark}

Regarding the proof of Theorem~\ref{thm:randomness_to_hardness}, note that the statement is closely related to existing well-studied connections between hardness and randomness in the study of pseudorandom generators. In fact, readers familiar with this literature will observe that Theorem~\ref{thm:randomness_to_hardness} is the GAN analogue of the ``easy'' direction of the equivalence between hardness and randomness: an explicit pseudorandom generator that fools some class of functions implies average-case-hardness for that class. 

In order to leverage this connection however, we need to formalize the link between GANs (over continuous domains) and pseudorandom generators (over discrete domains) in the next lemma. It turns out that in the preceding sections we already developed most of the ingredients for establishing this connection.

% GAN implies PRG
\begin{lemma}\label{lem:gan_to_prg}
    Suppose there is an explicit sequence of generators $G_m: \R^m\to\R^{d(m)}$ such that $W_{\calC_{d(m)}(D(m),S(m))}(G_m(I_m), I_{d(m)}) \le \epsilon(m)$ for some $\epsilon(m) = 1/\poly(m)$ and such that each output coordinate of $G_m$ is computable by a network in $\calF^*$. Then there is an explicit sequence of pseudorandom generators $G'_m: \brc{\pm 1}^{n(m)}\to\brc{\pm 1}^{d(m)}$ for $n(m) = \Theta(m\log m)$ that $2\epsilon(m)$-fool linear threshold circuits of depth $D(m)$ and size $S(m)$.
\end{lemma}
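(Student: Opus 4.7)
\textbf{The plan is to construct $G'_m$ as a pipeline: bits $\to$ reals (via a block-wise linear decoder) $\to G_m$ $\to$ coordinatewise threshold at $1/2$.} Set $\ell = \Theta(\log m)$ so that $n(m) = m\ell$, split $x \in \brc{\pm 1}^{n(m)}$ into $m$ consecutive blocks of $\ell$ bits, and apply the block-wise affine map $\mathrm{conv}\colon \brc{\pm 1}^{n(m)} \to [0,1]^m$ obtained from Fact~\ref{fact:bitstoI}, so that $W_1(\mathrm{conv}(U_{n(m)}), I_m) \le \sqrt{m}\cdot 2^{-\ell}$ can be driven below any prescribed inverse polynomial. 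Let $R\colon \R^{d(m)} \to \brc{\pm 1}^{d(m)}$ apply $z \mapsto \sgn(z - 1/2)$ entrywise and define $G'_m(x) = R(G_m(\mathrm{conv}(x)))$. This is explicit since $G_m$ is, and note that $R(I_{d(m)}) = U_{d(m)}$ by coordinatewise symmetry.

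\textbf{Next, I would lift an arbitrary LTC distinguisher to a ReLU discriminator.} Fix a threshold circuit $C$ of depth $D(m)$ and size $S(m)$. After standard normalizations (integer weights of polynomial magnitude, half-integer biases), on $\brc{\pm 1}$-valued inputs every gate's pre-threshold value has margin at least $1/2$; replacing each gate $\sgn(\iprod{w,z} - b)$ by the ReLU gadget $h_{\xi_0}(\iprod{w,z} - b)$ from Lemma~\ref{lem:hxi} for a fixed $\xi_0 < 1/2$ then yields a ReLU network $\tilde C$ that agrees with $C$ exactly on $\brc{\pm 1}^{d(m)}$ while remaining uniformly bounded by $1$. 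Prepending a smooth rounding layer, define
\[
    f(y) = \tilde C\bigl(h_\xi(y_1 - 1/2), \ldots, h_\xi(y_{d(m)} - 1/2)\bigr)
\]
for a parameter $\xi = \Theta(\epsilon(m)/d(m))$. The input layer contributes size $O(d(m))$ and depth $O(1)$, and composing with $\tilde C$ and controlling Lipschitzness via Lemma~\ref{lem:compose_complex} confirms $f \in \calC_{d(m)}(D(m), S(m))$.

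\textbf{The error analysis proceeds by triangle inequality.} Let $E_y$ be the event that $|y_i - 1/2| < \xi$ for some $i$. Off $E_y$, $h_\xi(y_i - 1/2) = \sgn(y_i - 1/2) = R(y)_i$, so $f(y) = \tilde C(R(y)) = C(R(y))$; using $|\tilde C|\le 1$, both $|\E{f(I_{d(m)})} - \E{C(U_{d(m)})}|$ and $|\E{f(G_m(\mathrm{conv}(U_{n(m)})))} - \E{C(G'_m(U_{n(m)}))}|$ are of order the probability of $E_y$ under the respective distribution. Under $I_{d(m)}$ this probability is $O(\xi d(m))$; to bound it under $G_m(\mathrm{conv}(U_{n(m)}))$, I would introduce the trapezoidal Lipschitz witness $\hat\psi(y) = \sum_i \hat\chi(y_i - 1/2)$ with $\hat\chi$ an $O(1/\xi)$-Lipschitz cutoff majorizing $\bone{|t| < \xi}$. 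Since $\hat\psi$ itself lies in $\calC_{d(m)}(D(m), S(m))$ and $\E[y\sim I_{d(m)}]{\hat\psi(y)} = O(\xi d(m))$, applying the $W_{\calC_{d(m)}}$-hypothesis to $\hat\psi$, together with a Lipschitz transfer through $G_m$ exploiting the tiny $W_1(\mathrm{conv}(U_{n(m)}), I_m)$, gives $\Pr[y\sim G_m(\mathrm{conv}(U_{n(m)}))]{E_y} \le O(\xi d(m)) + \epsilon(m) + o(\epsilon(m))$. Combining this with the hypothesis $|\E{f(G_m(I_m))} - \E{f(I_{d(m)})}| \le \epsilon(m)$ and a final Lipschitz transfer from $G_m(\mathrm{conv}(U_{n(m)}))$ to $G_m(I_m)$, and choosing $\xi$ so that $\xi d(m) \ll \epsilon(m)$, assembles (up to tuning the hidden constants) into $|\E{C(G'_m(U_{n(m)}))} - \E{C(U_{d(m)})}| \le 2\epsilon(m)$.

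\textbf{The main obstacle} is that $R$ is discontinuous on the coordinate hyperplanes $\brc{y : y_i = 1/2}$, so any ReLU proxy of $R$ disagrees with $R$ on a thin shell around these hyperplanes whose mass under $G_m(I_m)$ is only accessible through the $W_{\calC_{d(m)}}$-closeness hypothesis. Packaging this small-ball control as a Lipschitz witness $\hat\psi$ that itself resides in the fooled class, and tracking the resulting constants through the triangle inequality so the final bound remains a small multiple of $\epsilon(m)$, is the delicate step; once this is in place, verifying that $f$ fits inside $\calC_{d(m)}(D(m), S(m))$ with the prescribed depth, size, Lipschitz, and bit-complexity budget is essentially an application of Lemma~\ref{lem:compose_complex}.
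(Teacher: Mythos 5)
Your construction is the same pipeline the paper uses: block-wise linear decode $\brc{\pm 1}^{m\ell}\to[0,1]^m$ via Fact~\ref{fact:bitstoI}, push through $G_m$, round to $\brc{\pm 1}^d$, and on the discriminator side lift a threshold circuit to a ReLU network by replacing $\sgn$-gates with clamped affine gadgets (the paper's $h_{\xi}$, your $h_{\xi_0}$), then compose with an input clamping layer. This matches the paper's three-step decomposition (lifted discriminator $\in\calC_d(D,S)$; closeness of $U_d$ to the clamped $I_d$; closeness of the decoded-bit seed to $I_m$).

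Where you genuinely diverge from, and improve on, the paper is in the rounding step. The paper defines $G' = h_{\xi}\circ G\circ J$, claims it maps into $\brc{\pm 1}^d$, and never accounts for the probability that $G(J(U_{mn}))$ lands within $\xi$ of a threshold hyperplane so that $h_{\xi}$ fails to output $\pm 1$; the paper also applies $h_{\xi}$ (which thresholds at $0$) directly to distributions supported on $[0,1]$, where the correct center is $1/2$. You fix both: you round via $R(z)=\sgn(z-1/2)$ (so $R(I_d)=U_d$ cleanly), and you explicitly control the shell event $E_y$ under the generator's pushforward by exhibiting a trapezoidal witness $\hat\psi$ that lies in $\calC_d(D,S)$ and invoking the $W_{\calC_d}$-hypothesis on it. This is the right way to make the last step rigorous, and it is a step the paper elides.

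One caveat on the arithmetic: your final assembly invokes the $W_{\calC_d}$-hypothesis twice (once for $f$, once for $\hat\psi$), which naturally yields a bound of roughly $3\epsilon$ rather than $2\epsilon$ before constant tuning. The paper claims $2\epsilon$ precisely because it silently omits the shell term, so you should not expect to match its constant while also being rigorous; either state the conclusion with a slightly larger constant (which is harmless for Theorem~\ref{thm:randomness_to_hardness} since it only shifts $\epsilon'(m)$ by a constant factor), or scale $\xi$ and the witness so the extra budget contributes $o(\epsilon)$ by using the fact that $\E[I_d]{\hat\psi}=O(\xi d)$ can be driven well below $\epsilon$; either way, flag the constant explicitly rather than hiding it behind "tuning."
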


\begin{proof}
    As in the proofs of the theorems from Section~\ref{sec:main}, the parameter $m$ will be clear from context, so we will drop $m$ from subscripts and parenthetical references.

    Recall the function $h_{\xi}$ from Lemma~\ref{lem:hxi}; we will take $\xi = \epsilon/\poly(m)$. Also define $n \triangleq \Theta(\log(m/\epsilon))$ and recall from the proof of Lemma~\ref{lem:approx_cube} the definition of the linear function $J:\R^{mn}\to\R^m$: for every $i\in[m]$, the $i$-th output coordinate of $J$ is the linear function which maps $x\in\R^{mn}$ to $\iprod{w_i,x+\vec{1}}$, where $w_i$ is zero outside of indices $\brc{(i-1)\cdot n + 1,\ldots, i\cdot n}$ and equal to the vector $(1/4,1/8,\ldots,1/2^{n+1})$ on those indices.

    Given generator $G$ fooling $\calC_{d}$, we will show that the Boolean function $G':\brc{\pm 1}^{mn}\to\brc{\pm 1}^d$ given by \begin{equation}
        G' = h_{\xi}\circ G\circ J
    \end{equation} is a pseudorandom generator that fools $\mathsf{TC}^0$ circuits. To that end, suppose there was a $\mathsf{TC}^0$ circuit $f:\brc{\pm 1}^d\to\brc{\pm 1}$ for which $\abs{\E{f(G'(U_{mn}))} - \E{f(U_d)}} > 2\epsilon$. We will show that this implies the existence of a ReLU network $f' \in \calC_d(D,S)$ for which $\abs{\E{f'(G(I_m))} - \E{f'(I_d)}} > \epsilon$.

    Our proof proceeds in three steps: argue that \begin{enumerate}
        \item $f\circ h_{\xi}\in\calC_d(D,S)$
        \item $\E{f(U_d)} \approx \E{f(h_{\xi}(I_d))}$
        \item $\E{G'(U_{mn})}\approx \E{f(h_{\xi}(G(I_m)))}$
    \end{enumerate} 
    Note that 2 and 3, together with the fact that $f$ is a discriminator for $G'$, imply that $f'\triangleq f\circ h_{\xi}$ is a discriminator for $G$. 1 then ensures that this discriminator is a ReLU network with the right complexity bounds, yielding the desired contradiction.

    To show step 1, we will show that $f$ can be computed by a network in $\calC^{\poly(d),\poly(d)}_{O(1),\poly(d),d}$ and then apply Lemma~\ref{lem:compose_complex} and Lemma~\ref{lem:hxi}. Suppose the threshold circuit computing $f$ has depth $D$, where $D$ is some constant. Recall from Lemma~\ref{lem:sync} that we may assume, up to an additional blowup in size by $D$, that the constant-depth threshold circuit $C$ computing $f$ is comprised of layers $S_1,\ldots,S_D$ such that $S_i$ consists of all gates in $C$ for which any path from the inputs to the gate is of length $i$.

    Let $k_i$ denote the number of gates in $S_i$ (where $k_D = 1$), and for each $j\in[k_i]$, suppose the linear threshold function computed by the $j$-th gate in $S_i$ is given by $\sgn(\iprod{w_{i,j},\cdot} - b_{i,j})$ for $w_{i,j}\in\R^{k_{i-1}}$. As each linear threshold takes at most $\poly(d)$ bits as input, we can assume without loss of generality that $b_{i,j}$ and the entries of $w_{i,j}$ lie in $\R_{\tau}$ for $\tau = \poly(d)$. For this $\tau$, note that for any $w\in\R^k_{\tau}, b\in\R_{\tau}, x\in\brc{\pm 1}^k$, \begin{equation}
        \sgn(\iprod{w,x} - b) = h_{\xi'}\left(\iprod{w,x} - b\right),
    \end{equation}
    for some $\xi' = 1/\poly(d)$, where $h_{1/\poly(d)}(\cdot)$ is the function defined in Lemma~\ref{lem:hxi}, and recall from the proof of Lemma~\ref{lem:hxi} that it can be represented as a two-layer ReLU network via \eqref{eq:realize_h}. For every $i\in[D]$, we can thus define two weight matrices $\vW^{(1)}_i\in\R^{2k_i\times k_{i-1}}$ and $\vW^{(2)}_i\in\R^{k_i\times 2k_i}$ by
    \begin{equation}
        \vW^{(1)}_i = \frac{1}{\xi'}\cdot \begin{pmatrix}
            \horzbar & w_{i,1} & \horzbar \\
            \horzbar & w_{i,1} & \horzbar \\
            \vdots & \vdots & \vdots \\
            \horzbar & w_{i,k_i} & \horzbar \\
            \horzbar & w_{i,k_i} & \horzbar
        \end{pmatrix} \qquad
        \vW^{(2)}_i = \begin{pmatrix}
            1 & - 1 & 0 & 0 & \cdots & 0 & 0 \\
            0 & 0 & 1 & -1 & \cdots & 0 & 0\\
            \vdots & \vdots & \vdots & \vdots & \ddots & \vdots & \vdots \\
            0 & 0 & 0 & 0 & \cdots & 1 & -1
        \end{pmatrix}
    \end{equation}
    and biases $b^{(1)}_i\in\R^{2k_i}$ and $b^{(2)}_i\in\R^{k_i}$ by
    \begin{equation}
        b^{(1)}_i = (1,-1,1,-1,\ldots,1,-1) \qquad b^{(2)}_i = (-1,\ldots,-1)
    \end{equation}
    so that for all $x\in\brc{\pm 1}^d$,
    \begin{equation}
        f(x) = \vW^{(2)}_D\phi\left(\vW^{(1)}_D\phi\left(\cdots \phi\left(\vW^{(2)}_1\phi\left(\vW^{(1)}_1 x + b^{(1)}_1\right) + b^{(2)}_1\right)\cdots\right) + b^{(1)}_D\right) + b^{(2)}_D \label{eq:f_as_nn}
    \end{equation}
    The entries of the weight matrices and bias vectors are clearly in $\R_{\poly(d)}$, and because each $h_{\xi'}$ is $\poly(d)$-Lipschitz and there are $D = O(1)$ layers in the circuit, the function in \eqref{eq:f_as_nn} is $\poly(d)$-Lipschitz as a function over $\R^d$. The size and depth of the network are within a constant factor of the size $S$ and depth $D$ of the circuit. Lemma~\ref{lem:compose_complex} and Lemma~\ref{lem:hxi} then imply that $f\circ h_{\xi}$ has depth $\Theta(D)$ and size $3d + \Theta(S)$, as well as Lipshitzness and bit complexity polynomial in $m$ because $\epsilon \ge 1/\poly(m)$ so that $\xi \ge 1/\poly(m)$. Therefore, $f\circ h_{\xi}\in\calC_d(D,S)$.

    To show step 2, recall from Lemma~\ref{lem:gauss_to_bit} and Remark~\ref{remark:replace_gauss_with_I} that $W_1(U_m, h_{\xi}(I_m)) \le \epsilon/\poly(m)$. Recalling that $f$ is $\poly(d) = \poly(m)$-Lipschitz, we obtain the desired inequality $\abs*{\E{f(U_d)} - \E{f(h_{\xi}(I_d))}} \le \epsilon/2$. Here the factor of $1/2$ is an arbitrary small constant coming from taking the $\poly(m)$ in the definition of $\xi$ sufficiently large.

    Finally, to show step 3, recall by Fact~\ref{fact:bitstoI} that $W_1(J(U_{mn}), I_m) \le \epsilon^2/\poly(m)$ by our choice of $n = \Theta(\log(m/\epsilon))$ (the $\epsilon^2$ comes from taking the constant factor in the definition of $n$ sufficiently large). By applying Fact~\ref{fact:compose_lipschitz} to $f\circ h_{\xi}$ and $G$, we know that the composition $f\circ h_{\xi}\circ G$ is $\poly(m)/\epsilon$-Lipschitz. It follows that $\abs*{\E{G'(U_{mn})}- \E{f(h_{\xi}(G(I_m)))}} \le \epsilon/2$. The factor of $1/2$ is an arbitrary small constant coming from taking the constant factor in the definition of $n$ sufficiently large.

    Putting everything together, we conclude by triangle inequality that \begin{equation}
        \abs*{\E{f(G(I_{m}))} - \E{f(I_d)}} > \epsilon,
    \end{equation}
    a contradiction.
\end{proof}

The following lemma gives the standard transformation from pseudorandom generators to average-case hardness. We include a proof for completeness.

\begin{lemma}[Prop. 5 of \cite{viola2009sum}] \label{lem:random_to_hard}
    Suppose the sequence of functions $G_m:\brc{\pm 1}^m\to\brc{\pm 1}^{d(m)}$ $\epsilon(m)$-fools a class of Boolean functions $\calF$. Define the function $h_{d(m)}:\brc{\pm 1}^{d(m)}\to\brc{\pm 1}$ by \begin{equation}
        h_{d(m)}(x) = \begin{cases}
            1 & \text{exists} \ y\in\brc{\pm 1}^m \ \text{such that} \ G(y) = x \\
            -1 & \text{otherwise}
        \end{cases}.
    \end{equation}
    Let $\calD_{d(m)}$ be the distribution over $\brc{\pm 1}^{d(m)}$ given by the uniform mixture between $U_{d(m)}$ and $G(U_m)$.

    Then the sequence of functions $\brc{h_{d(m)}}$ is $(1/2 + \epsilon'(m))$-average-case-hard for $\calF$ with respect to $\brc{\calD_{d(m)}}$ for $\epsilon'(m) = \epsilon(m)/4 + 2^{m - d(m) - 1}$.
\end{lemma}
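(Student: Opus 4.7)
The plan is to prove the contrapositive: if some $f \in \calF$ agrees with $h_{d(m)}$ under $\calD_{d(m)}$ with probability strictly greater than $1/2 + \epsilon'(m)$, then $f$ itself distinguishes $G_m(U_m)$ from $U_{d(m)}$ with advantage strictly greater than $\epsilon(m)$, contradicting the $\epsilon(m)$-fooling assumption. Since $\calD_{d(m)}$ is the balanced mixture of the two laws, the agreement assumption splits as $\frac{1}{2} p_1 + \frac{1}{2} p_2 > \frac{1}{2} + \epsilon'(m)$, where $p_1$ and $p_2$ denote the probability that $f = h_{d(m)}$ under $G_m(U_m)$ and $U_{d(m)}$, respectively. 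In particular, $p_1 + p_2 > 1 + 2\epsilon'(m)$.

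The first useful observation is that $h_{d(m)} \equiv 1$ on the image of $G_m$ by definition, so $p_1 = \Pr_{x \sim G_m(U_m)}[f(x) = 1]$ and hence $\E[f(G_m(U_m))] = 2p_1 - 1$. The second is that the image of $G_m$ has size at most $2^m$, so $\Pr_{x \sim U_{d(m)}}[h_{d(m)}(x) = 1] \le 2^{m - d(m)}$. Splitting the event $\{f = 1\}$ under $U_{d(m)}$ according to the value of $h_{d(m)}$ and using the trivial inclusion $\{f = 1,\, h_{d(m)} = -1\} \subseteq \{f \ne h_{d(m)}\}$, I obtain $\Pr_{x \sim U_{d(m)}}[f(x) = 1] \le 2^{m - d(m)} + (1 - p_2)$, and therefore $\E[f(U_{d(m)})] \le 1 - 2p_2 + 2^{m - d(m) + 1}$. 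Subtracting the two expectations yields
\[ \E[f(G_m(U_m))] - \E[f(U_{d(m)})] \;\ge\; 2(p_1 + p_2) - 2 - 2^{m - d(m) + 1} \;>\; 4\epsilon'(m) - 2^{m - d(m) + 1}, \]
and by the choice $\epsilon'(m) = \epsilon(m)/4 + 2^{m - d(m) - 1}$, the right-hand side is exactly $\epsilon(m)$, giving the required contradiction.

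This is a completely routine hybrid-style correlation-to-distinguishing argument, so there is no real obstacle to overcome. The only minor subtlety is the additive $2^{m - d(m) - 1}$ term in $\epsilon'(m)$: it is forced by the fact that $h_{d(m)}$ can take the value $+1$ on up to $2^m$ of the $2^{d(m)}$ possible inputs to $U_{d(m)}$, which introduces an unavoidable $2^{m - d(m) + 1}$ slack when translating the one-sided ``agreement under $U_{d(m)}$'' statement into a two-sided ``expectation'' statement, and this slack must be absorbed into the final hardness parameter.
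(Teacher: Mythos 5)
Your proof is correct and is essentially the paper's argument written contrapositively: the paper directly upper-bounds $\Pr_{\calD_d}[f = h_d]$ using the same three ingredients (the fact that $h_d \equiv 1$ on the image of $G$, the size bound $\Pr_{U_d}[h_d = 1]\le 2^{m-d}$, and the $\epsilon$-fooling hypothesis), whereas you assume high agreement and deduce a distinguishing advantage exceeding $\epsilon$. The algebra lines up identically once you unwind the complements, so there is no meaningful difference between the two presentations.
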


\begin{proof}
    As usual, we will omit most subscripts/parentheses referring to the parameter $m$. Let $f: \brc{\pm 1}^d\to\brc{\pm 1}$ be any function in $\calF$. Then
    \begin{align}
        \Pr{f(\calD) = h_d(\calD)} &= \frac{1}{2}\Pr{f(U_d) = h_d(U_d)} + \frac{1}{2}\Pr{f(G(U_m)) = h_d(G(U_m))} \\
        &\le \frac{1}{2}\left(\Pr{f(U_d) = 0} + \Pr{h_d(U_d) = 1}\right) + \frac{1}{2}\Pr{f(G(U_m))= 1} \\
        &\le \frac{1}{2}\left(\Pr{f(U_d) = 0} + 2^{m - d}\right) + \frac{1}{2}\Pr{f(G(U_m))= 1} \\
        &\le \frac{1}{2}\left(\Pr{f(U_d)  = 0} + 2^{m - d}\right) + \frac{1}{2}\left(\Pr{f(U_d) = 1} + \epsilon/2\right) \\
        &= \frac{1}{2} + \frac{\epsilon}{4} + 2^{m - d - 1},
    \end{align}
    where in the second step we used a union bound and the fact that $h(G(U_m))$ is deterministically 1 by construction, in the third step we used the fact that $\Pr{h_d(U_d)} \le 2^{m-d}$ because there are at most $2^m$ elements in the range of $G$, and in the fourth step we used the fact that $G$ $\epsilon$-fools functions in $\calF$.
\end{proof}

We are now ready to prove Theorem~\ref{thm:randomness_to_hardness}.

\begin{proof}[Proof of Theorem~\ref{thm:randomness_to_hardness}]
    By Lemma~\ref{lem:gan_to_prg}, we can construct out of the generators $G_m$ an explicit sequence of pseudorandom generators that stretch $\Theta(m\log m)$ bits to $d(m) \ge c\cdot m\log m$ bits and $2\epsilon(m)$-fool linear threshold circuits of size $S(m)$ and depth $D(m)$. The theorem follows upon substituting this into Lemma~\ref{lem:random_to_hard}, which implies $(1/2 + \epsilon'(m))$-average-case-hardness for such circuits with respect to the explicit distributions $\calD_{d(m)}$ defined in Lemma~\ref{lem:random_to_hard}, where $\epsilon'(m) = \epsilon(m)/2 + 2^{\Theta(m\log m) - d(m) - 1} = \epsilon(m)/2 + m^{-\Omega(m)}$, provided the absolute constant $c$ is sufficiently large. 
    
    Finally, note that the average-case-hard functions $h_{d(m)}$ we get from Lemma~\ref{lem:random_to_hard} are in $\mathsf{NP}$ because given an input $x$ and a certificate $y$, one can easily verify whether $G(y) = x$.
\end{proof}

\section{Experimental Results}
\label{sec:experiment}

To empirically demonstrate the existence of a constant depth generator that can fool polynomially-bounded discriminators, we evaluated the generator $G$ given by Goldreich's PRG \cite{goldreich2011candidate} (see Definition~\ref{def:goldreich}) with input dimension $m = 50$, output dimension $d= 200$, and predicate $P:\brc{\pm 1}^5\to\brc{\pm 1}$ given by the popular TSA predicate, namely $P(x_1,\ldots,x_5) = x_1\cdot x_2\cdot x_3\cdot (x_4\wedge x_5)$. This is the smallest predicate under which Goldreich's candidate construction is believed to be secure.

The target distribution $\calD^*$ is the uniform distribution $U_{200}$ over $\{\pm 1\}^{200}$. As we prove in Lemma~\ref{lem:uniform_non_atomic}, $U_d$ is sufficiently diverse that $W_1(G(U_m),U_d)\ge \Omega(1)$. We trained four different discriminators given respectively by $1, 2, 3, 4$ hidden-layer ReLU networks, where each hidden layer is fully connected with dimensions $200 \times 200$, to discriminate the output of the generator $G(U_m)$ from the target distribution $U_d$. We used the Adam optimizer with step size $0.001$ over the DCGAN training objective, with batch-size $128$. As we can see in Figure~\ref{fig:exp}, the test loss $\mathbb{E}[-\log(D(X))] + \mathbb{E}[-\log(1 - D(G(z)))] - 2 \log (2)$ stays consistently above zero, indicating that the discriminator can not discriminate the true distribution from the generator output, even though the Wasserstein distance between these two distributions is provably large.

 \begin{figure}
       \centering
         \includegraphics[width=\linewidth]{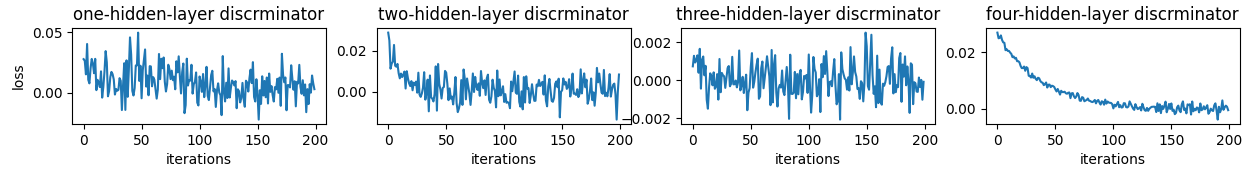}
\caption{\small Test loss ($\mathbb{E}[-\log(D(X))] + \mathbb{E}[-\log(1 - D(G(z)))] - 2 \log (2)$) over course of training. Discriminators cannot distinguish generator output from true distribution, though Wasserstein provably large.} 
\label{fig:exp}
\end{figure}

\section{Conclusions}
\label{sec:conclude}
In light of the obstructions presented in this paper, what are natural next steps for the theory of GANs? Here, we offer a couple of thoughts and possible future directions.

One limitation of our Theorem~\ref{thm:main-informal} is that it holds for a specific generator.
Of course, it is quite unlikely that we will ever encounter such a generator through natural GAN training.
One way to circumvent our lower bound is to argue that the training dynamics of the generator may have some regularization effect which allows us to avoid these troublesome generators, and which allows GANs to learn distributions in polynomial time.

Another orthogonal perspective is that our results suggest that perhaps statistical learning is too strong of a goal.
If our GAN is indeed indistinguishable from the target distribution to all polynomial time algorithms, then not only should the output of the GAN be sufficient for humans, but it should also be sufficient for all downstream applications, which presumably run in polynomial time.
This raises the intriguing possibility that the correct metric for measuring closeness between distances in the context of GANs should inherently involve some computational component (e.g. in the sense of \cite{dwork2021outcome}) as opposed to the purely statistical metrics generally considered in the literature. That said, our Theorem~\ref{thm:randomness_to_hardness} suggests that there are still natural complexity-theoretic barriers to working with such a learning goal.

\paragraph{Acknowledgments} The authors would like to thank Boaz Barak, Adam Klivans, and Alex Lombardi for enlightening discussions about local PRGs.

\bibliographystyle{alpha}
\bibliography{biblio}

\appendix

\section{Deferred Proofs}
\label{app:defer}

\subsection{Compositions of ReLU Networks}
\label{app:compose_complex_defer}

Here we give a proof of Lemma~\ref{lem:compose_complex}.

\begin{proof}
    Suppose that the $i$-th output coordinate of $J$ is computed by a neural network with weight matrices $\vW^{(i)}_1\in \R^{k^{(i)}_1\times {\temp}},\ldots,\vW^{(i)}_{L_1}\in\R^{1\times k^{(i)}_{L_1-1}}$ and biases $b^{(i)}_1\in\R^{k^{(i)}_1},\ldots,b^{(i)}_{L_1}\in\R$.

    Define the $(\sum^r_{i=1} k^{(i)}_1)\times {\temp}$ weight matrix $\vW_1$ by vertically concatenating the weight matrices $\vW^{(1)}_1,\ldots,\vW^{(r)}_1$. For every $1 < j < L_1$ define the $(\sum^r_{i=1} k^{(i)}_j)\times (\sum^r_{i=1} k^{(i)}_{j-1})$ weight matrix $\vW_j$ by diagonally concatenating the weight matrices $\vW^{(1)}_j,\ldots,\vW^{(r)}_j$. Similarly, define the $r\times (\sum^r_{i=1} k^{(i)}_{L_1})$ matrix $\vW_{L_1}$ by diagonally concatening the column vectors $\vW^{(1)}_{L_1},\ldots,\vW^{(r)}_{L_1}$.
    For the bias vectors in these layers, for every $1\le j \le L_1$ define $b_j$ to be the vector given by concatenating $b^{(1)}_j,\ldots, b^{(r)}_j$.

    Now suppose that $f$ is computed by a neural network with weight matrices $\vW_{L_1+1}\in\R^{k_{L_1+1}\times r}$, $\ldots$ , $\vW_{L_1 + L_2}\in\R^{1\times k_{L_1 + L_2 -1}}$ and biases $b_{L_1+1}\in\R^{k_{L_1+1}},\ldots,b_{L_1 +L_2}\in\R$. Then by design, for any $y\in\R^{\temp}$ we have
    \begin{equation}
        f(J(y)) = \vW_{L_1 + L_2}\phi(\vW_{L_1 +L_2 -1}\phi(\cdots\phi(\vW_1y + b_1)\cdots ) +b_{L_1 +L_2 -1})+ b_{L_1 +L_2}.
    \end{equation} This network has depth $L_1 +L_2$ and size \begin{equation}
        \left(\sum^{L_1-1}_{j=1}\sum^r_{i=1} k^{(i)}_j \right) + r + \sum^{L_1 +L_2}_{j=L_1+1} k_j = r\cdot S_1 + r + S_2 = S.
    \end{equation} The bit complexity of the entries of the weight matrices and biases are obviously bounded by $\max(\tau_1,\tau_2)$, and the Lipschitzness of the network is bounded by $\Lambda_1\Lambda_2\sqrt{r}$ by Fact~\ref{fact:compose_lipschitz}.
\end{proof}

\subsection{Implementing Predicates as ReLU Networks}
\label{app:junta}

Here we give a proof of Lemma~\ref{lem:predicate}.

\begin{proof}
    Consider the Fourier expansion $F(x)= \sum_{S\subseteq[k]} \wh{F}[S]\prod_{i\in S}x_i$. We show how to represent each Fourier basis function $\prod_{i\in S}x_i$ as a ReLU network with at most $k$ layers. Observe that for any $x_1,x_2\in\brc{\pm 1}$,
    \begin{equation}
        x_1\cdot x_2 = \phi(x_1 + x_2) +\phi(-x_1 - x_2) - \phi(x_2) - \phi(-x_2), \label{eq:triv}
    \end{equation}
    which is a two-layer neural network of size 4 whose two weight matrices have operator norm at most 3. Suppose inductively that for some $1\le m < n$, there exist weight matrices $\vW'_1,\ldots,\vW'_m$ for which $\prod^m_{i=1} x_i = \vW'_m\phi(\cdots\phi(\vW'_1 x)\cdots)$ for all $x\in\brc{\pm 1}^k$, that this network has size $4m$, and that $\prod^m_{i=1}\norm{\vW'_i} \le 6^m$.
    
    We now show how to compute $\prod^{m+1}_{i=1} x_i$. 
    Define $\vW''_1$ by adding the $m$-th standard basis vector as a new row at the bottom of $\vW'_1$. For every $1 < i \le m$, define $\vW''_i$ to be the matrix given by appending a column of zeros to the right of $\vW''_i$ and then a new row at the bottom consisting of zeros except in the rightmost entry.  Note that $\norm{\vW''}_1 = \max(1,\norm{\vW'_i})$. Define the network $F_m:\R^k\to\R^2$ by $F_m(x) = \vW''_m\phi(\cdots\phi(\vW''_1 x)\cdots)$.

    Letting $v,e\in\R^2$ be the vectors $(1,1)$ and $(0,1)$, we can use \eqref{eq:triv} to conclude that
    \begin{align}
        \prod^{m+1}_{i=1} x_i  &= \phi\left(\prod^m_{i=1} x_i + x_{m+1}\right) + \phi\left(-\prod^m_{i=1} x_i - x_{m+1}\right) - \phi(x_{m+1}) - \phi(-x_{m+1}) \\
        &= \phi(v^{\top} F_m(x)) + \phi(- v^{\top} F_m(x)) - \phi(e^{\top} F_m(x)) - \phi(-e^{\top} F_m(x)).
    \end{align}
    We can thus write $\prod^{m+1}_{i=1} x_i$ as the ReLU network
    \begin{equation}
        \prod^{m+1}_{i=1} x_i = \vW'''_{m+1}\phi(\cdots\phi(\vW'''_1x)\cdots) \label{eq:next_net}
    \end{equation}
    where 
    \begin{equation}
        \vW'''_{m+1} = (1,1,-1,-1), \vW'''_m = \begin{pmatrix}
            v^{\top} \vW''_m  \\
            -v^{\top} \vW''_m  \\
            e^{\top} \vW''_m  \\
            -e^{\top} \vW''_m 
        \end{pmatrix}, \vW'''_i = \vW''_i \ \ \ \forall 1\le i < m.
    \end{equation}
    Note that the entries of any $\vW'''_i$ are in $\brc{0,\pm 1}$ and thus have bit complexity at most 2. Additionally, $\norm{\vW'''_{m+1}} \le 2$, $\vW'''_m\le 3\norm{\vW''_m} = 3\cdot \max(1,\norm{\vW'_m})$, and $\norm{\vW''_i} = \max(1,\norm{\vW'_i})$ for all $1\le i < m$, so $\prod^{m+1}_{i=1}\norm{\vW'''_i} \le 6^{m+1}$. Furthermore, the size of the network in \eqref{eq:next_net} is $4m+4$. This completes the inductive step and we conclude that any Fourier basis function $\prod_{i\in S}x_i$ can be implemented by an $|S|$-layer ReLU network with size $4|S|$ and the product of whose weight matrices' operator norms is at most $6^{|S|}$. 

    In particular, as the biases in the network are zero, we can rescale the weight matrices so they have equal operator norm, in which case they each have operator norm at most $O(1)$ and entries in $\R_{O(k)}$
    % By the second part of Lemma~\ref{lem:sumnet} applied to the networks $\prod^m_{i=1} x_i$ and $x_{m+1}$ together with $\lambda = (1,1)$, we can write $\phi(\prod^m_{i=1} x_i + x_{m+1})$ as an $(m+1)$-layer with size $m^2 + m$ whose weight matrices have operator norm at most 3. An application of the first part of Lemma~\ref{lem:sumnet} to the four networks on the right-hand side of \eqref{eq:indstep} together with $\lambda = (1,1,-1,-1)$ completes the inductive step. We conclude that any Fourier basis function $\prod_{i\in S}x_i$ can be computed by an $|S|$-layer ReLU network whose weight matrices have operator norm at most 3.
    
    Finally note that because the Fourier coefficients are given by $\E{F(x)\prod_{i\in S}x_i}$, they are all multiples of $1/2^k$ and thus have bit complexity $O(k)$. The proof follows from applying Lemma~\ref{lem:sumnet} to these Fourier basis functions and $\lambda$ given by the Fourier coefficients of $P$, as $\norm{\lambda} = \norm{P} = 1$.
\end{proof}

The above proof required the following basic fact:

\begin{lemma}\label{lem:sumnet}
    Let $\tau,\tau' \in\mathbb{N}$, and let $\lambda\in\R^r_{\tau}$. Given neural networks $F_1,\ldots,F_r: \R^d\to\R$ each with $L$ layers and whose weight matrices $\brc{\vW^{(1)}_i},\ldots,\brc{\vW^{(r)}_i}$ have operator norm bounded by some $R > 0$ and entries in $\R_{\tau'}$, their linear combination $\sum_i \lambda_i F_i$ is a neural network with $L$ layers, size given by the sum of the sizes of $F_1,\ldots,F_r$, and weight matrices $\vW_1,\ldots,\vW_L$ with entries in $\R_{O(\tau +\tau')}$ and satisfying $\norm{\vW_1} \le R\sqrt{r}$, $\norm{\vW_L} \le R\norm{\lambda}$, and $\norm{\vW_i} \le R$ for all $1 < i < L$. Here $\lambda\in\R^r$ is the vector with entries $\lambda_i$.
    % Furthermore, if for any $\ell\in[d]$, there is at most one $j\in[r]$ for which there exists a nonzero entry in column $\ell$ of $\vW^{(j)}_1$, then we have the better bound of $\norm{\vW_1} \le R$.
\end{lemma}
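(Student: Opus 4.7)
The plan is to build the linear combination $\sum_i \lambda_i F_i$ as a single ReLU network by concatenating the weights of $F_1,\ldots,F_r$ in a layer-wise manner, and then to verify the three norm bounds separately. Write the layers of $F_i$ as $\vW^{(i)}_1\in\R^{k^{(i)}_1\times d},\ldots,\vW^{(i)}_L\in\R^{1\times k^{(i)}_{L-1}}$ with biases $b^{(i)}_1,\ldots,b^{(i)}_L$. The combined network will have widths $\sum_i k^{(i)}_j$ at layer $j$ for each $1\le j\le L-1$, so the total size is $\sum_{j=1}^{L-1}\sum_i k^{(i)}_j = \sum_i \operatorname{size}(F_i)$ as required.

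The construction is the natural one. Set $\vW_1$ to be the vertical stacking of $\vW^{(1)}_1,\ldots,\vW^{(r)}_1$ and $b_1$ to be the concatenation of $b^{(1)}_1,\ldots,b^{(r)}_1$. For each intermediate layer $1 < j < L$, set $\vW_j$ to be the block-diagonal matrix with diagonal blocks $\vW^{(1)}_j,\ldots,\vW^{(r)}_j$, and $b_j$ the concatenation of $b^{(1)}_j,\ldots,b^{(r)}_j$. Finally, set $\vW_L = (\lambda_1 \vW^{(1)}_L \mid \lambda_2 \vW^{(2)}_L \mid \cdots \mid \lambda_r \vW^{(r)}_L)$ and $b_L = \sum_i \lambda_i b^{(i)}_L$. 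By induction on $j$, the vector at layer $j$ of this combined network is exactly the concatenation of the corresponding hidden vectors of $F_1(x),\ldots,F_r(x)$, so at the output we recover $\sum_i \lambda_i F_i(x)$.

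Now I verify the norms. For $\vW_1$, vertical stacking gives $\|\vW_1 x\|^2 = \sum_i \|\vW^{(i)}_1 x\|^2 \le r R^2 \|x\|^2$, so $\norm{\vW_1}\le R\sqrt{r}$. For $1 < j < L$, block-diagonality gives $\norm{\vW_j} = \max_i \norm{\vW^{(i)}_j} \le R$. For the output layer, write an input $y = (y_1,\ldots,y_r)$ partitioned according to the block structure; then by Cauchy--Schwarz,
\begin{equation}
    |\vW_L y| = \Bigl|\sum_i \lambda_i \vW^{(i)}_L y_i\Bigr| \le \sum_i |\lambda_i|\cdot R \norm{y_i} \le R\norm{\lambda}\cdot\Bigl(\sum_i \norm{y_i}^2\Bigr)^{1/2} = R\norm{\lambda}\norm{y},
\end{equation}
giving $\norm{\vW_L}\le R\norm{\lambda}$.

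Finally, the bit complexity: entries of $\vW_1,\ldots,\vW_{L-1}$ and $b_1,\ldots,b_{L-1}$ are either entries of the $\vW^{(i)}_j$/$b^{(i)}_j$ or zero, so they lie in $\R_{\tau'}$. Entries of $\vW_L$ are products $\lambda_i\cdot (\vW^{(i)}_L)_{\cdot,k}$ of numbers in $\R_\tau$ and $\R_{\tau'}$, hence lie in $\R_{\tau+\tau'}$; similarly $b_L$, which is a sum of $r\le 2^{O(\tau)}$ such products, lies in $\R_{O(\tau+\tau')}$. The construction, verification, and bookkeeping are all routine; the only step requiring a moment's thought is the $R\norm{\lambda}$ bound for the output layer, which is just Cauchy--Schwarz applied to the block decomposition.
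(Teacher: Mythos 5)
Your construction is exactly the paper's: stack the first-layer weight matrices vertically, make the intermediate layers block-diagonal, and take the output layer to be $\vec{\lambda}^{\top}$ times the block-diagonal of the $\vW^{(j)}_L$'s (which is the same horizontal concatenation $(\lambda_1 \vW^{(1)}_L \mid \cdots \mid \lambda_r \vW^{(r)}_L)$ you write down); the norm bounds are also verified the same way. You are somewhat more thorough than the paper's proof in that you explicitly track biases and bit complexity --- note though that the lemma statement only claims bit complexity for the weight matrices, so the bias bookkeeping (and the unsupported inequality $r\le 2^{O(\tau)}$ you invoke there) is not actually needed.
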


\begin{proof}
    Denote the $i$-th weight matrix of $F_j$ by $\vW^{(j)}_i$. Define $\vW_1$ to be the vertical concatenation of $\vW^{(1)}_1,\ldots,\vW^{(r)}_1$, and for every $1 < i < L$, define $\vW_i$ to be the block diagonal concatenation of $\vW^{(1)}_i,\ldots,\vW^{(r)}_i$. Finally, define $\vW_L$ to be the row vector given by the product
    \begin{equation}
        \vec{\lambda}^{\top} \begin{pmatrix}
            \vW^{(1)}_L & 0 & \cdots & 0 \\
            0 & \vW^{(2)}_L & \cdots & 0 \\
            \vdots & 0 & \ddots & 0 \\
            0 & 0 & \cdots & \vW^{(r)}_L
        \end{pmatrix}
    \end{equation}
    For all $1 < i < L$, $\norm{\vW_i} \le \max_{j\in[r]}\norm{\vW^{(i)}}$, and additionally $\norm{\vW_1}^2 \le \sum^r_{j=1}\norm{\vW^{(j)}_1}^2$ and $\norm{\vW_L} \le \norm{\lambda}\max_{j\in [r]}\norm{\vW^{(j)}_L}$.
    % Furthermore, if the columns on which $\vW^{(1)}_1,\ldots,\vW^{(r)}_1$ are nonzero are all disjoint, then we can get a better bound of $\norm{\vW_1} \le \max_{j\in[r]}\norm{\vW^{(j)}_1}$.
\end{proof}

\subsection{Thresholds of Networks as Circuits}
\label{app:thres}

In the proof of Theorem~\ref{thm:bits}, we also need the following basic fact that signs of ReLU networks can be computed in $\Ppoly$.

\begin{lemma}\label{lem:thres_turing}
For any $f\in\calF^*$, there is a Turing machine that, given any input $y$, outputs $\sgn(f(y))$ after $\poly(d)$ steps.
\end{lemma}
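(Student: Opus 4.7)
The plan is to show that $f(y)$ can be evaluated \emph{exactly} as a rational number in polynomial time, after which the sign is read off in constant additional time. Let $f \in \calC^{\tau,\Lambda}_{L,S,d}$ with $\tau,\Lambda,L,S \le \poly(d)$, so every weight and bias entry is an integer multiple of $2^{-\tau}$ bounded in magnitude by $2^{\tau}$, i.e., representable using $O(\tau) = \poly(d)$ bits. In the application in the proof of Theorem~\ref{thm:bits}, the input $y$ lies in $\brc{\pm 1}^d$ and thus has constant bit complexity per coordinate; the same argument goes through verbatim provided $y$ has $\poly(d)$ bit complexity.

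The first step is to track the bit complexity of intermediate activations through the forward pass. Write $v_0 = y$ and $v_{\ell+1} = \phi(\vW_{\ell+1} v_\ell + b_{\ell+1})$ for $\ell = 0, \ldots, L-2$, with final output $f(y) = \vW_L v_{L-1} + b_L$. Inductively, if each coordinate of $v_\ell$ is a rational representable in $b_\ell$ bits, then each coordinate of $\vW_{\ell+1}v_\ell + b_{\ell+1}$ is a sum of at most $S+1$ products of an $O(\tau)$-bit rational with a $b_\ell$-bit rational, giving bit complexity $b_\ell + O(\tau + \log S)$. Applying the entrywise ReLU requires only a sign comparison per coordinate and does not grow the bit complexity. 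Iterating for $L$ layers gives a final bit complexity of $O(L(\tau + \log S)) = \poly(d)$.

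The second step is to bound the running time. Each layer requires $O(S \cdot k_{\ell})$ additions and multiplications on rationals of bit complexity $\poly(d)$, where $k_\ell \le S$ is the width of that layer. Standard schoolbook algorithms perform each such arithmetic operation in $\poly(d)$ Turing machine steps (multiplication of two $b$-bit integers in $O(b^2)$ time, addition in $O(b)$ time, and reduction of rationals via gcd in $\poly(b)$ time if desired, though we do not even need to reduce). Summing over all $L \le \poly(d)$ layers yields a total running time of $\poly(d)$. Given the resulting rational value of $f(y)$, comparing it to $0$ reduces to comparing the sign of its numerator, which takes linear time in the bit complexity.

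The main (and only) thing to be careful about is that the bit complexity grows additively rather than multiplicatively in the depth: each layer adds $O(\tau + \log S)$ bits rather than doubling, because we multiply by bounded-precision weights and the ReLU is bit-complexity-preserving. Beyond that, the argument is a routine bookkeeping exercise on the depth/size/bit-complexity bounds guaranteed by membership in $\calF^*$, and I anticipate no genuine obstacle.
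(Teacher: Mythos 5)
Your proposal is correct and takes essentially the same approach as the paper: track the bit complexity of intermediate activations layer by layer to show it remains $\poly(d)$, then argue the Turing machine can perform the forward pass in $\poly(d)$ time. The only cosmetic difference is that the paper expresses the bound as $O(\ell\tau + S)$ by absorbing $\sum_{i<\ell}\log k_i \le S$, while you keep it as $O(L(\tau + \log S))$; both are $\poly(d)$.
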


\begin{proof}
    Recall that the weight matrices $\vW_1,\ldots,\vW_L$ of $f$ have entries in $\R_{\tau}$ for $\tau = \poly(d)$. So for any $1 \le \ell \le L$, diagonal matrices $\vD_1\in\brc{0,1}^{k_1\times k_1},\ldots,\vD_{\ell-1}\in\brc{0,1}^{k_{\ell-1}\times k_{\ell-1}}$, and vector $y\in\brc{\pm 1}^d$, every entry of the vector \begin{equation}
        \vW_{\ell}\vD_{\ell - 1}(\vW_{\ell - 1}\vD_{\ell - 2}(\cdots (\vW_1y + b_1) \cdots )+ b_{\ell - 1})+ b_{\ell}
    \end{equation} has bit complexity bounded by \begin{equation}
        \log_2\left(\ell \cdot 2^{O(\ell\tau)}\prod^{\ell - 1}_{i = 1}k_i\right) = O(\ell\tau +S) = \poly(d),
    \end{equation} where in the second step we used that $\log(k_i) \le k_i$ for all $i\in[\ell - 1]$. So for any input to $f$, every intermediate activation has $\poly(d)$ bit complexity.

    The Turing machine we exhibit for computing $\sgn(f(y))$ will compute the activations in the network layer by layer. The entries of $\vW_1 y + b_1$ can readily be computed in $\poly(d)$ time. Now given the vector of activations \begin{equation}
        v = \vW_{\ell}\phi(\cdots \phi(\vW_1 y + b_1)\cdots )+ b_{\ell}
    \end{equation} for some $\ell\ge 1$ (where $v$ is represented on a tape of the Turing machine as a bitstring of length $\poly(d)$), we need to compute $\vW_{\ell + 1}\phi(v) + b_{\ell + 1}$. The ReLU activation can be readily computed in $\poly(d)$ time, so in $\poly(d)$ additional steps we can form this new vector of activations at the $(\ell+1)$-layer. So within $S\cdot\poly(d) = \poly(d)$ steps the Turing machine will have written down $f(y)$ (represented as a bitstring of length $\poly(d)$) on one of its tapes, after which it will return the sign of this quantity.
\end{proof}

\end{document}